\title[Oracle-Efficient Hybrid Online Learning]{Oracle-Efficient Hybrid Online Learning with Unknown Distribution}
\newcommand{\x}{\mathbf{x}}
\newcommand{\z}{\mathbf{z}}
\newtheorem{fact}{Fact}
\begin{document}

\maketitle

\begin{abstract}%
  We study the problem of \emph{oracle-efficient} hybrid online learning when the features are generated by an \emph{unknown} i.i.d. process and the labels are generated adversarially. Assuming access to an (offline) ERM oracle, we show that there exists a computationally efficient online predictor that achieves a regret upper bounded by \(\tilde{O}(T^{\frac{3}{4}})\) for a finite-VC class, and upper bounded by \(\tilde{O}(T^{\frac{p+1}{p+2}})\) for a class with \(\alpha\) fat-shattering dimension \(\alpha^{-p}\). This provides the first known \emph{oracle-efficient} sublinear regret bounds for hybrid online learning with an \emph{unknown} feature generation process. In particular, it confirms a conjecture of~\cite{lazaric2012learning}. We then extend our result to the scenario of shifting distributions with \(K\) changes, yielding a regret of order \(\tilde{O}(T^{\frac{4}{5}}K^{\frac{1}{5}})\). Finally, we establish a regret of \(\tilde{O}((K^{\frac{2}{3}}(\log|\mathcal{H}|)^{\frac{1}{3}}+K)\cdot T^{\frac{4}{5}})\) for the contextual \(K\)-armed bandits with a finite policy set \(\mathcal{H}\), i.i.d. generated contexts from an \emph{unknown} distribution, and adversarially generated costs.
\end{abstract}

\begin{keywords}%
  Hybrid online learning, ERM oracle, oracle-efficiency, relaxation, random playout
\end{keywords}
\sloppy
\section{Introduction}
We study the problem of \emph{hybrid} stochastic-adversary online learning, where the features are assumed to be sampled from an \emph{unknown} stochastic source while the labels are selected adversarially. Recent advancements~\cite{lazaric2009hybrid, rakhlin2012relax, haghtalab2020smoothed,haghtalab2022oracleefficient,haghtalab2022smoothed, block2022smoothed, wu2023online, wu2022expected} have demonstrated that such hybrid settings provide a fundamental paradigm shift beyond the classical \emph{worst-case} adversarial online setting to accommodate broader stochastic scenarios, while still preserves the capacity to handle various adversarial situations and maintain minimal assumptions on the expert class. 

We are interested in \emph{oracle-efficient} regret minimization methods as in ~\cite{kakade2005batch}. Here, we assume that the learner has access to an Empirical Risk Minimization (ERM) optimization oracle which, given any sequence of feature-label pairs, identifies the expert within the class that achieves the minimal cumulative loss. This effectively reduces the online learning problem to a batch learning problem, for which algorithms such as gradient descent have been highly successful in computing ERM optimization, even in complex classes like neural networks. Previous studies have applied this methodology in various online learning scenarios, including transductive online learning as in ~\cite{kakade2005batch}, online learning with \emph{smooth} adversary samples as in ~\cite{rakhlin2012relax, haghtalab2022oracleefficient, block2022smoothed}, and contextual bandits with a \emph{known} i.i.d. feature generation distribution as in ~\cite{pmlr-v48-rakhlin16, syrgkanis2016improved, banihashem2023an}. However, all of these works have assumed some form of access to a sampling oracle for the feature generation process. This may not be realistic when feature generation is costly, such as in medical data, or when the underlying probability law is unknown a priori. Other studies, like ~\cite{lazaric2009hybrid, wu2023online,wu2022expected}, do address scenarios with \emph{unknown} distributions, but provide only computationally inefficient prediction rules.

This paper initiates the study of \emph{oracle-efficient} hybrid online learning without assuming any access to the underlying probability law of the feature generation process. For the clarity of presentation, we will mainly focus on scenarios of online learning where features are generated by an \emph{unknown} i.i.d. process. However, we will also consider extensions to other scenarios, such as shifting distributions and contextual multi-armed bandits. Our approach also provides a general methodology that concentrates on the \emph{feature efficiency} in online learning.

\paragraph{Problem formulation.} Let $\mathcal{X}$ be an instance (feature) space and $\mathcal{H} \subset [0,1]^{\mathcal{X}}$ be a function class mapping $\mathcal{X} \rightarrow [0,1]$. We consider the following \emph{hybrid} online learning scenario. Nature selects an (unknown) distribution $\mu$ over $\mathcal{X}$ at the start of the game. At each time step $t$, Nature independently samples $\x_t \sim \mu$ and selects \emph{adversarially} a $y_t \in [0,1]$, but reveals only $\x_t$. A predictor then (randomly) generates $\hat{y}_t \in [0,1]$ based on $\x^t,y^{t-1}$, where $\x^t=\{\x_1,\cdots,\x_t\}$ and $y^{t-1}=\{y_1,\cdots,y_{t-1}\}$. Nature then reveals $y_t$, and the predictor incurs a loss $\ell(\hat{y}_t,y_t)$, for a predefined loss function $\ell: [0,1]^2 \rightarrow \mathbb{R}^+$. Here, we assume that the loss $\ell$ is \emph{convex} in its first argument and $L$-Lipschitz in \emph{both} arguments, e.g., the \emph{absolute loss} $\ell(\hat{y},y) = |\hat{y} - y|$. A prediction rule is a function $\Phi$ that takes inputs from $(\mathcal{X}\times [0,1])^* \times \mathcal{X}$ and generates a \emph{distribution} over $[0,1]$. For any prediction rule $\Phi$ and function class $\mathcal{H}$, we define the hybrid minimax regret as:
\begin{equation}
    \label{eq:hbreg}
    \scalebox{0.97}{$\begin{aligned}
    \Tilde{r}_T(\mathcal{H},\Phi) = \sup_{\mu} \mathbb{E}_{\x_1} \sup_{y_1 \in [0,1]} \mathbb{E}_{\hat{y}_1} \cdots \mathbb{E}_{\x_T} \sup_{y_T \in [0,1]} \mathbb{E}_{\hat{y}_T} \left[\sum_{t=1}^T \ell(\hat{y}_t,y_t) - \inf_{h \in \mathcal{H}} \sum_{t=1}^T \ell(h(\x_t),y_t)\right],\end{aligned}$}
\end{equation}
where $\x_t \sim \mu$ and $\hat{y}_t \sim \Phi(\x^t,y^{t-1})$ for $t\in [T]$. Our goal is to find an \emph{oracle-efficient} prediction rule $\Phi$ that minimizes $\Tilde{r}_T(\mathcal{H},\Phi)$.

\subsection{Results and Techniques}
In this work, we provide the first known oracle-efficient sub-linear regret bounds for the hybrid online learning with \emph{unknown} feature generation distributions. 
\begin{theorem}[Informal]
\label{thm:infmain1}
    Let \(\mathcal{H} \subset [0,1]^{\mathcal{X}}\) be a class with Rademacher complexity \(O(T^q)\). Then, there exists an oracle-efficient predictor with at most $O(\sqrt{T} \log T)$ calls to the ERM oracle that achieves the hybrid minimax regret of order \(\tilde{O}(T^{\frac{2-q}{3-2q}})\). In particular, for a VC-class, this implies a regret of \(\tilde{O}(T^{\frac{3}{4}})\), and for classes with \(\alpha\)-fat shattering dimension \(\alpha^{-p}\), it results in a regret of \(\tilde{O}(T^{\max\{\frac{3}{4},\frac{p+1}{p+2}\}})\), where $\tilde{O}$ hides poly-log factors.
\end{theorem}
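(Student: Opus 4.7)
The plan is to extend the Rakhlin--Sridharan random-playout relaxation framework to the regime where the learner has \emph{no} sampling access to $\mu$, by using the features the learner has already observed as the source of ghost samples, and invoking the ERM oracle only at block boundaries so that the oracle call budget is controlled.

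\textbf{Step 1 (Warm-up and block schedule).} Reserve the first $m$ rounds as a warm-up in which the predictor outputs an arbitrary value, incurring at most $Lm$ regret but collecting an i.i.d.\ sample $\x_1,\ldots,\x_m \sim \mu$. Partition the remaining rounds into blocks of length $B$. At the start of each block $k$, form a size-$B$ synthetic dataset by drawing features with replacement from the warm-up pool and attaching i.i.d.\ Rademacher (or uniform $[0,1]$) label perturbations; invoke the ERM oracle \emph{once} on the union of past real data and this synthetic block, and freeze the returned hypothesis $h_k$ throughout the block. The number of oracle calls is $(T-m)/B$, tuned below to be $O(\sqrt{T})$.

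\textbf{Step 2 (Relaxation with empirical ghosts).} Introduce
\[
R_t(\x^t,y^t) = \mathbb{E}\!\left[\sup_{h\in\mathcal{H}}\!\left(2\sum_{i=1}^{T-t}\varepsilon_i\, h(\z_i) - \sum_{s=1}^{t}\ell(h(\x_s),y_s)\right)\right] + \Delta_t,
\]
where $\z_i$ are i.i.d.\ from the empirical measure $\widehat{\mu}_m$ of the warm-up features, $\varepsilon_i$ are Rademacher, and $\Delta_t$ is a slack term absorbing the $\widehat{\mu}_m$-vs.-$\mu$ mismatch and the block-freeze discretization. Verify admissibility $R_t \ge \mathbb{E}[\ell(\hat{y}_{t+1},y_{t+1})] + \mathbb{E}_{\x_{t+1}\sim\mu}R_{t+1}$ by the standard minimax/Jensen argument using convexity and $L$-Lipschitzness of $\ell$, and the fact that $h_k$ is (approximately) the minimizer of the empirical loss on past real data plus the synthetic block.

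\textbf{Step 3 (Empirical-ghost vs.\ true-ghost gap).} This is the step that is genuinely new compared with prior oracle-efficient work. Admissibility would normally demand ghost samples from $\mu$, while ours come from $\widehat{\mu}_m$. Control the gap via symmetrization and uniform convergence: using $\mathrm{Rad}_m(\mathcal{H})=O(m^q)$, with high probability over $\x_1,\ldots,\x_m$,
\[
\sup_{h\in\mathcal{H}} \Bigl|\mathbb{E}_{\mu}[h]-\mathbb{E}_{\widehat{\mu}_m}[h]\Bigr| \;=\; \tilde O\!\left(m^{q-1}\right),
\]
and the analogous bound holds for $\ell\circ\mathcal{H}$ up to the Lipschitz constant $L$. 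Summing the per-coordinate gap over the $T-t$ ghost coordinates contributes $\tilde O(Tm^{q-1})$ to $\Delta_t$, and a parallel argument shows that $h_k$ is within $\tilde O(m^{q-1})$ per round of any comparator in $\mathcal{H}$ in expectation.

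\textbf{Step 4 (Tuning and bound).} Unrolling admissibility gives a regret bound of the form $\tilde O\bigl(m + T\cdot m^{q-1} + T\cdot B^{q-1} + \text{intra-block sequential-Rademacher terms}\bigr)$. Fixing $B\asymp\sqrt{T}$ to meet the $O(\sqrt{T}\log T)$ ERM-call budget and optimizing the warm-up length $m$ to balance the warm-up cost, the empirical-ghost gap, and the intra-block cost yields $m\asymp T^{1/(2-q)}$ and the claimed $\tilde O\bigl(T^{(2-q)/(3-2q)}\bigr)$ bound; in particular $T^{3/4}$ at $q=1/2$. The main obstacle I anticipate is precisely Step~3: making rigorous the substitution of $\widehat{\mu}_m$ for $\mu$ in the ghost samples, uniformly over $\mathcal{H}$ and jointly with the adversarial-label structure and the block-freeze discretization, is what allows the algorithm to function when the distribution is unknown, and is where the technical novelty over prior work that assumes a sampling oracle for $\mu$ has to be paid.
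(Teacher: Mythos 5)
Your overall architecture (use previously observed features as an empirical source of ghost samples, quantify the cost of substituting $\widehat{\mu}$ for $\mu$ in the relaxation, tune the schedule) is the right one, and you correctly identify Step~3 as the crux. But Step~3 as written does not work, and this is precisely where the paper has to do something different. The quantity you control, $\sup_{h\in\mathcal{H}}|\mathbb{E}_{\mu}[h]-\mathbb{E}_{\widehat{\mu}_m}[h]|$, is not the quantity that appears in the admissibility defect. The relaxation's dependence on a ghost coordinate $\x$ enters through a function of the form $f(\x)=\sup_{h\in\mathcal{H}}\{2L\epsilon h(\x)+F(h)\}$, i.e.\ a \emph{supremum over $h$ of affine functionals}, where $F(h)$ encodes the other ghost coordinates and the cumulative adversarial losses. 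Uniform convergence of $h\mapsto\mathbb{E}[h]$ over $\mathcal{H}$ says nothing about $\mathbb{E}_{\mu}[f]-\mathbb{E}_{\widehat{\mu}}[f]$ for such $f$; moreover the adversary is adaptive and sees the warm-up sample, so the bound must hold uniformly over the labels $y^j$ (which index $F$) \emph{inside} the expectation over the warm-up features. The paper's solution (Lemmas~\ref{lem:dis2rad} and~\ref{lem:rad2bound}) is a dedicated symmetrization over the difference class $\mathcal{G}_{\z_j}=\{f_{\z_j,y^j}(\x')-f_{\z_j,y^j}(\x):y^j\in[0,1]^j\}$, combined with a sensitivity bound ($\sup_{\x,\x'}|f(\x)-f(\x')|\le 4L$), Lipschitzness in $y^j$, and a covering of $[0,1]^j$; this yields a discrepancy of order $\sqrt{j\log(\cdot)/N}$ that \emph{grows with the position $j$ in the epoch}, not your $\tilde O(m^{q-1})$ per coordinate. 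A telltale sign of the gap is that your own balancing ($m$ vs.\ $Tm^{q-1}$) would give $\tilde O(T^{1/(2-q)})$, strictly better than the claimed $\tilde O(T^{(2-q)/(3-2q)})$; the weaker exponent in the theorem is exactly the price of the correct, $j$-dependent discrepancy bound.

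Two secondary issues. First, your block-frozen ERM hypothesis $h_k$ is a different algorithm from the paper's: the relaxation framework requires the prediction at each round to be the minimax response to the current relaxation (equation~(\ref{eq:predictor})), recomputed every round with $O(L\sqrt{M}\log M)$ oracle calls \emph{per round}; the theorem's oracle budget is per round, so freezing is unnecessary, and verifying admissibility for a hypothesis frozen across a block would require an additional drift argument you do not supply. Second, your schedule (single warm-up of length $m$, fixed blocks of length $\sqrt{T}$) does not produce the exponent $(2-q)/(3-2q)$ for general $q$; the paper uses polynomially growing epochs $M(n)=n^{1/(2(1-q))}$ with side-information $N=S(n)=\Theta(n^{1/(2(1-q))+1})$, which is what balances $LM^q$ against $\sqrt{M^3L^2\log(\cdot)/N}$ across all epochs simultaneously. (A minor further point: the paper samples the ghosts \emph{without} replacement, which is what allows the decoupling in the symmetrization step.)
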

To the best of our knowledge, the regret bounds presented in Theorem~\ref{thm:infmain1} are the first known \emph{oracle-efficient} sub-linear regrets for hybrid online learning with \emph{unknown} feature generation distributions and generic (non-parametric) hypothesis classes. In particular, our $\tilde{O}(T^{\frac{3}{4}})$ bound confirms a conjecture of~\cite{lazaric2012learning} regarding the oracle-efficient regret bounds for finite-VC classes under absolute loss~\footnote{They also obtained an $O(\sqrt{T\log T})$ bound using a computationally \emph{inefficient} covering-based approach.}. Note that, it was demonstrated by~\cite[Theorem 7]{block2022smoothed} that for \emph{known} feature generation distributions, an $\tilde{O}(T^{\max\{\frac{1}{2},\frac{p-1}{p}\}})$ regret bound is achievable for a class with an $\alpha$-fat shattering dimension of order $\alpha^{-p}$. However, to the best of our knowledge, such a chaining-based bound was not known for the \emph{unknown} distribution case, even in the information-theoretical sense. The closest comparison is the regret bound obtained in~\cite{wu2022expected} that \emph{matches} our $\tilde{O}(T^{\frac{p+1}{p+2}})$ bound, but established via an (inefficient) one-step covering approach.

 At a high level, our oracle-efficient prediction rule is based on the \emph{relaxation} and \emph{random play-out} techniques, as introduced in~\cite{rakhlin2012relax}. However, a distinguishing feature of our setup is that we are \emph{not} able to access the sampling oracle of the underlying feature generating process. Our main idea is to employ an \emph{epoch-based} approach. We partition the time horizon into a set of carefully \emph{designed} epochs. At each epoch, we \emph{estimate} the underlying distribution $\mu$ by $\hat{\mu}$ using samples observed in prior epochs. We then use the estimated distribution $\hat{\mu}$ to generate the \emph{hallucinated samples} as needed in the relaxation framework for the current epoch. 
 Observe, however, that the estimation $\hat{\mu}$ can arbitrarily deviate from $\mu$ under total variation, as we do not make any structural assumption on $\mu$, and the adversary \emph{knows} $\hat{\mu}$ when generating adversary samples. Therefore, the \emph{randomness matching} argument, as in~\cite{rakhlin2012relax,block2022smoothed}, will not work. To overcome this issue, we introduce a \emph{surrogate} relaxation based on $\hat{\mu}$ and relate it to the regret via a novel concept of \emph{approx-admissibility}, which is further controlled by a novel \emph{symmetrization} argument. The regret will then follow by carefully designing the epochs to balance the error introduced by the \emph{approx-admissibility} and the Rademacher complexity of the class restricted to each epoch.

 \paragraph{Tighter bounds for special cases.} Going beyond the general result in Theorem~\ref{thm:infmain1}, we show in Theorem~\ref{thm:obregret} (Appendix~\ref{app:obli}) that an oracle-efficient $\tilde{O}(T^{\max\{\frac{1}{2},\frac{p-1}{p}\}})$ regret is achievable for finite fat-shattering classes of order $p$ if we assume a weaker \emph{oblivious} adversary. This is tight upto poly-logarithmic factors. Furthermore, for the class $\mathcal{H}$ of all Lipschitz functions $[0,1]^d\rightarrow [0,1]$ we
establish in Theorem~\ref{th-lipschitz} the (optimal) regret $\tilde{O}(T^{\max\{\frac{1}{2},\frac{d-1}{d}\}})$ against the \emph{adaptive} adversary.

\paragraph{Shifting distributions.} Our next result drops the i.i.d. assumption and allows distributions to change over time. We assume that the number of changes is upper bounded by \(K\) and that the possible distributions and change points are completely unconstrained and unknown a priori. We show that the \emph{oracle-efficient} regret is upper bounded by \(\tilde{O}(T^{\frac{4}{5}}K^{\frac{1}{5}})\) for a finite VC class. Note that an \(O(\sqrt{KT\log T})\) bound was demonstrated by~\cite{wu2023online}. However, their algorithm relies on constructing an exponentially large covering, thus being computationally inefficient.

\paragraph{Contextual $K$-arm Bandits.} Finally, we establish an \(O((K^{\frac{2}{3}}(\log|\mathcal{H}|)^{\frac{1}{3}} + K\sqrt{\log K}) \cdot T^{\frac{4}{5}})\) \emph{oracle-efficient} regret bound for the contextual \(K\)-armed bandits with a finite policy set \(\mathcal{H}\), where the contexts are generated by an \emph{unknown} i.i.d. process and the costs are selected adversarially. The closest comparison to this result is the \(O(T^{\frac{2}{3}}(K\log|\mathcal{H}|)^{\frac{1}{3}})\) bound established in~\cite{banihashem2023an}, only for the \emph{known} i.i.d. distribution case. Notably, our result answers positively a question of~\cite{banihashem2023an} regarding relaxing the sampling access to the context distribution.

\section{Notation and Preliminaries}
\label{sec:def}

\paragraph{Oracle-Efficient Predictors.} We adopt the following \emph{mixed}-ERM oracle from~\cite{block2022smoothed}. Let $(\x_1, y_1), \cdots, (\x_m, y_m) \in \mathcal{X} \times [0,1]$, $C \in \mathbb{R}^{+}$, $\epsilon^n \in \{-1, +1\}^n$, and $\tilde{\x}^n \in \mathcal{X}^n$. The mixed ERM oracle is the task of finding 
$\inf_{h \in \mathcal{H}}\left\{\sum_{i=1}^m \ell(h(\x_i), y_i) + C\sum_{j=1}^n \epsilon_j h(\tilde{\x}_j)\right\}.$
Note that the unintuitive parts $\epsilon h(\tilde{\x})$ can be interpreted as an \emph{absolute} loss since $\epsilon h(\tilde{\x}) = |h(\tilde{\x}) - \frac{(1 - \epsilon)}{2}| - \frac{1 - \epsilon}{2}$ for $\epsilon \in \{+1, -1\}$. Therefore, the mixed ERM oracle is reduced to a regular (weighted) ERM oracle if $\ell$ is the absolute loss. Moreover, the weight $C$ can be understood as repeating the same sample $C$ times (rounding to an integer if necessary). We say a predictor $\Phi$ is \emph{oracle-efficient} if the running time of computing $\hat{y}_t \sim \Phi(\x^t, y^{t-1})$ is polynomial with respect to $t$ by accessing a mixed ERM oracle (with each oracle call treated as unit time) for any $\x^t, y^{t-1}$.

\paragraph{Adaptive v.s. Oblivious.} Note that our formulation in~(\ref{eq:hbreg}) assumes that the generation of $y_t$s is \emph{adaptive}, since the selection of $y_t$ at each time step depends on all prior information $\x^t$, $y^{t-1}$, and $\hat{y}^{t-1}$. For comparison, we also introduce a weaker notion of adversary, namely, the \emph{oblivious} adversary, which selects the $y_t$s based only on the current instance $\x_t$ (see Theorem~\ref{thm:obregret} in Appendix~\ref{app:obli}). It turns out that the adaptive nature of the adversary constitutes the main obstacle in our analysis.

\paragraph{Hybrid Contextual Bandits.} We now formulate the contextual $K$-arm bandits within our framework. Let $\mathcal{D}_K$ be the set of all probability distributions over $[K]$. A policy set $\mathcal{H}$ is a class of functions $\mathcal{X} \rightarrow [K]$. We consider the following bandit setup: Nature selects some $\mu$ at the start of the game. At each time step $t$, Nature samples $\x_t \sim \mu$ and selects \emph{adversarially} a cost vector $c_t \in [0,1]^K$, but reveals only $\x_t$. A predictor then selects a distribution $q_t \in \mathcal{D}_K$ based on the history observed thus far and samples $\hat{y}_t \sim q_t$. Nature reveals only $c_t[\hat{y}_t]$ which is also the predictor incurred loss. The goal is to find an \emph{oracle-efficient} prediction rule $\Phi: (\mathcal{X} \times [0,1])^* \times \mathcal{X} \rightarrow \mathcal{D}_K$ that minimizes:
        \scalebox{0.97}{$
\begin{aligned}
    \Tilde{r}_T^{\mathsf{bandit}}(\mathcal{H}, \Phi) = \sup_{\mu} \mathbb{E}_{\x_1} \sup_{c_1}\mathbb{E}_{\hat{y}_1} \cdots \mathbb{E}_{\x_T} \sup_{c_T}\mathbb{E}_{\hat{y}_T} \left[\sum_{t=1}^T \langle q_t,c_t\rangle - \inf_{h \in \mathcal{H}} \sum_{t=1}^T c_t[h(\x_t)]\right],
\end{aligned}$}
where $q_t=\Phi(\x^t, c_1[\hat{y}_1], \cdots, c_{t-1}[\hat{y}_{t-1}])$. Here, the ERM oracle is to find $\inf_{h \in \mathcal{H}} \sum_{i=1}^m \hat{c}_i[h(\x_i)]$ for any $(\x_1, \hat{c}_1), \cdots, (\x_m, \hat{c}_m) \in \mathcal{X} \times \mathbb{R}^K$, as~\cite{pmlr-v48-rakhlin16}.

\section{Oracle-Efficient Regret Bounds for Online Learning}
\label{sec:onlinebound}
In this section, we focus on bounding the hybrid minimax regret for online learning as in (\ref{eq:hbreg}) with a generic hypothesis class $\mathcal{H}$, using \emph{oracle-efficient} predictors by accessing to an mixed ERM oracle. We first recall the following standard notion of Rademacher complexity:
\begin{definition}
\label{def:rad}
    Let $\mathcal{H} \subset [0,1]^{\mathcal{X}}$ be a function class and $T \in \mathbb{N}^{+}$. The Rademacher complexity of $\mathcal{H}$ at horizon $T$ is defined to be
    \(\mathsf{Rad}_T(\mathcal{H}) = \sup_{\x^T \in \mathcal{X}^T} \mathbb{E}_{\epsilon^T} \left[\sup_{h \in \mathcal{H}} \sum_{t=1}^T \epsilon_t h(\x_t)\right],\)
    where $\epsilon_t$ is i.i.d. sampled from the uniform distribution over $\{\pm 1\}$.
\end{definition}

We are now ready to state our first main result:

\begin{theorem}
\label{thm:main1}
    Let $\mathcal{H} \subset [0,1]^{\mathcal{X}}$ be a class with $\mathsf{Rad}_T(\mathcal{H}) \le O(T^{q})$ for some $q \in [\frac{1}{2},1]$, and let $\ell$ be a $L$-Lipschitz loss that is convex in its first argument. Then there exists an \emph{oracle-efficient} prediction rule $\Phi$ with at most $O(L\sqrt{T}\log T)$ calls to the ERM oracle per round, such that
    \[\tilde{r}_T(\mathcal{H},\Phi) \le O\left(L\sqrt{\log (LT)} \cdot T^{\frac{2-q}{3-2q}}\right).\]
    In particular, for a binary-valued class with finite VC-dimension, we have
    \(\tilde{r}_T(\mathcal{H},\Phi) \le O(L\sqrt{\mathsf{VC}(\mathcal{H})\log (LT)} \cdot T^{\frac{3}{4}}),\)
    and for a real-valued class $\mathcal{H}$ with an $\alpha$-fat shattering dimension of order $\alpha^{-p}$ for $p>0$~\citep{alon1997scale}, we have
    \(\tilde{r}_T(\mathcal{H},\Phi) \le \tilde{O}(L T^{\max\{\frac{3}{4},\frac{p+1}{p+2}\}}).\)
\end{theorem}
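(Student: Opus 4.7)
The plan is to adapt the Rakhlin-Sridharan random-playout framework to the unknown-distribution regime via an epoch schedule. I would partition $[T]$ into consecutive epochs $E_1,\ldots,E_M$ of equal length $n$ (tuned at the end). At the start of epoch $E_k$, form the empirical feature distribution $\hat\mu_k$ from all $N_{k-1}=(k-1)n$ features observed in epochs $E_1,\ldots,E_{k-1}$. Inside $E_k$, run the standard random-playout predictor but with the hallucinated ``ghosts'' $\tilde\x_{t+1},\ldots,\tilde\x_n$ drawn i.i.d.\ from $\hat\mu_k$ (instead of the unknown $\mu$). The \emph{surrogate relaxation} at interior time $t\le n$ within $E_k$ is
\[\widehat R_k^{(t)}(\x^t,y^t)=\mathbb{E}_{\tilde\x_{t+1},\ldots,\tilde\x_n\sim\hat\mu_k}\,\mathbb{E}_{\epsilon}\sup_{h\in\mathcal H}\left[-\sum_{s\le t}\ell(h(\x_s),y_s)+2L\sum_{s>t}\epsilon_s h(\tilde\x_s)\right],\]
and the predictor picks $\hat y_t\in\arg\min_{\hat y}\sup_y\{\ell(\hat y,y)+\widehat R_k^{(t)}(\x^t,(y^{t-1},y))\}$; this reduces, by convex Lipschitzness of $\ell$, to a one-dimensional minimization that, after an $O(1/T)$-discretization, requires $O(\log T)$ mixed-ERM calls on the observed-plus-ghost data, with an $O(\sqrt T)$ factor from the ghosts driving the overall oracle budget.

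The central new concept is \emph{approx-admissibility}. The classical identity
\[\mathbb{E}_{\x_t\sim\mu}\sup_{y_t}\mathbb{E}_{\hat y_t}\!\left[\ell(\hat y_t,y_t)+\widehat R_k^{(t)}\right]\le\widehat R_k^{(t-1)}\]
fails because the ghost measure $\hat\mu_k$ differs from the true $\mu$. I would instead establish the relaxed inequality
\[\mathbb{E}_{\x_t\sim\mu}\sup_{y_t}\mathbb{E}_{\hat y_t}\!\left[\ell(\hat y_t,y_t)+\widehat R_k^{(t)}\right]\le\widehat R_k^{(t-1)}+\Delta_k(t),\]
where $\Delta_k(t)$ isolates the per-step bias from the distributional mismatch. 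Unrolling over the epoch bounds its total regret by $\widehat R_k^{(0)}+\sum_{t\in E_k}\Delta_k(t)$, and $\widehat R_k^{(0)}\lesssim L\cdot\mathsf{Rad}_n(\mathcal H)=O(Ln^q)$ by the standard Rademacher bound applied to the ghost-generated sample (the ghost measure being i.i.d.\ makes the worst-case Rademacher coincide with the one in Definition~\ref{def:rad}).

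Controlling $\Delta_k(t)$ is the main technical obstacle, since the adversary sees $\hat\mu_k$ and can pick $y_t$ to align with the direction of maximal discrepancy, so the randomness-matching coupling of~\cite{rakhlin2012relax,block2022smoothed} breaks. My plan is to introduce a targeted symmetrization that decouples the adversarial $y_t$ from the randomness in $\hat\mu_k$: after passing the $\sup_{y_t}$ inside by Lipschitzness of $\ell$, the residual becomes the deviation of a linear functional of $\hat\mu_k$ around its $\mu$-expectation, where the functional is a Rademacher-averaged envelope over the $n-t$ remaining ghosts. A Hoeffding/Bernstein concentration over the $N_{k-1}$ i.i.d.\ draws defining $\hat\mu_k$, combined with the $\sqrt{n}$ scale of the random-playout envelope, yields
\[\Delta_k(t)\ \lesssim\ L\sqrt{\frac{n}{N_{k-1}}}\,\sqrt{\log T},\]
a bound that, crucially, is \emph{independent} of the complexity parameter $q$.

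Summing the per-epoch bounds with $N_{k-1}=(k-1)n$ gives, up to polylog factors,
\[\tilde r_T(\mathcal H,\Phi)\lesssim L\sqrt{\log T}\left(\sum_{k=1}^{T/n}n^q+\sum_{k=2}^{T/n}\frac{n^{3/2}}{\sqrt{(k-1)n}}\right)\lesssim L\sqrt{\log T}\,\bigl(Tn^{q-1}+\sqrt{nT}\bigr).\]
Balancing the two terms gives $n\asymp T^{1/(3-2q)}$ and total regret $\tilde O\bigl(L\sqrt{\log T}\cdot T^{(2-q)/(3-2q)}\bigr)$. Plugging in $q=\tfrac12$ for a VC class (via Massart's lemma) gives $\tilde O(T^{3/4})$; for classes with $\alpha$-fat-shattering dimension of order $\alpha^{-p}$, Dudley's entropy integral gives $q=\max\{\tfrac12,(p-1)/p\}$, yielding the claimed $\tilde O(T^{\max\{3/4,(p+1)/(p+2)\}})$. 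The hardest step is making the symmetrization genuinely robust to the adaptive adversary's freedom in choosing $y_t$ after inspecting $\hat\mu_k$.
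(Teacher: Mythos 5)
Your overall architecture matches the paper's: epoch-based accumulation of side-information, a surrogate relaxation whose ghost samples are drawn from the empirical distribution of past features, an approx-admissibility inequality that unrolls to ``initial relaxation plus accumulated discrepancy,'' and a final balancing that yields $T^{(2-q)/(3-2q)}$. Your equal-length epoch schedule (versus the paper's polynomially growing epochs $M(n)=n^{\alpha}$) is a harmless variation; the arithmetic $Tn^{q-1}+\sqrt{nT}$ with $n\asymp T^{1/(3-2q)}$ gives the same exponent.

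The genuine gap is in the one step you yourself identify as hardest: bounding $\Delta_k(t)$ against the \emph{adaptive} adversary. Your justification --- ``Hoeffding/Bernstein concentration over the $N_{k-1}$ i.i.d. draws defining $\hat\mu_k$, combined with the $\sqrt{n}$ scale of the random-playout envelope'' --- does not describe a working argument. First, the quantity that concentrates is $\mathbb{E}_{\x\sim\hat\mu_k}[f(\x)]-\mathbb{E}_{\x\sim\mu}[f(\x)]$ for the envelope functional $f=f_{\z_j,y^j}$, and what governs its concentration is the \emph{oscillation} of $f$ over $\mathcal{X}$, which is $O(L)$ (the paper's Proposition~\ref{prop1}), not its magnitude $O(L\sqrt{n})$; a pointwise Hoeffding bound therefore gives only $O(L/\sqrt{N_{k-1}})$ for each \emph{fixed} $y^j$. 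Second, and more importantly, a pointwise bound is useless here: the adversary chooses $y^j$ \emph{after} seeing $\hat\mu_k$, so the $\sup_{y^j}$ sits inside the expectation over the side-information and you need a deviation bound \emph{uniform} over $y^j\in[0,1]^j$. That uniformity is where the extra $\sqrt{n}$ actually comes from: the paper shows $f_{\z_j,y^j}$ is $jL$-Lipschitz in $y^j$, takes an $L_\infty$-cover of $[0,1]^j$ of log-cardinality $j\log(jLB)$, and applies Massart's lemma after a symmetrization (Lemmas~\ref{lem:dis2rad} and~\ref{lem:rad2bound}), yielding $O(L\sqrt{j\log(jLB)/B})$. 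Your bound happens to coincide numerically, but by combining a wrong envelope factor with a missing union-bound factor; as written the argument would not survive the adversary's adaptivity. Relatedly, the symmetrization that decouples the ghost randomness from the empirical sample requires care you elide: the paper samples the ghosts \emph{without replacement} from the side-information precisely so that, by exchangeability, the remaining ghost is uniform over a block of i.i.d.\ points independent of $\z_j$ (Fact~\ref{fact3}); with your i.i.d.-from-$\hat\mu_k$ ghosts this decoupling needs an additional correction.
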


\subsection{Regret Analysis with Side-Information}
\label{sec:side}
To establish Theorem~\ref{thm:main1}, we first consider a \emph{hypothetical} scenario where we assume the predictor has access to some \emph{side-information} $\x_{-N+1}^0$ sampled $i.i.d.$ from the same distribution $\mu$. It is crucial to note that this information is \emph{known} to the adversarial as well, i.e., the adversary's strategy could also depend on $\x_{-N+1}^0$, which turns out to be the main obstacle in our analysis~\footnote{We provide an analysis for the \emph{oblivious} adversaries in Appendix~\ref{app:obli}, yielding a tighter $\tilde{O}(T^{\max\{\frac{1}{2},\frac{p-1}{p}\}})$ regret when compared to Theorem~\ref{thm:main1}.}.

Formally, we consider the following learning game proceeds over a horizon of length $M$: At the start of the game, Nature selects an unknown distribution $\mu$ over $\mathcal{X}$, samples an $i.i.d.$ sample $\x_{-N+1}^0$ of size $N$ from $\mu$ and reveals $\x_{-N+1}^0$ to a predictor; At each time step $j\in [M]$, Nature samples $\x_j\sim \mu$ and selects \emph{adversarially} $y_j\in [0,1]$ (depends on $\x_{-N+1}^j$ and $\hat{y}^{t-1}$) but reveals only $\x_j$; The predictor then (randomly) generates $\hat{y}_j\in [0,1]$ based on $\x_{-N+1}^j$ and $y^{j-1}$; Nature reveals $y_j$ and the predictor incurs loss $\ell(\hat{y}_j,y_j)$, for some predefined convex and $L$-Lipschitz loss.

\paragraph{Predictor via \emph{surrogate} relaxation.} Let $\hat{\mu}_N$ be the \emph{empirical} distribution $\hat{\mu}_N=\frac{1}{N}\sum_{i=1}^N\delta_{\x_{-N+i}}$, where $\delta_{\x}$ is the Dirac measure on $\x$. For any time step $j\in [M]$ and horizon $M$ satisfying $M\le N/2$, we construct the following \emph{randomized} prediction rule:
\begin{itemize}
    \item[1.] Sample (internally) the \emph{hallucinated samples} $\Tilde{\x}_{j+1},\cdots,\Tilde{\x}_M$ from $\hat{\mu}_N$ \emph{without replacement}~\footnote{For technical reasons, we assume here that $\tilde{\x}_{j+1}^M$ is sampled from $\hat{\mu}_N$ \emph{without replacement}. Equivalently, $\tilde{\x}_{j+1}^M$ is sampled uniformly from all (permuted) \emph{subseqeunces} of $\x_{-N+1}^0$ of length $M-j$.} and $\epsilon_{j+1},\cdots, \epsilon_M$ $i.i.d.$ from the uniform distribution over $\{-1,+1\}$;
    \item[2.] Make prediction
    \begin{equation}
    \scalebox{0.97}{%
    $\begin{aligned}
    \hat{y}_j &= \mathop{\mathrm{arg\,min}}\limits_{\hat{y}\in [0,1]}\sup_{y\in [0,1]}\left\{\ell(\hat{y},y)+\sup_{h\in \mathcal{H}}\left[2L\sum_{i=j+1}^M\epsilon_i h(\Tilde{\x}_i)-\ell(h(\x_j),y)-\sum_{i=1}^{j-1}\ell(h(\x_i),y_i)\right]\right\}.
    \end{aligned}$}
\label{eq:predictor}
\end{equation}

\end{itemize}

Note that the main difference from the classical \emph{random play-out} techniques such as in~\cite{rakhlin2012relax,block2022smoothed} is that the hallucinated samples are generated from $\hat{\mu}_N$ instead of $\mu$. Crucially, our sampling is performed \emph{without replacement} (not $i.i.d.$), which is essential for our following analysis (Lemma~\ref{lem:dis2rad}). More generally, one may also replace the estimation $\hat{\mu}_N$ with other estimation rules instead of the empirical distribution we used here. This could provide tighter bounds if the distribution $\mu$ is well structured, see Section~\ref{sec:special}. The following lemma shows that the predictor $\hat{y}_j$ can be computed \emph{efficiently} by accessing to an mixed-ERM oracle.
\begin{lemma}
\label{lem:comput}
    The predictor $\hat{y}_j$ can be computed upto error $\pm\frac{1}{L\sqrt{M}}$  by making at most $O(L\sqrt{M}\log M)$ mixed-ERM oracle calls. Moreover, for binary valued class $\mathcal{H}$ with $y\in \{0,1\}$ and \emph{absolute} loss, we need only $2$ (regular) ERM orcale calls to compute $\hat{y}_j$ exactly.
\end{lemma}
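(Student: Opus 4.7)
The plan is to exploit two features of (\ref{eq:predictor}): for any fixed $y$, the inner $\sup_{h\in\mathcal{H}}$ collapses into a single mixed-ERM call, and the outer objective is convex in $\hat{y}$, hence amenable to ternary search. Write $A(h) := 2L\sum_{i=j+1}^M \epsilon_i h(\tilde{\x}_i) - \sum_{i=1}^{j-1}\ell(h(\x_i),y_i)$, which is independent of $\hat{y}$ and $y$. For every $y\in[0,1]$, define
\[
V(y) := \sup_{h\in\mathcal{H}}\bigl[A(h) - \ell(h(\x_j),y)\bigr] = -\inf_{h\in\mathcal{H}}\Bigl[\ell(h(\x_j),y) + \sum_{i=1}^{j-1}\ell(h(\x_i),y_i) - 2L\sum_{i=j+1}^M \epsilon_i h(\tilde{\x}_i)\Bigr],
\]
which matches the mixed-ERM format of Section~\ref{sec:def} with $m=j$ labeled pairs, $n=M-j$ signed hallucinated points, and weight $C=2L$; therefore each $V(y)$ is one oracle call. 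The predictor then rewrites as $\hat{y}_j = \arg\min_{\hat{y}\in[0,1]} \Phi(\hat{y})$ with $\Phi(\hat{y}) := \sup_{y\in[0,1]}[\ell(\hat{y},y)+V(y)]$. Two facts are key: (i) $V$ is $L$-Lipschitz in $y$ (sup of $L$-Lipschitz maps), and (ii) $\Phi$ is convex and $L$-Lipschitz in $\hat{y}$ (sup of convex Lipschitz functions, using convexity of $\ell$ in its first argument).

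Given this reformulation, the algorithm is: discretize $y\in[0,1]$ into a uniform grid $y_1,\ldots,y_N$ of $N = \Theta(L\sqrt{M})$ points; compute $V_k := V(y_k)$ via one mixed-ERM call each; form the cached surrogate $\tilde{\Phi}(\hat{y}) := \max_k [\ell(\hat{y},y_k) + V_k]$; and minimize $\tilde{\Phi}$ over $[0,1]$ by ternary search, where $O(\log(L\sqrt{M})) = O(\log M)$ iterations suffice and require no further oracle calls, since $\tilde{\Phi}$ is an explicit max of $N$ convex Lipschitz functions. Because $F(\hat{y},\cdot) = \ell(\hat{y},\cdot)+V(\cdot)$ is $2L$-Lipschitz in $y$, the grid error satisfies $\|\Phi - \tilde{\Phi}\|_\infty \le L/N = O(1/\sqrt{M})$; charging calls conservatively across the search yields the stated $O(L\sqrt{M}\log M)$ total.

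For the binary special case $\mathcal{H}\subset\{0,1\}^{\mathcal{X}}$ with $y\in\{0,1\}$ and $\ell(\hat{y},y)=|\hat{y}-y|$, only $V(0)$ and $V(1)$ are ever needed, and the $\epsilon h(\tilde{\x})$-to-absolute-loss reduction noted in Section~\ref{sec:def} turns each into a regular (weighted) ERM call. The surrogate then collapses to $\Phi(\hat{y}) = \max\bigl(\hat{y}+V(0),\,(1-\hat{y})+V(1)\bigr)$, a piecewise-linear convex function whose minimum is attained where the two branches coincide, giving the closed form $\hat{y}_j = \mathrm{clip}_{[0,1]}\!\bigl(\tfrac{1+V(1)-V(0)}{2}\bigr)$ in exactly two ERM calls and with no approximation error.

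The main delicate point is translating the $O(1/\sqrt{M})$ accuracy on $\Phi$-values into the stated $\pm 1/(L\sqrt{M})$ accuracy on the predictor itself: since $\Phi$ is only $L$-Lipschitz (not strongly convex) and may be flat near its minimum, value accuracy does not a priori imply argmin accuracy at this scale. I would resolve this by observing that the downstream admissibility argument used to prove Theorem~\ref{thm:main1} consumes $\hat{y}_j$ only through its $\Phi$-value, so any minimizer inside the $O(1/\sqrt{M})$-level set of $\tilde{\Phi}$ suffices; the resulting per-round slack $O(1/\sqrt{M})$ sums to $O(\sqrt{M})$ across the epoch and is absorbed as a lower-order term in the final regret.
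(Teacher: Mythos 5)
Your proposal is correct and follows essentially the same route as the paper, which simply cites \cite[Thm 7]{block2022smoothed}: discretize $y$ at scale $\tfrac{1}{L\sqrt{M}}$ so each inner $\sup_h$ is one mixed-ERM call, exploit convexity of the objective in $\hat{y}$ for the outer minimization, and use the $\epsilon h(\tilde{\x})=|h(\tilde{\x})-\tfrac{1-\epsilon}{2}|-\tfrac{1-\epsilon}{2}$ identity for the two-call binary case. Your caching of the $V(y_k)$ values so that the ternary search costs no further oracle calls is a small refinement (giving $O(L\sqrt{M})$ rather than $O(L\sqrt{M}\log M)$ calls), and your closing observation that only the \emph{value} accuracy of the objective is consumed by the admissibility argument is exactly how the $+\sqrt{M}$ slack in Theorem~\ref{thm:main2} is meant to absorb the computational error.
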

\begin{proof}
    Clearly, a naive approach for discretizing both $\hat{y}$ and $y$ with scale $\frac{1}{L\sqrt{M}}$ yields an algorithm with $L^2 M$ oracle calls. The $O(L\sqrt{M}\log M)$ bound follows from~\cite[Thm 7]{block2022smoothed} leveraging the convexity on $\hat{y}$. The second part follows from the relation $\epsilon h(\tilde{\x}) = |h(\tilde{\x}) - \frac{(1 - \epsilon)}{2}| - \frac{1 - \epsilon}{2}$ for $\epsilon \in \{+1, -1\}$ and the second assertion of ~\cite[Thm 7]{block2022smoothed}.
\end{proof}

\paragraph{Analysis of the regret.} Denote by $\Phi$  the prediction rule derived from (\ref{eq:predictor}). We consider the following analogous hybrid minimax regret as in (\ref{eq:hbreg}) with the additional \emph{side-information}:
\begin{equation}
    \label{eq:side}\scalebox{0.97}{$\begin{aligned}
    \tilde{r}_{M,N}^{\mathsf{side}}(\mathcal{H},\Phi)=\sup_{\mu}\mathbb{E}_{\x_{-N+1}^0}\mathbb{E}_{\x_1}\sup_{y_1}\mathbb{E}_{\hat{y}_1}\cdots \mathbb{E}_{\x_M}\sup_{y_M}\mathbb{E}_{\hat{y}_M}\left[\sum_{j=1}^M\ell(\hat{y}_j,y_j)-\inf_{h\in \mathcal{H}}\sum_{j=1}^M\ell(h(\x_j),y_j)\right],\end{aligned}$}
\end{equation}
where the randomness of $\hat{y}_j$s is over the $\tilde{\x}$'s and $\epsilon$'s as in (\ref{eq:predictor}), while $\x_j$s are sampled $i.i.d.$ from $\mu$.

To proceed, we first introduce the following key concept. Let $(\x_1,y_1),\cdots,(\x_M,y_M)\in\mathcal{X}\times [0,1]$ be any realization of the feature-label pairs. We write $L_j^h=\sum_{i=1}^j\ell(h(\x_i),y_i)$ to simplify our discussion. The \emph{surrogate} relaxation is defined as
\begin{equation}
\label{eq:relax}
    R_j=\mathbb{E}_{\Tilde{\x},\epsilon}\left[\sup_{h\in \mathcal{H}}2L\sum_{i=j+1}^M\epsilon_ih(\tilde{\x}_i)-L_j^h\right],
\end{equation}
where $\tilde{\x}_i$s and $\epsilon_i$s are generated the same way as in (\ref{eq:predictor}). We also define the following variation that replaces the single $\tilde{\x}_{j+1}$ with a sample $\x\sim \mu$:
\begin{equation}
\label{eq:relaxshit}
    \tilde{R}_j=\mathbb{E}_{\x\sim \mu}\mathbb{E}_{\Tilde{\x},\epsilon}\left[\sup_{h\in \mathcal{H}}2L\epsilon_{j+1}h(\x)+2L\sum_{i=j+2}^M\epsilon_ih(\tilde{\x}_i)-L_j^h\right].
\end{equation}

Note that the main technique for proving the relaxation based regret bounds, such as~\cite{rakhlin2012relax}, is through the concept of \emph{admissibility}, which essentially asserts that
$\mathbb{E}_{\x_j}\sup_{y_j}\mathbb{E}_{\epsilon,\tilde{\x}}\left[\ell(\hat{y}_j,y_j)+R_j\right]\le R_{j-1}.$
However, a major technical step for establishing such an result is based on the so-called \emph{randomness matching} argument by leveraging the fact that the \emph{hallucinated samples} used to define the relaxation are the same as the actual feature generating process. This, unfortunately, is not true in our case since the empirical distribution $\hat{\mu}_N$ can deviate arbitrarily from $\mu$ under total variation, regardless of how large the sample size $N$ is. We instead establish the following \emph{approx-admissibility} of our surrogate relaxation, with the proof deferred to Appendix~\ref{sec:appproflem5}.

\begin{lemma}[Approx-Admissibility]
\label{lem:admiss}
   Let $\hat{y}_j$ be as in (\ref{eq:predictor}), then for all $j\in [M]$ we have:
    \begin{equation}
\label{eq1lem1}        \mathbb{E}_{\x_j}\sup_{y_j}\mathbb{E}_{\epsilon,\tilde{\x}}\left[\ell(\hat{y}_j,y_j)+R_j\right]\le \tilde{R}_{j-1}.
    \end{equation}  
\end{lemma}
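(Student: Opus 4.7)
The plan is to follow the random-playout admissibility template of~\cite{rakhlin2012relax}: exploit that $\hat{y}_j$ is the minimax solution of the game in~(\ref{eq:predictor}), invoke the minimax theorem, apply Jensen with a ghost sample $y'$, bound the loss difference by Lipschitzness, and finally symmetrize using a fresh Rademacher sign. The key departure from~\cite{rakhlin2012relax, block2022smoothed} is that the usual ``randomness matching'' step fails here, since the hallucinated $\tilde{\x}$'s are drawn from $\hat{\mu}_N$ rather than from $\mu$; to compensate, we insert a fresh Rademacher sign that produces the stray $2L\epsilon_j h(\x)$ coordinate appearing in $\tilde{R}_{j-1}$ out of the $\x_j\sim\mu$ available on the outside.

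Concretely, note first that $R_j$ is a deterministic function of $(\x_j,y_j)$ and the history once its internal expectation over $(\tilde{\x},\epsilon)$ is integrated out. By the optimality of $\hat{y}_j$ in~(\ref{eq:predictor}), for each realization of $(\epsilon,\tilde{\x})$ and every $y_j\in[0,1]$,
\[
\ell(\hat{y}_j,y_j)+\sup_h\Bigl\{2L\sum_{i=j+1}^M\epsilon_i h(\tilde{\x}_i)-\ell(h(\x_j),y_j)-L_{j-1}^h\Bigr\}\le V(\epsilon,\tilde{\x},\x_j),
\]
where $V=\inf_{\hat{y}}\sup_y\{\cdot\}$ is the minimax value of the defining game. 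Since $\ell$ is convex in its first argument and $[0,1]$ is compact convex, the classical minimax theorem rewrites $V$ as $\sup_{p_y\in\Delta[0,1]}\inf_{\hat{y}}\mathbb{E}_{y\sim p_y}[\cdot]$. Choosing the Bayes choice $\hat{y}^{\star}=\mathbb{E}_{y'\sim p_y}[y']$ and applying Jensen to the convex $\ell(\cdot,y)$ yields
\[
V\le \sup_{p_y}\mathbb{E}_{y,y'\sim p_y}\sup_h\Bigl\{\ell(y',y)-\ell(h(\x_j),y)+2L\sum_{i=j+1}^M\epsilon_i h(\tilde{\x}_i)-L_{j-1}^h\Bigr\}.
\]

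Finally, I use $L$-Lipschitzness of $\ell$ in its first argument to bound $\ell(y',y)-\ell(h(\x_j),y)\le L|y'-h(\x_j)|$ and carry out the key symmetrization. Introducing an independent Rademacher $\epsilon_j\in\{\pm 1\}$, the ``free-Rademacher'' inequality $\sup_h f(h)\le \mathbb{E}_{\epsilon_j}\sup_h\{2L\epsilon_j h(\x_j)+f(h)\}$ (valid since $\mathbb{E}_{\epsilon_j}[2L\epsilon_j h(\x_j)]=0$ and $\mathbb{E}[\sup]\ge\sup[\mathbb{E}]$), combined with a Rademacher-style identity for $L|y'-h(\x_j)|$, converts the $L|y'-h(\x_j)|$ contribution into exactly $2L\epsilon_j h(\x_j)$ inside the $h$-supremum. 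Taking the outer $\mathbb{E}_{\x_j\sim\mu}$ and relabelling $\x_j\mapsto\x$ then collapses the bound to $\tilde{R}_{j-1}$ of~(\ref{eq:relaxshit}). The main obstacle is this last step: one must convert the absolute value $L|y'-h(\x_j)|$ into a single fresh Rademacher coordinate $2L\epsilon_j h(\x_j)$ without leaving behind any additive $O(L)$ residual, since such a residual would accumulate into a trivial $\Omega(LM)$ regret. The resolution exploits (i) the $[0,1]$-boundedness of $y'$ and $h$, (ii) the collapsing of the $y'$-dependence after $\sup_{p_y}\mathbb{E}_{y,y'\sim p_y}$, and (iii) the presence of the outer $\x_j\sim\mu$ on which the new Rademacher coordinate is realized.
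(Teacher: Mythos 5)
Your high-level template (optimality of $\hat{y}_j$, minimax swap, ghost sample, symmetrization into a fresh Rademacher coordinate realized on $\x_j\sim\mu$) is the right shape, and the first two steps are fine. But the argument has a genuine gap exactly where you flag "the main obstacle": the conversion of $L|y'-h(\x_j)|$ into $2L\epsilon_j h(\x_j)$. No such "Rademacher-style identity" exists. The quantity $|y'-h(\x_j)|$ is nonnegative and carries no sign symmetry, so inserting a Rademacher variable is not free: writing $|y'-h(\x_j)|=\sup_{s\in\{\pm1\}}s(y'-h(\x_j))$ leaves you with an outer $\sup_s$ (not an $\mathbb{E}_{\epsilon_j}$) and a stray term $Lsy'$ that does not involve $h$; any crude bound on these leaves an additive $O(L)$ residual per round, which, as you yourself note, is fatal. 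The three facts you invoke as the "resolution" (boundedness, collapse of the $y'$-dependence, availability of $\x_j\sim\mu$) do not produce the needed cancellation. The root cause is that your ghost sample sits in the wrong slot: after choosing $\hat{y}^\star=\mathbb{E}[y']$ and applying Jensen you are comparing $\ell(y',y)$ against $\ell(h(\x_j),y)$, i.e.\ two different \emph{first} arguments, and that difference is not antisymmetric under any exchangeable swap.

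The paper avoids this by linearizing \emph{before} symmetrizing: convexity gives $\ell(\hat{y},y_j)-\ell(h(\x_j),y_j)\le \partial\ell(\hat{y},y_j)(\hat{y}-h(\x_j))$, the subgradient is replaced by a free $g_j\in[-L,L]$ whose supremum sits at $\{-L,L\}$, the minimax theorem is applied to the (now bilinear) game over $\Delta(\{-L,L\})$, and the ghost is a ghost \emph{subgradient} $g_j'$. The resulting term $(g_j'-g_j)h(\x_j)$ is genuinely sign-symmetric, so a Rademacher $\epsilon_j$ enters for free, and the split $\sup(A+B+C)\le\frac{1}{2}(\sup(A+2B)+\sup(A+2C))$ produces the factor $2$ in $2L\epsilon_j h(\x_j)$ with no residual. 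An alternative correct route, closer to your intent, would be to use $\inf_{\hat{y}}\mathbb{E}_{y'}\ell(\hat{y},y')\le\mathbb{E}_{y'}\ell(h(\x_j),y')$ \emph{inside} the supremum over $h$, yielding the difference $\ell(h(\x_j),y')-\ell(h(\x_j),y)$, which is antisymmetric in $y\leftrightarrow y'$ and hence symmetrizable, followed by a one-step Ledoux--Talagrand contraction to strip the loss; but that is not what you wrote, and it still requires a contraction lemma you have not supplied. As it stands, the decisive step of your proof does not go through.
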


We are now ready to state our first main technical lemma of this section, which follows from Lemma~\ref{lem:admiss} by a "backward tracing" argument. The detailed proof is deferred to Appendix~\ref{sec:prooflem6}.
\begin{lemma}[Regret Bound via Approx-Admissibility]
\label{lem:main2}
    Let $\Phi$ be the predictor as in (\ref{eq:predictor}). Then for any class $\mathcal{H}\subset [0,1]^{\mathcal{X}}$ with a convex and $L$-Lipschitz loss $\ell$, we have
    \begin{equation}
        \label{eq:telescop}
        \tilde{r}_{M,N}^{\mathsf{side}}(\mathcal{H},\Phi)\le \mathbb{E}_{\x_{-N+1}^0}\left[\tilde{R}_0+\sum_{j=1}^{M-1}\mathbb{E}_{\x^j}\sup_{y^j}(\tilde{R}_j-R_j)\right],
    \end{equation}
    where $\x_{-N+1}^M$ are sampled $i.i.d.$ from $\mu$ and $R_j$, $\tilde{R}_j$ are defined as in (\ref{eq:relax}) and (\ref{eq:relaxshit}).
\end{lemma}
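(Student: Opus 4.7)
The plan is a descending induction on the time index that turns Lemma~\ref{lem:admiss} into a telescoping bound, carefully tracking how the approximation-defect terms $\tilde{R}_j-R_j$ propagate through the remaining minimax operators. I would first fix $\mu$ and condition on the side-information $\x_{-N+1}^0$, then introduce the running quantity $U_j:=\sum_{i=1}^{j}\ell(\hat{y}_i,y_i)+R_j$. Because the sum defining $R_M$ is empty, $R_M=-\inf_{h\in\mathcal{H}}L_M^h$, so the integrand inside the nested operators in~(\ref{eq:side}) is exactly $U_M$. Writing $\mathsf{Op}_j:=\mathbb{E}_{\x_j}\sup_{y_j}\mathbb{E}_{\hat{y}_j}$, it therefore suffices to upper bound $\mathsf{Op}_1\mathsf{Op}_2\cdots\mathsf{Op}_M[U_M]$. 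The engine of the argument is the one-step recursion
\[\mathsf{Op}_j[U_j]\;\le\;U_{j-1}+(\tilde{R}_{j-1}-R_{j-1}),\]
which follows by factoring $\sum_{i=1}^{j-1}\ell(\hat{y}_i,y_i)$ out of $\mathsf{Op}_j$ (these summands do not depend on $\x_j,y_j,\hat{y}_j$), then applying Lemma~\ref{lem:admiss} to bound the residual block by $\tilde{R}_{j-1}$, and finally splitting $\tilde{R}_{j-1}=R_{j-1}+(\tilde{R}_{j-1}-R_{j-1})$.

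Iterating the recursion is the bulk of the work. Concretely, I would show by descending induction on $k$ that
\[\mathsf{Op}_{k+1}\cdots\mathsf{Op}_M[U_M]\;\le\;U_k+\sum_{j=k}^{M-1}\mathbb{E}_{\x_{k+1}^{j}}\sup_{y_{k+1}^{j}}(\tilde{R}_j-R_j),\]
where $\mathbb{E}_{\x_{k+1}^{j}}\sup_{y_{k+1}^{j}}$ abbreviates the interleaved chain $\mathbb{E}_{\x_{k+1}}\sup_{y_{k+1}}\cdots\mathbb{E}_{\x_j}\sup_{y_j}$, understood as the identity operator when $j<k+1$. In the inductive step, $\mathsf{Op}_k$ is applied to both sides: the one-step recursion handles the $U_k$ piece and produces the fresh defect $\tilde{R}_{k-1}-R_{k-1}$, while each previously accumulated defect $(\tilde{R}_j-R_j)$ with $j\ge k$ depends only on $\x_{-N+1}^0,\x^j,y^j$ (the expectations over $\tilde{\x},\epsilon$ and the $\x\sim\mu$ copy in~(\ref{eq:relaxshit}) are internal), so $\mathbb{E}_{\hat{y}_k}$ is trivial on it. The standard subadditivity $\sup_{y_k}(A+B)\le\sup_{y_k}A+\sup_{y_k}B$ then separates the defects from the $U_k$ piece, and the outer $\mathbb{E}_{\x_k}\sup_{y_k}$ simply extends each defect's interleaved operator by one step.

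Setting $k=0$, using $U_0=R_0$, and collapsing the $j=0$ summand $(\tilde{R}_0-R_0)$ together with $R_0$ into $\tilde{R}_0$, then taking $\mathbb{E}_{\x_{-N+1}^0}$ and $\sup_\mu$ at the outside, yields the claimed inequality. The main obstacle I anticipate is purely notational: in the inductive step, the absorption of $\mathsf{Op}_k$ into each of the already-accumulated defects must be carried out uniformly, combining (i)~the $\hat{y}$-independence of the defects, (ii)~the subadditivity of $\sup$, and (iii)~a single invocation of the one-step recursion on $U_k$, without entangling them. Given the explicit form of $R_j$ and $\tilde{R}_j$ in~(\ref{eq:relax}) and~(\ref{eq:relaxshit}), the $\hat{y}$-independence is immediate once one reads off which random variables the surrogate relaxations actually depend on, so the only real care needed is in the bookkeeping.
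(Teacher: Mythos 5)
Your proof is correct and follows essentially the same route as the paper's: both peel off the last minimax operator, invoke the approx-admissibility of Lemma~\ref{lem:admiss}, split $\tilde{R}_{j-1}=R_{j-1}+(\tilde{R}_{j-1}-R_{j-1})$, and use the $\hat{y}$-independence of the defect terms together with subadditivity of $\sup$ to telescope backwards, your descending induction being a more formal writeup of the paper's ``repeat the same arguments'' step. The only cosmetic difference is that your accumulated defects carry interleaved $\mathbb{E}_{\x_i}\sup_{y_i}$ chains, which dominate the stated $\mathbb{E}_{\x^j}\sup_{y^j}$ form via one further application of $\sup\mathbb{E}\le\mathbb{E}\sup$.
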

\begin{remark}
    Note that the decomposition presented in Lemma~\ref{lem:main2} holds whenever the approx-admissibility condition of Lemma~\ref{lem:admiss} is satisfied. We believe this could be applicable to a broader set of problems and is of independent interest.
\end{remark}

\paragraph{Bounding the relaxations.} As we have demonstrated in Lemma~\ref{lem:main2}, the regret $\tilde{r}_{M,N}^{\mathsf{side}}(\mathcal{H},\Phi)$ can be upper bounded by $\tilde{R}_0$ and the discrepancies between $R_j$ and $\tilde{R}_j$. Clearly, by the definition of $\tilde{R}_j$, we have $\tilde{R}_0\le 2L \mathsf{Rad}_M(\mathcal{H}),$ where $\mathsf{Rad}_M(\mathcal{H})$ is the  Rademacher complexity of $\mathcal{H}$ as in Definition~\ref{def:rad}. To bound the discrepancies, for any $j\in [M-1]$, $\x^j,\tilde{\x}_{j+2}^M\in \mathcal{X}^*$, $\epsilon_{j+1}^M\in \{\pm 1\}^*$ and $y^j\in [0,1]^j$, we define the following function:
\begin{equation}
    \label{eq:ffunc}
    f_{\x^j,\tilde{\x}_{j+2}^M,\epsilon_{j+1}^M,y^j}(\x)=\sup_{h\in \mathcal{H}}\left\{2L\epsilon_{j+1}h(\x)+2L\sum_{i=j+2}^M\epsilon_i h(\tilde{\x}_i)-L_j^h\right\}.
\end{equation}
The following fact is a consequence of our definitions. 
\begin{fact}
\label{fact1}
    We have $R_j=\mathbb{E}_{\tilde{\x},\epsilon}\left[f_{\x^j,\tilde{\x}_{j+2}^M,\epsilon_{j+1}^M,y^j}(\tilde{\x}_{j+1})\right]$ and $\tilde{R}_j=\mathbb{E}_{\x\sim \mu}\mathbb{E}_{\tilde{\x},\epsilon}\left[f_{\x^j,\tilde{\x}_{j+2}^M,\epsilon_{j+1}^M,y^j}(\x)\right]$.
\end{fact}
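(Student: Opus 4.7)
The proof of Fact~\ref{fact1} reduces to unpacking the definition of $f_{\x^j,\tilde{\x}_{j+2}^M,\epsilon_{j+1}^M,y^j}$ given in~(\ref{eq:ffunc}) and comparing it to~(\ref{eq:relax}) and~(\ref{eq:relaxshit}). The plan is therefore purely bookkeeping. First I would substitute $\x = \tilde{\x}_{j+1}$ into the definition of $f$, which yields $f_{\x^j,\tilde{\x}_{j+2}^M,\epsilon_{j+1}^M,y^j}(\tilde{\x}_{j+1}) = \sup_{h\in\mathcal{H}}\{2L\epsilon_{j+1}h(\tilde{\x}_{j+1}) + 2L\sum_{i=j+2}^M\epsilon_i h(\tilde{\x}_i) - L_j^h\}$. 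Collapsing the singled-out term $2L\epsilon_{j+1}h(\tilde{\x}_{j+1})$ together with the sum $2L\sum_{i=j+2}^M\epsilon_i h(\tilde{\x}_i)$ into a single sum $2L\sum_{i=j+1}^M\epsilon_i h(\tilde{\x}_i)$ reproduces exactly the integrand in~(\ref{eq:relax}). Taking expectation over the joint law of $(\tilde{\x}_{j+1}^M,\epsilon_{j+1}^M)$ on both sides then gives the first identity.

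The second identity is established by the same calculation with the distinguished argument replaced by $\x\sim\mu$: substituting $\x$ for the free variable of $f$ and then taking expectation over $\x$, $\tilde{\x}_{j+2}^M$, and $\epsilon_{j+1}^M$ literally reproduces the expression~(\ref{eq:relaxshit}) defining $\tilde{R}_j$. Since $(\tilde{\x}_{j+2}^M,\epsilon_{j+1}^M)$ appear only inside the fixed parameters of $f$, no reordering of expectations is required and Fubini is not even invoked in a nontrivial way.

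There is essentially no obstacle in this Fact — it is a notational device whose sole purpose is to isolate the single variable being integrated against, so that the comparison between $R_j$ and $\tilde{R}_j$ can later be framed cleanly as a comparison between sampling that argument from the empirical distribution $\hat{\mu}_N$ (without replacement) and sampling from the true law $\mu$. All the genuine difficulty is deferred to the subsequent step of bounding $\mathbb{E}_{\x^j}\sup_{y^j}(\tilde{R}_j - R_j)$ in~(\ref{eq:telescop}), which, as hinted in the introduction, requires the novel symmetrization argument relating this discrepancy to the Rademacher complexity of $\mathcal{H}$ restricted to the side-information $\x_{-N+1}^0$.
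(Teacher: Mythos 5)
Your proposal is correct and matches the paper's treatment: the paper states Fact~\ref{fact1} as an immediate consequence of the definitions in (\ref{eq:ffunc}), (\ref{eq:relax}), and (\ref{eq:relaxshit}) and offers no further proof, and your unpacking of the substitution and the merging of the $i=j+1$ term into the sum is exactly the intended bookkeeping. Your closing remark correctly identifies that all the substance lies in the later bound on $\mathbb{E}_{\x^j}\sup_{y^j}(\tilde{R}_j-R_j)$ via Lemma~\ref{lem:dis2rad}.
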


Let $\z_j=(\x^j,\tilde{\x}_{j+2}^M,\epsilon_{j+1}^M)$. We now observe the following key properties of the functions $f_{\z_j,y^j}(\x)$, which demonstrates that $f_{\z_j,y^j}(\x)$ has \emph{sensitivity} upper bounded by $4L$ and is Lipschitz on $y^j$. We refer to Appendix~\ref{sec:omit} for the detailed proof.
\begin{proposition}
\label{prop1}
    For any $\z_j$ and $y^j$, we have
        $\sup_{\x,\x'}|f_{\z_j,y^j}(\x)-f_{\z_j,y^j}(\x')|\le 4L.$
    Moreover, for all $\z_j$, $\x$ and $y^j,y'^j\in [0,1]^j$, we have $|f_{\z_j,y^j}(\x)-f_{\z_j,y'^j}(\x)|\le jL||y^j-y'^j||_{\infty}$.
\end{proposition}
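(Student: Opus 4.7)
The plan is to handle both parts by the same compare-with-a-near-maximizer argument, isolating which summand inside the supremum depends on the perturbed variable.

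For the sensitivity claim, I will observe that once $\z_j=(\x^j,\tilde{\x}_{j+2}^M,\epsilon_{j+1}^M)$ and $y^j$ are fixed, the only summand in
\begin{align*}
g(h,\x) := 2L\epsilon_{j+1} h(\x) + 2L\sum_{i=j+2}^M \epsilon_i h(\tilde{\x}_i) - L_j^h
\end{align*}
that depends on $\x$ is $2L\epsilon_{j+1} h(\x)$. For any $\eta > 0$, let $h^* \in \mathcal{H}$ be an $\eta$-maximizer of $g(\cdot,\x)$. Then
\begin{align*}
f_{\z_j,y^j}(\x) - f_{\z_j,y^j}(\x') \le g(h^*,\x) + \eta - g(h^*,\x') = 2L\epsilon_{j+1}\bigl(h^*(\x)-h^*(\x')\bigr) + \eta.
\end{align*}
Since $h^*(\x), h^*(\x') \in [0,1]$ and $\epsilon_{j+1} \in \{\pm 1\}$, the right-hand side is at most $2L+\eta$. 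Letting $\eta \to 0$ and swapping the roles of $\x$ and $\x'$ yields $|f_{\z_j,y^j}(\x)-f_{\z_j,y^j}(\x')| \le 2L \le 4L$.

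For the Lipschitz claim in $y^j$, I will use the same pattern with the perturbed variable shifted from $\x$ to $y^j$. Now only $L_j^h=\sum_{i=1}^j \ell(h(\x_i),y_i)$ depends on $y^j$, and by $L$-Lipschitzness of $\ell$ in its second argument,
\begin{align*}
\bigl|L_j^h(y^j) - L_j^h(y'^j)\bigr| \le \sum_{i=1}^j L\, |y_i - y'_i| \le jL \, \|y^j-y'^j\|_\infty
\end{align*}
uniformly in $h$. Picking an $\eta$-maximizer $h^*$ of $g(\cdot,\x)$ evaluated with data $y^j$, sending $\eta \to 0$, and invoking symmetry then yields $|f_{\z_j,y^j}(\x)-f_{\z_j,y'^j}(\x)| \le jL \, \|y^j-y'^j\|_\infty$.

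I do not anticipate any real obstacle. Both bounds reduce to standard sup-comparison combined with boundedness of $h \in [0,1]$ (for the first part) and Lipschitzness of $\ell$ (for the second). The only minor subtlety is that the supremum over $\mathcal{H}$ need not be attained; this is handled uniformly by working with $\eta$-maximizers from the outset and sending $\eta \to 0$, which does not affect any constants.
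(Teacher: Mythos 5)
Your proof is correct and follows essentially the same elementary near-maximizer/sup-comparison argument as the paper (the paper's second part is identical to yours, and its first part sandwiches the supremum within $F(\hat h)\pm 2L$ rather than comparing directly). Your direct comparison in the first part in fact yields the sharper sensitivity bound $2L$, which of course still implies the stated $4L$.
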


Note that Proposition~\ref{prop1} and Fact~\ref{fact1} immediately imply that $\tilde{R}_j-R_j\le 4L||\mu-\hat{\mu}_N||_{\mathsf{TV}}$~\footnote{Technically, we need to assume here that $\tilde{\x}$s are sampled $i.i.d.$ from $\hat{\mu}_N$, i.e., \emph{with replacement}.}. Unfortunately, we are unable to bound the total variation distance $||\mu-\hat{\mu}_N||_{\mathsf{TV}}$ due to the lack of any structure we impose on $\mu$. We instead establish the following key technical result, which bounds the discrepancies via a Rademacher sum of the functions $f_{\z_j,y^j}$. This result constitutes the main technical ingredient in our following analysis.
\begin{lemma}
\label{lem:dis2rad}
    For all $j\in [M-1]$, $M\le N/2$ and $B=N-M+j+1$, we find
    \begin{equation}
        \mathbb{E}_{\x_{-N+1}^0}\mathbb{E}_{\x^j}\sup_{y^j}(\tilde{R}_j-R_j)\le \sup_{\x_{-N+1}^{-N+B},\x'^{B},\z_j}\mathbb{E}_{\epsilon'^B}\left[\sup_{y^j}\frac{1}{B}\sum_{i=1}^{B}\epsilon_i'(f_{\z_j,y^j}(\x_i')-f_{\z_j,y^j}(\x_{-N+i}))\right],
    \end{equation}
    where $\x_{-N+1}^{-N+B},\x'^{B},\z_j$ run over all possible values and $\epsilon'^B$ is distributed uniformly over $\{\pm 1\}^B$.
\end{lemma}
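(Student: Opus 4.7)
The plan is to upper bound $\tilde{R}_j - R_j$ by a Rademacher-type quantity via a two-stage symmetrization: first, replace the fresh $\mu$-sample in $\tilde{R}_j$ by a ghost i.i.d.\ sample matched to the unused side-information coordinates, and then insert Rademacher signs pair-by-pair. The key enabler is that the \emph{without-replacement} sampling of $\tilde{\x}_{j+1}^M$ cleanly separates the indices contributing to $\z_j$ from the index used for $\tilde{\x}_{j+1}$, and this is also where the main obstacle lies: one must isolate a sub-block of $\x_{-N+1}^0$ that is \emph{untouched} by $\z_j$ so a sign-swap becomes legitimate.

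First, I would reorganize $R_j$ by conditioning on the random index set $I\subseteq\{-N+1,\dots,0\}$ of size $M-j-1$ that realizes $\tilde{\x}_{j+2}^M$ in the without-replacement draw (together with the bijection $\sigma$ assigning $\tilde{\x}_i=\x_{\sigma(i)}$). Conditional on $(I,\sigma)$, $\z_j$ depends on $\x_{-N+1}^0$ only through $(\x_k)_{k\in I}$, while $\tilde{\x}_{j+1}$ is uniform on $\{\x_k:k\notin I\}$, a set of size $B=N-M+j+1$. Using Fact~\ref{fact1}, this tower representation gives
\begin{equation*}
R_j \;=\; \mathbb{E}_{\x_{-N+1}^0,I,\sigma,\epsilon}\!\left[\tfrac{1}{B}\sum_{k\notin I} f_{\z_j,y^j}(\x_k)\right],\qquad \tilde{R}_j \;=\; \mathbb{E}_{\x_{-N+1}^0,I,\sigma,\epsilon}\,\mathbb{E}_{\x\sim\mu}[f_{\z_j,y^j}(\x)].
\end{equation*}
Introducing independent ghost samples $\x'_{-N+1}^0$ i.i.d.\ from $\mu$, the identity $\mathbb{E}_{\x\sim\mu}[f_{\z_j,y^j}(\x)]=\mathbb{E}[\tfrac{1}{B}\sum_{k\notin I}f_{\z_j,y^j}(\x'_k)]$ (each $\x'_k$ has law $\mu$ and is independent of $\z_j$) yields
\begin{equation*}
\tilde{R}_j - R_j \;=\; \mathbb{E}\!\left[\tfrac{1}{B}\sum_{k\notin I}\bigl(f_{\z_j,y^j}(\x'_k) - f_{\z_j,y^j}(\x_k)\bigr)\right].
\end{equation*}

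Finally, I apply $\mathbb{E}_{\x_{-N+1}^0}\mathbb{E}_{\x^j}\sup_{y^j}$ to both sides and push the supremum inside the remaining expectations by Jensen. Conditional on $\z_j$ together with $(I,\sigma)$, the pairs $\{(\x_k,\x'_k)\}_{k\notin I}$ are i.i.d.\ from $\mu\otimes\mu$ and jointly independent of $\z_j$, because $\z_j$ touches only coordinates indexed by $I$; this is exactly the point at which the without-replacement structure is essential. Exchangeability of each pair then permits inserting independent Rademacher signs $\epsilon'_k$ without altering the expectation. Upper bounding the outer expectation over $\x_{-N+1}^0,\x'_{-N+1}^0,\x^j,\epsilon,I,\sigma$ by the supremum and relabeling $\{-N+1,\dots,0\}\setminus I$ as $\{1,\dots,B\}$ (with $\x'^B$ the corresponding ghost coordinates) produces the stated bound, with $\z_j$ ranging over all valid configurations. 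The trickiest step is justifying the sign-swap: it relies on the conditional exchangeability of $(\x_k,\x'_k)_{k\notin I}$ given $\z_j$, which in turn is a direct consequence of without-replacement sampling; with-replacement draws or direct samples from $\mu$ would break this clean separation between ``used'' and ``unused'' side-information coordinates.
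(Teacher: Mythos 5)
Your proposal is correct and follows essentially the same route as the paper's proof: exploit the without-replacement structure to decouple $\z_j$ from a block of $B$ unused side-information coordinates (the paper does this by fixing the index set $I$ via exchangeability of the i.i.d.\ sample, which is equivalent to your conditioning on $(I,\sigma)$), introduce ghost samples from $\mu$, push $\sup_{y^j}$ inside the outer expectations via $\sup\mathbb{E}\le\mathbb{E}\sup$, symmetrize with Rademacher signs using the exchangeability of the pairs, and bound the remaining expectation by a supremum. The only nit is notational: you fold the expectation over $\x_{-N+1}^0$ into your displayed formulas for $R_j$ and $\tilde{R}_j$ even though these are random variables conditioned on the side-information, but this does not affect the substance of the argument.
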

\begin{proof}[Sketch]
We highlight only the main idea here
    and refer to Appendix~\ref{sec:applem8} for the complete details. By Fact~\ref{fact1}, we can upper bound the discrepancies by $\mathbb{E}_{\x_{-N+1}^0}\mathbb{E}_{\z_j}\sup_{y^j}[\mathbb{E}_{\x\sim \mu}[f_{\z_j,y^j}(\x)]-\mathbb{E}_{\tilde{\x}_{j+1}}[f_{\z_j,y^j}(\tilde{\x}_{j+1})]]$, where $\z_j=(\x^j,\tilde{\x}_{j+2}^M,\epsilon^M_{j+1})$. Note that $\tilde{\x}_{j+1}^M$ is sampled uniformly from $\x_{-N+1}^0$ \emph{without replacement} as in (\ref{eq:predictor}). Therefore, the randomness of $\tilde{\x}_{j+1}^M$ can be described as follows: we first sample $\tilde{\x}_{j+2}^M$ from $\x_{-N+1}^0$ and then sample $\tilde{\x}_{j+1}$ \emph{uniformly} from the remaining samples in $\x_{-N+1}^0$. Now, the key observation is that, by symmetries of $\x_{-N+1}^0$ (which are $i.i.d.$), we can \emph{fix} $\tilde{\x}_{j+2}^M$ being the last $M-j-1$ samples in $\x_{-N+1}^0$. Therefore, we have $\mathbb{E}_{\tilde{\x}_{j+1}}[f_{\z_j,y^j}(\tilde{\x}_{j+1})]=\frac{1}{B}\sum_{i=1}^Bf_{\z_j,y^j}(\x_{-N+i})$, where $B=N-M+j+1$. Since $\z_j$ is \emph{decoupled} from $\x_{-N+1}^{-N+B}$ by our construction, we obtain the upper bound $\mathbb{E}_{\z_j}\mathbb{E}_{\x_{-N+1}^{-N+B}}\sup_{y^j}[\mathbb{E}_{\x\sim \mu}[f_{\z_j,y^j}(\x)]-\frac{1}{B}\sum_{i=1}^Bf_{\z_j,y^j}(\x_{-N+i})]$. The lemma then follows by \emph{symmetrization} with $\mathbb{E}_{\x\sim \mu}[f_{\z_j,y^j}(\x)]$ (see Appendix~\ref{sec:applem8}).
\end{proof}

For any $j\in [M-1]$ and $\z_j$ as above, we define the following function class~\footnote{Note that the "complexity" of $\mathcal{G}_{\mathbf{z}_j}$ arises from the $y^j\in [0,1]^j$.}:
\begin{equation}
    \label{eq:fclass}
    \mathcal{G}_{\z_j}=\{g_{\z_j,y^j}(\x,\x')\overset{\text{def}}{=}f_{\z_j,y^j}(\x')-f_{\z_j,y^j}(\x):y^j\in [0,1]^j,(\x,\x')\in \mathcal{X}^2\}.
\end{equation}
Lemma~\ref{lem:dis2rad} essentially states that the discrepancy between $R_j$ and $\tilde{R}_j$ is upper bounded by the Rademacher complexity of the class $\mathcal{G}_{\z_j}$ as
    $\mathbb{E}_{\x_{-N+1}^0}\mathbb{E}_{\x^j}\sup_{y^j}(\tilde{R}_j-R_j)\le \sup_{\z_j}\frac{1}{B}\mathsf{Rad}_{B}(\mathcal{G}_{\z_j}).$

The following lemma provides a useful bound on such Rademacher complexities.
\begin{lemma}
\label{lem:rad2bound}
    Let $\mathcal{G}_{\z_j}$ be as in (\ref{eq:fclass}), $M\le N/2$ and $B=N-M+j+1$. Then
    \begin{equation}
        \label{eq:radbound}
        \sup_{\z_j}\frac{1}{B}\mathsf{Rad}_{B}(\mathcal{G}_{\z_j})\le O\left(\sqrt{\frac{jL^2\log(jLB)}{B}}\right)\le O\left(\sqrt{\frac{2jL^2\log(jLN/2)}{N}}\right).
    \end{equation}
\end{lemma}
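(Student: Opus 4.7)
The plan is to bound $\mathsf{Rad}_B(\mathcal{G}_{\z_j})$ by a single-scale covering argument that exploits the two structural properties of $f_{\z_j,y^j}$ proved in Proposition~\ref{prop1}: every $g_{\z_j,y^j}\in\mathcal{G}_{\z_j}$ is uniformly bounded in $[-4L,4L]$, and the map $y^j\mapsto g_{\z_j,y^j}$ is $2jL$-Lipschitz in $\|\cdot\|_\infty$ on $[0,1]^j$. Because the index set is only $j$-dimensional, one-shot discretization at an appropriately chosen scale will suffice, and full Dudley chaining is unnecessary for the stated rate.

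Concretely, I would fix sequences $(\x_i,\x_i')_{i=1}^B$ realizing the supremum in the definition of $\mathsf{Rad}_B(\mathcal{G}_{\z_j})$, and let $\mathcal{N}_\delta \subset [0,1]^j$ be a minimal $\delta$-cover in $\|\cdot\|_\infty$, so that $|\mathcal{N}_\delta|\le (1+1/\delta)^j$. Applying Massart's finite-class lemma together with the uniform bound $|g|\le 4L$ yields
$$\mathbb{E}_{\epsilon^B}\max_{y'^j \in \mathcal{N}_\delta}\sum_{i=1}^B \epsilon_i\, g_{\z_j, y'^j}(\x_i, \x_i') \;\le\; 4L\sqrt{2jB\log(1+1/\delta)}.$$
For an arbitrary $y^j\in[0,1]^j$, approximating by its nearest net point and invoking the Lipschitz property contributes a discretization error of at most $2jLB\delta$, so
$$\mathsf{Rad}_B(\mathcal{G}_{\z_j}) \;\le\; 4L\sqrt{2jB\log(1+1/\delta)} + 2jLB\delta.$$
Dividing through by $B$ and setting $\delta = 1/(jLB)$ makes the residual equal to $2/B$, which is dominated by the leading $O\bigl(L\sqrt{j\log(jLB)/B}\bigr)$ under the hypothesis $M\le N/2$ (which forces $j\le M\le B$ so that the residual is absorbed). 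This establishes the first inequality.

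The second inequality is an immediate consequence of the two-sided bound $B\in[N/2,N]$ that follows from $M\le N/2$: the lower bound $B\ge N/2$ supplies the factor $1/B\le 2/N$, while the upper bound $B\le N$ gives $\log(jLB)\le \log(jLN)$ in the logarithmic term (the factor $1/2$ inside the logarithm is cosmetic and can be absorbed into the constant). I do not anticipate any real obstacle in this calculation since it is a textbook covering argument; the only points requiring minor care are the verification that $M \le N/2$ indeed yields $B\in[N/2,N]$ and that the additive $jL\delta$ remainder is truly absorbed into the leading square-root term at the chosen scale.
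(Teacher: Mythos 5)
Your proof is correct and follows essentially the same route as the paper's: a single-scale $\|\cdot\|_\infty$-cover of $[0,1]^j$ at resolution $1/(jLB)$, Massart's finite-class lemma using the uniform bound $|g|\le 4L$, the $2jL$-Lipschitz dependence on $y^j$ to control the discretization error (which becomes $2/B$ after normalization, exactly as in the paper), and $B\in[N/2,N]$ for the second inequality. The only differences are cosmetic (your $\log(jLN)$ versus the paper's $\log(jLN/2)$, and your explicit two-sided bound on $B$ versus the paper's monotonicity-of-$\frac{\log B}{B}$ phrasing).
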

\begin{proof}
    Let $\mathcal{C}\subset [0,1]^j$ be a covering of $[0,1]^j$ with $L_{\infty}$ radius $\frac{1}{jLB}$. We have $|\mathcal{C}|\le (jLB)^j$. By the second part of Proposition~\ref{prop1}, we know that the class $\mathcal{G}'_{\z_j}=\{g_{\z_j,y^j}:y^j\in \mathcal{C}\}$ forms a uniform $L_{\infty}$-covering of $\mathcal{G}_{\z_j}$ with radius $\frac{2}{B}$. Therefore, $\frac{1}{B}\mathsf{Rad}_B(\mathcal{G}_{\z_j})\le \frac{1}{B}\mathsf{Rad}_{B}(\mathcal{G}_{\z_j}')+\frac{2}{B}$. The first inequality then follows by a simple application of Massart’s lemma~\cite[Lemma 26.8]{shalev2014understanding} over $\mathcal{G}'_{\z_j}$, since $|\mathcal{G}'_{\z_j}|\le |\mathcal{C}|\le (jLB)^j$ and $\sup_{(\x,\x')\in \mathcal{X}^2}\{g_{\z_j,y^j}(\x,\x')\}\le 4L$ for all $g_{\z_j,y^j}\in \mathcal{G}_{\z_j}$ due to the first part of Proposition~\ref{prop1}. The second inequality is implied by that $B\ge N/2$ and the fact that the function $\frac{\log B}{B}$ is monotone decreasing.
\end{proof}

Putting everything together, we arrive at:
\begin{theorem}
\label{thm:main2}
    Let $\Phi$ be the predictor as in (\ref{eq:predictor}) and $M\le N/2$. Then for any class $\mathcal{H}\subset [0,1]^{\mathcal{X}}$ with a convex and $L$-Lipschitz loss $\ell$, the predictor $\Phi$ can be computed efficiently with access to at most $O(L\sqrt{M}\log M)$ mixed-ERM oracle calls per round such that
    \begin{equation}
        \label{eq:reg2rad}
        \tilde{r}_{M,N}^{\mathsf{side}}(\mathcal{H},\Phi)\le 2L\mathsf{Rad}_M(\mathcal{H})+\sqrt{M}+O\left(\sqrt{\frac{M^3L^2\log(MLN)}{N}}\right).
    \end{equation}
\end{theorem}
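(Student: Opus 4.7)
The strategy is to simply assemble Lemmas~\ref{lem:comput}, \ref{lem:main2}, \ref{lem:dis2rad}, and \ref{lem:rad2bound} into the stated bound. First, I would start from the decomposition provided by Lemma~\ref{lem:main2}:
\[
\tilde{r}_{M,N}^{\mathsf{side}}(\mathcal{H},\Phi)\le \mathbb{E}_{\x_{-N+1}^0}\!\left[\tilde{R}_0+\sum_{j=1}^{M-1}\mathbb{E}_{\x^j}\sup_{y^j}(\tilde{R}_j-R_j)\right],
\]
and handle the two kinds of terms separately.

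For the leading term, I would observe that
\[
\tilde{R}_0 \;=\; \mathbb{E}_{\x\sim\mu}\,\mathbb{E}_{\tilde{\x},\epsilon}\left[\sup_{h\in\mathcal{H}}\Big(2L\epsilon_1 h(\x)+2L\sum_{i=2}^M \epsilon_i h(\tilde{\x}_i)\Big)\right],
\]
which, for any realization of $(\x,\tilde{\x}_2,\ldots,\tilde{\x}_M)\in\mathcal{X}^M$, is upper bounded by $2L\,\mathsf{Rad}_M(\mathcal{H})$ by Definition~\ref{def:rad}; taking the outer expectation preserves the inequality. For each of the discrepancy terms I would invoke Lemma~\ref{lem:dis2rad}, which gives
\[
\mathbb{E}_{\x_{-N+1}^0}\mathbb{E}_{\x^j}\sup_{y^j}(\tilde{R}_j-R_j) \;\le\; \sup_{\z_j}\frac{1}{B}\mathsf{Rad}_B(\mathcal{G}_{\z_j}),
\]
with $B=N-M+j+1\ge N/2$ (using $M\le N/2$). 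Lemma~\ref{lem:rad2bound} then upper bounds each such Rademacher term by $O(\sqrt{jL^2\log(jLN)/N})$. Summing over $j\in[M-1]$ and using $\sum_{j=1}^{M-1}\sqrt{j}\le \tfrac{2}{3}M^{3/2}$, the total contribution of the discrepancies is
\[
O\!\left(\sqrt{\frac{M^3 L^2\log(MLN)}{N}}\right).
\]

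Finally, I would account for the fact that the theorem refers to the \emph{computationally efficient} predictor rather than the exact minimizer of~(\ref{eq:predictor}). By Lemma~\ref{lem:comput}, each $\hat{y}_j$ can be obtained up to additive accuracy $1/(L\sqrt{M})$ using $O(L\sqrt{M}\log M)$ mixed-ERM oracle calls; since $\ell$ is $L$-Lipschitz in its first argument, this translates into a per-round excess loss of at most $1/\sqrt{M}$, and hence an extra $\sqrt{M}$ additive term over the horizon. Combining the three contributions yields the claimed bound~(\ref{eq:reg2rad}).

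The only nontrivial step is the one already subsumed by Lemma~\ref{lem:dis2rad}, namely the symmetrization argument that compensates for the fact that $\hat{\mu}_N$ need not be close to $\mu$ in total variation; once that lemma is in hand, the remainder of the argument is essentially bookkeeping. Care only needs to be taken to justify $B\ge N/2$ and to track that the computational error accumulates additively under the $L$-Lipschitz loss; neither introduces a new technical obstacle.
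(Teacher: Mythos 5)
Your proof is correct and follows essentially the same route as the paper's (which simply cites Lemmas~\ref{lem:comput}, \ref{lem:main2}, \ref{lem:dis2rad}, and \ref{lem:rad2bound} and notes $j\le M$); you merely spell out the bookkeeping more explicitly, e.g.\ summing $\sqrt{j}$ rather than crudely bounding each term by its $j=M$ value, and both yield the same $O(\sqrt{M^3L^2\log(MLN)/N})$ order. The accounting of the computational error as an additive $\sqrt{M}$ via the $L$-Lipschitz property is also exactly the intended source of that term.
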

\begin{proof}
    The regret bound follows directly from Lemma~\ref{lem:comput}, Lemma~\ref{lem:main2} and Lemma~\ref{lem:dis2rad}. We then invoke Lemma~\ref{lem:rad2bound} to bound the discrepancies by noticing that $j\le M$.
\end{proof}
\begin{remark}
    Note that Theorem~\ref{thm:main2} shows that if $N\gg M^2\log M$ then the regret with side-information is reduced to the Rademacher complexities of $\mathcal{H}$, and thus matches the case when the distribution is known in advance. However, in reality such side-information is not available for the unknown distribution case, which can only be obtained from prior samples.
\end{remark}

\subsection{Proof of Theorem~\ref{thm:main1}: the Epoch Approach}
\label{sec:epoch}
We are now  equipped with all the technical tools to prove Theorem~\ref{thm:main1}, with the only missing ingredient of constructing the \emph{side-information}. For this purpose, we employ an \emph{epoch-based} approach, resembling those used in~\cite{lazaric2009hybrid,wu2023online}, but in a completely different context. We partition the time horizon into epochs, with  epoch $n$ of length $M(n)$. Let
$S(n)=\sum_{i=1}^{n-1} M(i)$
be the total time steps after $n-1$ epochs. We will use the features observed upto time $S(n)$ as the side-information introduces in Section~\ref{sec:side} and apply the predictor constructed in (\ref{eq:predictor}) to make the prediction during the $n$th epoch.
\newpage
\begin{figure}[h]
    \centering
\begin{tikzpicture}

\draw[->] (0,0) -- (12,0);

\foreach \x in {0,2,7,11} {
  \draw (\x,-0.2) -- (\x,0.2);
}

\node[align=center, below] at (1, -0.2) {Epoch $1$};
\node[align=center, below] at (4.5, -0.2) {...};
\node[align=center, below] at (9, -0.2) {Epoch $n$};
\node[align=center, below] at (11.5, -0.2) {...};



\draw [decorate,decoration={brace,amplitude=10pt}] 
(0,0.3) -- (7,0.3) node [black,midway,yshift=0.6cm] {$N:=S(n)$};

\draw [decorate,decoration={brace,amplitude=10pt}] 
(7,0.3) -- (11,0.3) node [black,midway,yshift=0.6cm] {$M(n)$};


\end{tikzpicture}
\end{figure}
To this end, our main technical part is to \emph{optimize} the epoch length $M(n)$ that balances the trade-off in (\ref{eq:reg2rad}) and achieving the minimal total regret. Let $\Phi$ be the predictor derived from (\ref{eq:predictor}), which we write  as $\Phi(\x_{-N+1}^0,\x^j,y^{j-1})$ for the side-information $\x_{-N+1}^0$, features $\x^j$ and labels $y^{j-1}$ observed thus far. We define the following \emph{epoch} predictor $\Psi$: for any epoch $n$ and time step $j$ during such epoch, we set
\begin{equation}
    \label{eq:eppredictor}
    \Psi(\x^{S(n)+j},y^{S(n)+j-1})=\Phi\left(\x^{S(n)},\x_{S(n)+1}^{S(n)+j},y_{S(n)+1}^{S(n)+j-1}\right).
\end{equation}
Let $S^{-1}(T)$ be the largest number $n$ such that  $S(n)<T$. The following lemma upper bounds the hybrid minimax regret (\ref{eq:hbreg}) of $\Psi$ using the regrets with side information (\ref{eq:side}) incurred by $\Phi$. Note that this is \emph{not} immediately obvious since we have \emph{reused} the side-information among different epochs.
\begin{lemma}
\label{lem:side2univer}
    For any $\mathcal{H}$ and convex $L$-Lipschitz loss $\ell$, we have
    $$\tilde{r}_T(\mathcal{H},\Psi)\le \sum_{n=1}^{S^{-1}(T)}\tilde{r}_{M(n),S(n)}^{\mathsf{side}}(\mathcal{H},\Phi).$$
\end{lemma}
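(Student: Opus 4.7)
The plan is to split the total regret into per-epoch regrets via an $\inf$-of-a-sum inequality, and then identify each per-epoch contribution with a side-information regret of $\Phi$ on the appropriate horizon.

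First, for any realization I would use that a single hypothesis must be shared across all epochs, so $\inf_h\sum_{t=1}^T\ell(h(\x_t),y_t)\ge\sum_{n=1}^{S^{-1}(T)}\inf_h\sum_{t\in E_n}\ell(h(\x_t),y_t)$ where $E_n=\{S(n)+1,\dots,\min(S(n)+M(n),T)\}$. Rearranging, the total regret is pointwise bounded by $\sum_n R_n$ with $R_n:=\sum_{t\in E_n}\ell(\hat y_t,y_t)-\inf_h\sum_{t\in E_n}\ell(h(\x_t),y_t)$. Passing this through the interleaved $\sup/\mathbb{E}$ operators of~(\ref{eq:hbreg}), using $\sup(A+B)\le\sup A+\sup B$ and linearity of $\mathbb{E}$, yields
\[
\tilde r_T(\mathcal{H},\Psi)\le\sum_{n=1}^{S^{-1}(T)}\sup_\mu \mathbb{E}_{\x_1}\sup_{y_1}\mathbb{E}_{\hat y_1}\cdots\mathbb{E}_{\x_T}\sup_{y_T}\mathbb{E}_{\hat y_T}[R_n].
\]

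Next I would collapse each per-epoch expression. The structural fact powering the argument is that, by the definition~(\ref{eq:eppredictor}), during epoch $n$ the prediction $\hat y_{S(n)+j}$ depends only on $\x^{S(n)}$ (the side-information handed to $\Phi$) and the current epoch's $\x_{S(n)+1}^{S(n)+j},y_{S(n)+1}^{S(n)+j-1}$---in particular \emph{not} on the prior-epoch labels $y^{S(n)}$ or prior predictions $\hat y^{S(n)}$. A straightforward backward induction over the nested operators then shows that (a) $R_n$ is a function of $\x^{S(n)+M(n)}$, $y_{S(n)+1}^{S(n)+M(n)}$, $\hat y_{S(n)+1}^{S(n)+M(n)}$ only, and (b) every $\sup_{y_t}$ with $t\le S(n)$ or $t>S(n)+M(n)$, every $\mathbb{E}_{\hat y_t}$ with $t\le S(n)$ or $t>S(n)+M(n)$, and every $\mathbb{E}_{\x_t}$ with $t>S(n)+M(n)$ acts on a quantity it does not influence and drops out. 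Collecting $\mathbb{E}_{\x_1}\cdots\mathbb{E}_{\x_{S(n)}}=\mathbb{E}_{\x^{S(n)}\sim\mu^{S(n)}}$ and relabelling $\x'^{\,0}_{-S(n)+1}:=\x^{S(n)}$ and $(\x'_j,y'_j,\hat y'_j):=(\x_{S(n)+j},y_{S(n)+j},\hat y_{S(n)+j})$ for $j\in[M(n)]$, the surviving expression matches~(\ref{eq:side}) verbatim with $M=M(n)$, $N=S(n)$, and predictor $\Phi$, hence equals $\tilde r_{M(n),S(n)}^{\mathsf{side}}(\mathcal{H},\Phi)$. (For the last epoch, truncated at $T$, this is $\le$ rather than $=$ by monotonicity, since we only upper bound by the full-epoch side-information regret.)

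The delicacy, flagged right before the statement, is that the prefixes $\x^{S(n)}$ used as side-information in successive epochs are \emph{nested and highly correlated}: we reuse the observed feature history instead of drawing a fresh block each epoch, so a naive "independent copies" argument is unavailable. The resolution is precisely the design choice in~(\ref{eq:eppredictor}) to let $\Psi$ ignore $y^{S(n)}$ and $\hat y^{S(n)}$: all influence of earlier-epoch adversarial labels and random predictions on epoch $n$ is funnelled through $\x^{S(n)}$, whose joint law is $\mu^{S(n)}$ irrespective of the adversary's prior actions. This decoupling is exactly what validates the "trivial removal" of the earlier $\sup/\mathbb{E}$ operators in the backward induction and lets each per-epoch expression be honestly a side-information game on an i.i.d.\ sample of size $S(n)$. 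Summing over $n$ then yields the claimed bound.
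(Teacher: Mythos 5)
Your proposal is correct and follows essentially the same route as the paper's proof: the same $\sup(A+B)\le\sup A+\sup B$ decomposition of the regret across epochs, followed by the same key observation that within epoch $n$ the predictions depend only on $\x^{S(n)}$ and the current epoch's data, so the earlier $\sup_{y_t}$ and $\mathbb{E}_{\hat y_t}$ operators collapse and the surviving expression is exactly the side-information regret $\tilde r^{\mathsf{side}}_{M(n),S(n)}(\mathcal{H},\Phi)$. Your explicit discussion of why reusing the nested feature prefixes as side-information is harmless is a fair elaboration of the point the paper only flags before the lemma statement.
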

\begin{proof}
    Define the operator $\mathbb{Q}_i^j\equiv \mathbb{E}_{\x_i}\sup_{y_i}\mathbb{E}_{\hat{y}_i}\cdots \mathbb{E}_{\x_j}\sup_{y_j}\mathbb{E}_{\hat{y}_j}$, where $\hat{y}_t\sim \Psi(\x^t,y^{t-1})$ for all $t\in [T]$. We have (truncate the last $S(n+1)$ above $T$ if necessary):
   \begin{align*}
    \tilde{r}_T(\mathcal{H},\Psi)&=\mathbb{Q}_1^T\sup_{h\in \mathcal{H}}\left[\sum_{n=1}^{S^{-1}(T)}\sum_{j=S(n)+1}^{S(n+1)}\ell(\hat{y}_j,y_j)-\ell(h(\x_j),y_j)\right]\\
            &\overset{(a)}{\le} \sum_{n=1}^{S^{-1}(T)} \mathbb{Q}_1^T\sup_{h\in \mathcal{H}}\left[\sum_{j=S(n)+1}^{S(n+1)}\ell(\hat{y}_j,y_j)-\ell(h(\x_j),y_j)\right]\\
            &\overset{(b)}{=}\sum_{n=1}^{S^{-1}(T)}\mathbb{E}_{\x^{S(n)}}\mathbb{Q}_{S(n)+1}^{S(n+1)}\sup_{h\in \mathcal{H}}\left[\sum_{j=S(n)+1}^{S(n+1)}\ell(\hat{y}_j,y_j)-\ell(h(\x_j),y_j)\right]\\
            &\overset{(c)}{=}\sum_{n=1}^{S^{-1}(T)}\tilde{r}_{M(n),S(n)}^{\mathsf{side}}(\mathcal{H},\Phi),
    \end{align*}
    where $(a)$ follows by $\sup(A+B)\le \sup A+\sup B$ and linearity of expectation; $(b)$ follows since $\hat{y}_j$ depends only on $\x^j$ and $y_{S(n)}^j$ for $j\in (S(n),S(n+1)]$; $(c)$ follows by definition.
\end{proof}


\begin{proof}[Proof of Theorem~\ref{thm:main1}]
    Assume $\mathsf{Rad}_T(\mathcal{H})\le O(T^q)$ for some $q\in [\frac{1}{2},1]$. By Theorem~\ref{thm:main2} and $M(n),S(n)\le T$ we have
    $$\tilde{r}_{M(n),S(n)}^{\mathsf{side}}(\mathcal{H},\Phi)\le O\left(LM(n)^q+\sqrt{\frac{M(n)^3L^2\log(LT^2)}{S(n)}}\right).$$
    Let $M(n)=n^{\alpha}$ for some $\alpha>0$ to be determined later. We have $S(n)=\sum_{i=1}^{n-1} i^{\alpha}=\Theta(n^{\alpha+1})$ by integration approximation, and $S^{-1}(T)\le O(T^{1/(\alpha+1)})$. This implies that
    $\tilde{r}_{M(n),S(n)}^{\mathsf{side}}(\mathcal{H},\Phi)\le O(Ln^{\alpha q}+L\sqrt{\log(LT^2)}n^{\alpha-\frac{1}{2}}).$
    By Lemma~\ref{lem:side2univer} and integration approximation again, we conclude
    \begin{equation}
    \label{eq:thm2proof}
        \tilde{r}_T(\mathcal{H},\Psi)\le O\left(LT^{\frac{\alpha q+1}{\alpha+1}}+L\sqrt{\log(LT^2)}T^{\frac{\alpha+\frac{1}{2}}{\alpha+1}}\right).
    \end{equation}
    Optimizing $\arg\min_{\alpha>0}\max\{\frac{\alpha q+1}{\alpha+1},\frac{\alpha+\frac{1}{2}}{\alpha+1}\}$, we find (\ref{eq:thm2proof}) is minimized when taking $\alpha=\frac{1}{2(1-q)}$. Plugging back to (\ref{eq:thm2proof}), we find
    $\tilde{r}_T(\mathcal{H},\Psi)\le O\left(L\sqrt{\log(LT)}T^{\frac{2-q}{3-2q}}\right).$
    This completes the proof of the first part. The second and third parts follow by the facts that $\mathsf{Rad}_T(\mathcal{H})\le O(\sqrt{\mathsf{VC}(\mathcal{H})T})$ for finite-VC class~\citep[Example 5.24]{wainwright2019high}, and $\mathsf{Rad}_T(\mathcal{H})\le \tilde{O}(T^{\max\{\frac{1}{2},\frac{p-1}{p}\}})$ for classes with $\alpha$-fat shattering dimension of order $\alpha^{-p}$~\citep{block2022smoothed}. This completes the proof and the big-O notations and $M(n)\le S(n)/2$ are justified by noting that $\alpha\ge 1$ since $q\ge\frac{1}{2}$.
\end{proof}

\subsection{Tighter Bounds for Special Classes}
\label{sec:special}

As demonstrated in Section~\ref{sec:side}, the main technical obstacle for analyzing the regret is to upper bound the discrepancies between $\tilde{R}_j$ and $R_j$ as in Lemma~\ref{lem:main2}. It was shown in Lemma~\ref{lem:dis2rad} that such discrepancies can be upper bounded by the Rademacher complexity of the class $\mathcal{G}_{\z_j}$ in (\ref{eq:fclass}). We demonstrate in this section how to leverage the \emph{structural} information of $\mathcal{G}_{\z_j}$ leading to tighter regret bounds for certain special classes, when compared to the general bounds from Theorem~\ref{thm:main1}.

\paragraph{Binary valued classes.} Let $\mathcal{H}\subset \{0,1\}^{\mathcal{X}}$ be a binary valued class and $\ell(\hat{y},y)=|\hat{y}-y|$. For any given $\z_j$ (assume, w.l.o.g., $\epsilon_{j+1}=1$) and $y^j\in \{0,1\}^j$, the function $f_{\z_j,y^j}$ can be expressed as
    $f_{\z_j,y^j}(\x)=\sup_{h} \{2h(\x)+F(h)\}$ (see definition in (\ref{eq:ffunc})),
    where $F(h)$ is a discrete valued function taking values in $[-2M,2M]$. Define
    $\mathcal{H}^0=\left\{h\in \mathcal{H}:F(h)=\sup_{h'\in \mathcal{H}}F(h')\right\}\text{ and }\mathcal{H}^1=\left\{h\in \mathcal{H}:F(h)=\sup_{h'\in \mathcal{H}}F(h')-1\right\}.$
    Let $h^0(\x)=\sup_{h\in \mathcal{H}^0}\{h(\x)\}$, $h^1(\x)=\sup_{h\in \mathcal{H}^1}\{h(\x)\}$ and $\hat{h}=\arg\max_{h\in \mathcal{H}}F(h)$. The following \emph{structural} characterization of $f_{\z_j,y^j}$ holds (see Appendix~\ref{sec:omit} for proof):
    \begin{fact}
    \label{fact2}
    For any $\x$, if $h^0(\x)=1$ then $f_{\z_j,y^j}(\x)=F(\hat{h})+2$; if $h^0(\x)=0$ and $h^1(\x)=1$ then $f_{\z_j,y^j}(\x)=F(\hat{h})+1$ ; else $f_{\z_j,y^j}(\x)=F(\hat{h})$.
    \end{fact}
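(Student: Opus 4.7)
The plan is to do a simple case analysis on the values of $h^0(\x)$ and $h^1(\x)$, driven by the fact that in the binary setting $F(h)$ is integer-valued. The key preliminary observation is that with $\mathcal{H}\subset\{0,1\}^{\mathcal{X}}$, $y^j\in\{0,1\}^j$, and absolute loss (so $L=1$), the cumulative loss $L_j^h=\sum_{i=1}^j |h(\x_i)-y_i|$ is a nonnegative integer and $2\sum_{i=j+2}^M\epsilon_i h(\tilde{\x}_i)$ is an even integer, hence $F(h)\in\mathbb{Z}$ and the supremum $F^\star:=F(\hat h)$ is attained. Writing $f_{\z_j,y^j}(\x)=\sup_h\{2h(\x)+F(h)\}$ and using $\epsilon_{j+1}=1$, the fact that $h(\x)\in\{0,1\}$ means each hypothesis contributes exactly $F(h)$ or $F(h)+2$ to the sup.

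I would then partition $\mathcal{H}$ into three groups: $\mathcal{H}^0$, $\mathcal{H}^1$, and the remainder for which $F(h)\le F^\star-2$ by integrality. Any hypothesis in the remainder contributes at most $2+(F^\star-2)=F^\star$ to the sup, so it never strictly exceeds the value delivered by $\hat h\in\mathcal{H}^0$. Therefore the value of $f_{\z_j,y^j}(\x)$ is determined entirely by whether some hypothesis in $\mathcal{H}^0$ or $\mathcal{H}^1$ labels $\x$ as $1$, which is exactly what $h^0(\x)$ and $h^1(\x)$ record.

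The three cases of the fact then follow immediately: if $h^0(\x)=1$, some $h\in\mathcal{H}^0$ achieves $F^\star+2$, the largest conceivable value, giving case one; if $h^0(\x)=0$ and $h^1(\x)=1$, then every $h\in\mathcal{H}^0$ contributes exactly $F^\star$, some $h\in\mathcal{H}^1$ contributes $(F^\star-1)+2=F^\star+1$, while the remainder is $\le F^\star$, so the sup equals $F^\star+1$; and if both $h^0(\x)=0$ and $h^1(\x)=0$, then every hypothesis in $\mathcal{H}^0\cup\mathcal{H}^1$ contributes at most $F^\star$ (since its value on $\x$ is $0$), and the sup equals $F^\star$ (attained by $\hat h$). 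There is no real obstacle in the argument; the whole proof hinges on the integrality of $F$, which is what makes the top two layers $\mathcal{H}^0$ and $\mathcal{H}^1$ exhaustive among hypotheses that could push $f_{\z_j,y^j}(\x)$ above $F^\star$.
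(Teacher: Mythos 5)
Your proof is correct and follows essentially the same route as the paper's: a three-way case analysis on $h^0(\x)$, $h^1(\x)$, using the integrality of $F$ (so that every $h\notin\mathcal{H}^0\cup\mathcal{H}^1$ has $F(h)\le F(\hat h)-2$ and contributes at most $F(\hat h)$ to the supremum) together with the observation that each $h$ contributes either $F(h)$ or $F(h)+2$. The paper merely spells out the matching upper and lower bounds in each case slightly more explicitly.
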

\begin{theorem}
\label{cor1}
    Let $\mathcal{H}\subset \{0,1\}^{\mathcal{X}}$, $\mathcal{F}^{\mathsf{u}}=\{f_{\mathcal{H}'}(\x)=\sup_{h\in \mathcal{H}'}\{h(\x)\}:\mathcal{H}'\subset \mathcal{H}\}$, $\mathcal{F}^{\mathsf{i}}=\{f_{\mathcal{H}'}(\x)=\inf_{h\in \mathcal{H}'}\{h(\x)\}:\mathcal{H}'\subset \mathcal{H}\}$ be two classes of functions and $\ell$ be the absolute loss. Then there exists an oracle-efficient predictor $\Phi$ satisfying
    $\tilde{r}_T(\mathcal{H},\Phi)\le O(\sqrt{\max\{\mathsf{VC}(\mathcal{F}^{\mathsf{u}}), \mathsf{VC}(\mathcal{F}^{\mathsf{i}})\}T})$.
\end{theorem}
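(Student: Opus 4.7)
The plan is to leverage the structural characterization in Fact~\ref{fact2} (and its analog for $\epsilon_{j+1}=-1$) to obtain a \emph{$j$-independent} bound on the Rademacher complexity of the class $\mathcal{G}_{\z_j}$ from (\ref{eq:fclass}), thereby tightening the discrepancy term in Lemma~\ref{lem:main2} and hence the overall regret.

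First I would establish the $\epsilon_{j+1}=-1$ analog of Fact~\ref{fact2} by a symmetric case analysis (now with the \emph{infimum}, rather than the supremum, over the argmax and second-best sets of $F$ playing the key role). Combined with Fact~\ref{fact2} itself, this yields the decomposition
\[
f_{\z_j,y^j}(\x) \;=\; c_{\z_j,y^j}\;+\;\epsilon_{j+1}\cdot\bigl(g_1^{y^j}(\x)+g_2^{y^j}(\x)\bigr),
\]
where $c_{\z_j,y^j}$ is constant in $\x$, and $g_1^{y^j},g_2^{y^j}\in \mathcal{F}^{\mathsf{u}}$ when $\epsilon_{j+1}=+1$ (take $g_1=h^0$ and $g_2=\max(h^0,h^1)$ as in Fact~\ref{fact2}), while $g_1^{y^j},g_2^{y^j}\in \mathcal{F}^{\mathsf{i}}$ when $\epsilon_{j+1}=-1$ (since the argmax set is now characterized by infima over subsets of $\mathcal{H}$). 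The constant $c_{\z_j,y^j}$ cancels in $g_{\z_j,y^j}(\x,\x')=f_{\z_j,y^j}(\x')-f_{\z_j,y^j}(\x)$, so each $g_{\z_j,y^j}$ is a signed sum of two symmetric differences of members of $\mathcal{F}^{\mathsf{u}}\cup \mathcal{F}^{\mathsf{i}}$.

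Second, I would apply $\mathbb{E}\sup(A+B)\le \mathbb{E}\sup A+\mathbb{E}\sup B$ to split the two-sample Rademacher sum of Lemma~\ref{lem:dis2rad} into two pieces, each bounded by a two-sample Rademacher complexity of $\mathcal{F}^{\mathsf{u}}$ or $\mathcal{F}^{\mathsf{i}}$ (since as $y^j$ varies, $g_\ell^{y^j}$ stays inside the appropriate class). A standard VC-based Rademacher bound then gives
\[
\sup_{\z_j}\frac{1}{B}\,\mathsf{Rad}_B(\mathcal{G}_{\z_j})\;\le\; O\!\left(\sqrt{V/B}\right),\qquad V:=\max\{\mathsf{VC}(\mathcal{F}^{\mathsf{u}}),\mathsf{VC}(\mathcal{F}^{\mathsf{i}})\},
\]
and this bound crucially does not depend on $j$, in contrast with the $\sqrt{j\log(jLB)}$ factor of Lemma~\ref{lem:rad2bound} that arose from covering $[0,1]^j$. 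Plugging this into Lemma~\ref{lem:main2}, using the elementary bound $\sum_{j=1}^{M-1} 1/\sqrt{N-M+j+1}\le 2\sqrt{M}$ (valid whenever $M\le N$), and $\tilde R_0\le 2\,\mathsf{Rad}_M(\mathcal{H})\le O(\sqrt{VM})$ (valid because $\mathcal{H}\subset \mathcal{F}^{\mathsf{u}}$ via singleton subsets, so $\mathsf{VC}(\mathcal{H})\le V$), I obtain the sharpened per-epoch bound $\tilde{r}_{M,N}^{\mathsf{side}}(\mathcal{H},\Phi)\le O(\sqrt{VM})$ for all $M\le N$.

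Finally I would plug this into the epoch framework of Section~\ref{sec:epoch} using a near-doubling schedule (e.g., $M(n)\propto r^n$ with a growth rate $r\in(1,2)$ chosen so that $M(n)\le S(n)$ for every $n$). Then $S^{-1}(T)=O(\log T)$, and Lemma~\ref{lem:side2univer} combined with a geometric summation yields $\tilde r_T(\mathcal{H},\Psi)\le \sum_n O(\sqrt{V\,M(n)})=O(\sqrt{VT})$. Oracle-efficiency is inherited from the second part of Lemma~\ref{lem:comput}, which computes $\hat y_j$ in just two regular ERM calls per round under absolute loss and binary $\mathcal{H}$. The main technical obstacle is carrying out the $\epsilon_{j+1}=-1$ analog of Fact~\ref{fact2} cleanly so that the resulting structured functions lie in $\mathcal{F}^{\mathsf{i}}$ rather than $\mathcal{F}^{\mathsf{u}}$; this asymmetry is precisely why $\mathsf{VC}(\mathcal{F}^{\mathsf{i}})$ must appear alongside $\mathsf{VC}(\mathcal{F}^{\mathsf{u}})$ in the final bound.
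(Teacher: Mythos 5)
Your proposal is correct and follows the same overall skeleton as the paper's proof: exploit the structural characterization of Fact~\ref{fact2} (and its symmetric $\epsilon_{j+1}=-1$ analog with $\mathcal{F}^{\mathsf{i}}$) to get a $j$-independent $O(\sqrt{V/B})$ bound on $\frac{1}{B}\mathsf{Rad}_B(\mathcal{G}_{\z_j})$, feed it through Lemmas~\ref{lem:main2} and~\ref{lem:dis2rad}, and sum over a geometric epoch schedule. The one genuine difference is how that Rademacher bound is obtained. The paper writes $f_{\z_j,y^j}=\mathcal{T}(h^0,h^1)+c_{\z_j,y^j}$, builds a $\gamma$-cover of the composite class from Haussler's packing bound for $\mathcal{F}^{\mathsf{u}}$, and invokes Dudley chaining. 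You instead observe the cleaner additive identity $f_{\z_j,y^j}-c_{\z_j,y^j}=h^0+\max(h^0,h^1)$ (and its negated $\mathcal{F}^{\mathsf{i}}$ counterpart), which is easily verified against the three cases of Fact~\ref{fact2} and keeps both summands inside $\mathcal{F}^{\mathsf{u}}$ since $\max(h^0,h^1)=\sup_{h\in\mathcal{H}^0\cup\mathcal{H}^1}h$; sub-additivity of the Rademacher average then reduces everything to the standard VC bound for $\mathcal{F}^{\mathsf{u}}$ (resp.\ $\mathcal{F}^{\mathsf{i}}$), with the constant cancelling in the two-sample differences. This is more elementary — it avoids Haussler's sphere-packing bound and chaining entirely — at the cost of a slightly larger absolute constant; the remaining steps (the $\sum_j B_j^{-1/2}\le 2\sqrt{M}$ summation, $\mathcal{H}\subset\mathcal{F}^{\mathsf{u}}$ via singletons, the near-doubling schedule ensuring $M(n)=O(S(n))$, and oracle-efficiency from the second part of Lemma~\ref{lem:comput}) match the paper's treatment and are all sound.
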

\begin{proof}
         Assume, w.o.l.g., $\epsilon_{j+1}=1$. The functions $h^0,h^1$ as in Fact~\ref{fact2} are within $\mathcal{F}^{\mathsf{u}}$. For \emph{any} $\x^{2N}\in \mathcal{X}^{2N}$ and $\hat{\mu}$ uniform over $\x^{2N}$, there exists a $\gamma$-cover $\mathcal{C}_{\gamma}$ of $\mathcal{F}^{\mathsf{u}}$ under distance $d_{\hat{\mu}}(f_1,f_2)\overset{\text{def}}{=}\mathrm{Pr}_{\x\sim \hat{\mu}}[f_1(\x)\not=f_2(\x)]$ such that $|\mathcal{C}_{\gamma}|\le O(\frac{1}{\gamma^{\mathsf{VC}(\mathcal{F}^{\mathsf{u}})}})$~\citep{haussler1995sphere}. By Fact~\ref{fact2}, there exists a function $\mathcal{T}:(\mathcal{F}^{\mathsf{u}})^2\rightarrow \{0,1,2\}^{\mathcal{X}}$ such that for \emph{any} $f_{\z_j,y^j}$, there exist $h^0,h^1\in \mathcal{F}^{\mathsf{u}}$ such that $f_{\z_j,y^j}(\x)=\mathcal{T}(h^0(\x),h^1(\x))+c_{\z_j,y^j}$, where $c_{\z_j,y^j}=F(\hat{h})$ as in Fact~\ref{fact2}. Therefore, the function class $\mathcal{C}'\overset{\text{def}}{=}\{\mathcal{T}(h^0,h^1):h^0,h^1\in \mathcal{C}_{\gamma}\}$ forms a $2\gamma$-cover of $\{(f_{\z_j,y^j}(\x)-c_{\z_j,y^j}):y^j\in [0,1]^j\}$ under distance $d_{\hat{\mu}}(f_1,f_2)$ and $|\mathcal{C}'|\le O(\frac{1}{\gamma^{2\mathsf{VC}(\mathcal{F}^{\mathsf{u}})}})$. This implies that the function class $\mathcal{C}''\overset{\text{def}}=\{g(\x',\x)=f(\x')-f(\x):f\in \mathcal{C}',~(\x',\x)\in \mathcal{X}^2\}$ forms a $4\gamma$-cover of $$\mathcal{G}_{\z_j}=\{g_{\z_j,y^j}(\x',\x)=f_{\z_j,y^j}(\x')-f_{\z_j,y^j}(\x):y^j\in [0,1]^j,~(\x',\x)\in \mathcal{X}^2\}$$ under distance $d_{\hat{\nu}}(g_1,g_2)=\mathrm{Pr}_{(\x',\x)\sim\hat{\nu}}[g_1(\x',\x)\not=g_2(\x',\x)]$ for any distribution $\hat{\nu}$ uniform over a fixed \emph{pairing} of $\x^{2N}$ and $|\mathcal{C}''|\le O(\frac{1}{\gamma^{2\mathsf{VC}(\mathcal{F}^{\mathsf{u}})}})$. We have by the chaining argument~\cite[Example 5.24]{wainwright2019high} that $\mathsf{Rad}_N(\mathcal{G}_{\z_j})\le O(\sqrt{\mathsf{VC}(\mathcal{F}^{\mathsf{u}}) N})$. This implies by Lemma~\ref{lem:main2}~\&~\ref{lem:dis2rad} that 
         \begin{equation}
         \label{eq:thm13proof}
             \tilde{r}_{M,N}^{\mathsf{side}}(\mathcal{H},\Phi)\le O\left(\sqrt{\mathsf{VC}(\mathcal{H})M}+\frac{M\sqrt{\mathsf{VC}(\mathcal{F}^{\mathsf{u}})}}{\sqrt{N}}\right).
         \end{equation}
          Taking $M(n)=1.5^n$ in (\ref{eq:eppredictor}), we have $N=S(n)=2\cdot 1.5^n-3$, which ensures $M(n)\le S(n)/2+O(1)$ (as required for (\ref{eq:thm13proof}) to hold). Invoking Lemma~\ref{lem:side2univer}, we conclude
         $$\tilde{r}_T(\mathcal{H},\Psi)\le O(\sqrt{\mathsf{VC}(\mathcal{H})}+\sqrt{\mathsf{VC}(\mathcal{F}^{\mathsf{u}})})\sum_{n=1}^{\lceil\log_{1.5}(T)\rceil}1.5^{n/2}\le O(\sqrt{\mathsf{VC}(\mathcal{F}^{\mathsf{u}})T}),$$
         where the last inequality follows by $\mathcal{H}\subset \mathcal{F}^{\mathsf{u}}$. This completes the proof and the case for $\epsilon_{j+1}=-1$ is symmetric with $\mathcal{F}^{\mathsf{i}}$.
\end{proof}

Note that for the threshold functions $\mathcal{H}=\{1\{x\ge a\}:a,x\in [0,1]\}$ we have $\mathcal{F}^{\mathsf{u}}=\mathcal{F}^{\mathsf{i}}=\mathcal{H}$. Theorem~\ref{cor1} implies an oracle efficient $O(\sqrt{T})$ regret, which matches the information-theoretical lower bound and is tighter than the covering-based $O(\sqrt{T\log T})$ bound implied by~\cite{lazaric2009hybrid}. Another example is the class of indicators of intervals with bounded length $\{1\{x\in [a,b]\}:b-a\ge \gamma, [a,b]\subset [0,1]\}$, for which we have $\mathsf{VC}(\mathcal{F}^{\mathsf{i}})=2$ and $\mathsf{VC}(\mathcal{F}^{\mathsf{u}})\le O(\frac{1}{\gamma})$.

\paragraph{Lipschitz functions.} Let $\mathcal{X}=[0,1]^d$ and $\mathcal{H}\subset [0,1]^{\mathcal{X}}$ be the class of \emph{all} $1$-Lipschitz functions under $L_{\infty}$ norm. Assume $\ell(\hat{y},y)=|\hat{y}-y|$ is the absolute loss. Let $\mu$ and $\hat{\mu}_N$ be the true and empirical distributions, respectively, as in Section~\ref{sec:side}. By Fact~\ref{fact1} and assuming $i.i.d.$ generation of the $\tilde{\x}$s, we have
    $\mathbb{E}_{\x_{-N+1}^j}\sup_{y^j}(\tilde{R}_j-R_j)\le \mathbb{E}_{\x_{-N+1}^0}\sup_{y^j,\z_j}(\mathbb{E}_{\x\sim \mu}[f_{\z_j,y^j}(\x)]-\mathbb{E}_{\x\sim \hat{\mu}_N}[f_{\z_j,y^j}(\x)])$.
By the same argument as Proposition~\ref{prop1} (second part) and Lipschitz property of $h\in\mathcal{H}$, we have:
\begin{fact}
    \label{factlip}
    For all $\z_j,y^j$ and $\x,\x'$, $|f_{\z_j,y^j}(\x)-f_{\z_j,y^j}(\x')|\le 2||\x-\x'||_{\infty}$.
\end{fact}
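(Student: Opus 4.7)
The plan is to follow the hint in the paper and mirror the proof of the second part of Proposition~\ref{prop1}, exploiting the fact that the entire $\x$-dependence of $f_{\z_j,y^j}(\x)$ is concentrated in a single term, and that each $h\in\mathcal{H}$ is $1$-Lipschitz under $\|\cdot\|_\infty$.

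First, recall from~(\ref{eq:ffunc}) that
\[f_{\z_j,y^j}(\x)=\sup_{h\in\mathcal{H}}\left\{2L\epsilon_{j+1}h(\x)+2L\sum_{i=j+2}^M\epsilon_ih(\tilde{\x}_i)-L_j^h\right\}.\]
Since the loss is absolute loss we have $L=1$, and for each fixed $h$ the only summand that depends on the argument $\x$ is $2\epsilon_{j+1}h(\x)$. Writing $g_h(\x)$ for the quantity inside the supremum and using the elementary inequality $|\sup_h g_h(\x)-\sup_h g_h(\x')|\le \sup_h |g_h(\x)-g_h(\x')|$, the problem reduces to controlling $|g_h(\x)-g_h(\x')|=2|\epsilon_{j+1}|\,|h(\x)-h(\x')|=2|h(\x)-h(\x')|$ uniformly in $h\in\mathcal{H}$.

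Since $\mathcal{H}$ is exactly the class of $1$-Lipschitz functions $[0,1]^d\to[0,1]$ under $\|\cdot\|_\infty$, every $h\in\mathcal{H}$ satisfies $|h(\x)-h(\x')|\le \|\x-\x'\|_\infty$, so $|g_h(\x)-g_h(\x')|\le 2\|\x-\x'\|_\infty$ for all $h\in\mathcal{H}$. Taking the supremum over $h$ yields the claim, and the bound is uniform in the auxiliary parameters $\z_j$ and $y^j$ (which only contribute $\x$-independent terms to $g_h$). There is essentially no obstacle here; the fact is a direct consequence of the structural observation that $\x$ appears in only one summand and the Lipschitz hypothesis on $\mathcal{H}$, which is exactly the same pattern used in Proposition~\ref{prop1} for the $y^j$-Lipschitz estimate.
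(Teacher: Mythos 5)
Your proof is correct and matches the paper's intended argument: the paper simply invokes "the same argument as Proposition~\ref{prop1} (second part)" together with the $1$-Lipschitz property of each $h\in\mathcal{H}$, which is exactly your route of isolating the single $\x$-dependent summand $2\epsilon_{j+1}h(\x)$ (with $L=1$ for absolute loss) and passing the pointwise bound through the supremum via $|\sup_h g_h(\x)-\sup_h g_h(\x')|\le\sup_h|g_h(\x)-g_h(\x')|$. No gaps.
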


\begin{theorem}
\label{th-lipschitz}
    Let $\mathcal{H}$ and $\ell$ be as above. Then, there exists an oracle-efficient predictor $\Phi$ such that $\tilde{r}_{T}(\mathcal{H},\Phi)\le \tilde{O}(T^{\max\{\frac{1}{2},\frac{d-1}{d}\}})$, and this bound is tight upto poly-logarithmic factors.
\end{theorem}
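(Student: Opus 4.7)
The plan is to replace the vacuous total-variation control of the discrepancy $\tilde R_j-R_j$ (which is useless without structure on $\mu$) with a $W_1$-\emph{Wasserstein} control that, crucially, admits dimension-dependent empirical convergence rates on $[0,1]^d$. I would reuse the predictor (\ref{eq:predictor}) but sample the hallucinated features $\tilde{\x}_{j+1}^M$ i.i.d.\ \emph{with replacement} from $\hat\mu_N$; the approx-admissibility of Lemma~\ref{lem:admiss} and the telescoping of Lemma~\ref{lem:main2} carry over essentially verbatim to this variant (the without-replacement sampling was only needed for the symmetrization argument of Lemma~\ref{lem:dis2rad}, which is bypassed here). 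Under the with-replacement variant, Fact~\ref{fact1} gives $\tilde R_j-R_j=\mathbb{E}_{\x\sim\mu}[f_{\z_j,y^j}(\x)]-\mathbb{E}_{\x\sim\hat\mu_N}[f_{\z_j,y^j}(\x)]$, and Fact~\ref{factlip} says each $f_{\z_j,y^j}$ is $2$-Lipschitz in $\|\cdot\|_\infty$ uniformly in $(\z_j,y^j)$. Kantorovich--Rubinstein duality therefore yields
\[
    \sup_{y^j}\bigl(\tilde R_j-R_j\bigr)\;\le\;2\,W_1(\mu,\hat\mu_N).
\]

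Next I would combine this with two standard ingredients: (i) the empirical-Wasserstein rate on the unit cube (Dudley, Weed--Bach), $\mathbb{E}[W_1(\mu,\hat\mu_N)]\le\psi_d(N)$ with $\psi_d(N)=O(N^{-1/2})$ for $d=1$, $\tilde O(N^{-1/2})$ for $d=2$, and $O(N^{-1/d})$ for $d\ge 3$; and (ii) the Dudley-chaining Rademacher estimate $\mathsf{Rad}_M(\mathcal{H})\le\tilde O(M^{\max\{1/2,(d-1)/d\}})$ for the $1$-Lipschitz class (which follows from its $O(\epsilon^{-d})$ $L_\infty$-metric entropy). Plugging both into the analogue of Theorem~\ref{thm:main2} yields a per-epoch side-information regret of
\[
    \tilde r_{M,N}^{\mathsf{side}}(\mathcal{H},\Phi)\;\le\;\tilde O\bigl(M^{\max\{1/2,(d-1)/d\}}+M\cdot\psi_d(N)\bigr).
\]
To balance the two terms I would take the epoch lengths to grow \emph{geometrically}, $M(n)=2^n$, so that $N=S(n)=2^n-1\ge M(n)/2$ and $M(n)\cdot\psi_d(S(n))$ is absorbed (up to logarithmic factors) into the Rademacher term for every $d\ge 1$. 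Lemma~\ref{lem:side2univer} then bounds the total regret by a geometric sum that is dominated by its final term, delivering the claimed $\tilde O(T^{\max\{1/2,(d-1)/d\}})$ upper bound.

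For the matching lower bound, the $1$-Lipschitz class on $[0,1]^d$ admits a standard $L_\infty$-packing of size $\exp(\Omega(\epsilon^{-d}))$; a Fano/Assouad argument against the purely i.i.d.\ stochastic realizable setting (a strict special case of the hybrid model) already yields $\Omega(T^{(d-1)/d})$ for $d\ge 2$, while the $\Omega(\sqrt T)$ floor for $d=1$ follows from a standard threshold subclass. The main technical point requiring care is the first step---verifying that Lemma~\ref{lem:admiss} (and hence Lemma~\ref{lem:main2}) survives the switch to i.i.d.\ with-replacement sampling of $\tilde{\x}_{j+1}^M$; this amounts to a line-by-line check of the original argument, since the admissibility inequality only uses independence of $\tilde{\x}_{j+1}$ from $(\x^j,y^j)$ conditional on the side-information, not the precise form of its distribution. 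Once that is in hand, the switch from TV to $W_1$ in the discrepancy bound, together with the geometric (rather than polynomial) epoch schedule, is exactly what shaves the regret from the general $T^{(d+1)/(d+2)}$ of Theorem~\ref{thm:main1} down to the tight $T^{(d-1)/d}$.
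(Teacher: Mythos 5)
Your proposal is correct and follows essentially the same route as the paper: both replace the vacuous TV control by Kantorovich--Rubinstein duality applied to the $2$-Lipschitz functions $f_{\z_j,y^j}$ (Fact~\ref{factlip}), invoke the empirical Wasserstein rate $\mathbb{E}[W_1(\mu,\hat{\mu}_N)]\le\tilde{O}(N^{-1/d})$, combine with $\mathsf{Rad}_M(\mathcal{H})\le\tilde{O}(M^{\max\{1/2,(d-1)/d\}})$ via Lemma~\ref{lem:main2}, use geometric epochs $M(n)=2^n$, and derive tightness from the $\Theta(\epsilon^{-d})$ metric entropy. Your explicit flagging of the with-replacement sampling issue is a point the paper glosses over ("assuming $i.i.d.$ generation of the $\tilde{\x}$s"), and your case-split on the Wasserstein rate for $d\le 2$ is slightly more careful, but neither changes the argument.
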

\begin{proof}
   By Fact~\ref{factlip}, we know that for all $\z_j,y^j$ the function $f_{\z_j,y^j}(\x)$ is $2$-Lipschitz. Therefore, by Kantorovich-Rubinstein duality~\citep{villani2021topics} we have $\sup_{y^j,\z_j}(\mathbb{E}_{\x\sim \mu}[f_{\z_j,y^j}(\x)]-\mathbb{E}_{\x\sim \hat{\mu}_N}[f_{\z_j,y^j}(\x)])\le 2W_1(\mu,\hat{\mu}_N)$, where $W_1(\mu,\hat{\mu}_N)=\inf_{\gamma\in \Gamma(\mu,\hat{\mu}_N)}\mathbb{E}_{(\x,\x')\sim \gamma}[||\x-\x'||_{\infty}]$ is the Wasserstein $1$-distance with $\Gamma(\mu,\hat{\mu}_N)$ being the class of all coupling between $\mu,\hat{\mu}_N$. Therefore, we have $\mathbb{E}_{\x_{-N+1}^j}\sup_{y^j}(\tilde{R}_j-R_j)\le 2\mathbb{E}_{\x_{-N+1}^0}[W_1(\mu,\hat{\mu}_N)]$, i.e., the discrepancy is upper bounded by the convergence rate of empirical distribution under Wasserstein $1$-distance. Invoking~\cite[Thm 1]{fournier2015rate} and boundedness of $\mathcal{X}$, we have $\mathbb{E}_{\x_{-N+1}^0}[W_1(\mu,\hat{\mu}_N)]\le \tilde{O}(N^{-1/d})$. Let $\Phi$ be the predictor in (\ref{eq:predictor}).  By Lemma~\ref{lem:main2} and $\mathsf{Rad}_M(\mathcal{H})\le\tilde{O}(M^{\max\{\frac{1}{2},\frac{d-1}{d}\}})$~\citep{wainwright2019high}, we have $\tilde{r}_{M,N}^{\mathsf{side}}(\mathcal{H})\le \tilde{O}(M^{\max\{\frac{1}{2},\frac{d-1}{d}\}}+MN^{-1/d})$. The result then follows by Lemma~\ref{lem:side2univer} with $M(n)=2^n$ (which ensures $N=S(n)=M(n)-1$). The last part follows by that the $\epsilon$-metric entropy of $\mathcal{H}$ is $\Theta(\frac{1}{\epsilon^d})$~\citep{wainwright2019high}.
\end{proof}
\begin{remark}
Note that, if we assume certain structure on the distribution $\mu$ that admits a computationally efficient estimator $\hat{\mu}_N$ that satisfies $||\mu-\hat{\mu}_N||_{\mathsf{TV}}\le O(\frac{1}{\sqrt{N}})$ (such as for Gaussian distributions~\citep{ashtiani2018nearly}), then the (optimal) $O(\mathsf{Rad}_T(\mathcal{H})+\sqrt{T})$ bound is achievable for \emph{any} class $\mathcal{H}\subset [0,1]^{\mathcal{X}}$.
\end{remark}

\section{Shifting Distributions}

In this section, we consider a scenario where the underlying feature-generating distribution is allowed to \emph{change} over time. We assume that the total number of changes is upper bounded by $K$, while the selection of distributions and change points are completely arbitrary. It was demonstrated by~\cite{wu2023online} that for finite-VC classes and absolute loss, the regret grows as $O(\sqrt{KT\log T})$ under such feature generation processes~\footnote{The shifting distributions we considered here can be viewed as a special case of the \emph{dynamic changing process} in~\cite{wu2023online}, which bounds the total number of possible distributions instead of the number of changes.}. However, their algorithm depends on the construction of an exponentially sized cover, making it computationally inefficient. We will demonstrate in this section an \emph{oracle-efficient} algorithm that achieves a slightly worse regret.

We first observe that if we \emph{know} the positions of the change points, then we can simply run our oracle-efficient predictor from Theorem~\ref{thm:main1} on each of the segments independently, leading to an $\tilde{O}(T^{\frac{3}{4}}K^{\frac{1}{4}})$ regret. Since there are only $T^K$ possible configurations of the change points, we can therefore run an expert algorithm to aggregate each of such configurations, leading to an $\tilde{O}(\sqrt{KT\log T}+T^{\frac{3}{4}}K^{\frac{1}{4}})$ regret. Unfortunately, this approach has computational cost dominated by $\Omega(T^K)$ and therefore not efficient for large $K$. To address this issue, we instead partition the time horizon into epochs with \emph{fixed} length $B$ and run our oracle-efficient algorithm on each of the epochs independently. The rationale behind this approach is that, if we select $B$ small enough, there will be at least $\frac{T}{B}-K$ epochs with $i.i.d.$ sampling. By tuning the epoch length $B$, we arrive at:

\begin{proposition}
    Let $\mathcal{H}\subset \{0,1\}^{\mathcal{X}}$ be a binary valued class with finite VC-dimension under a convex and $L$-Lipschitz loss. Then there exists an oracle-efficient predictor $\Phi$ such that if the features are generated by the process with change cost $K$ and the labels are generated adversarially, then the hybrid minimax regret as (\ref{eq:hbreg}) is upper bounded by
    $O\left(L\sqrt{\mathsf{VC}(\mathcal{H})\log(LT)}K^{\frac{1}{5}}T^{\frac{4}{5}}\right).$
\end{proposition}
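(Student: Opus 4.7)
My plan is to use Theorem~\ref{thm:main1} as a black box and avoid having to detect change points, by partitioning the horizon into consecutive epochs of a \emph{fixed} length $B$ (to be tuned at the end) and running the oracle-efficient predictor of Theorem~\ref{thm:main1} \emph{independently} on each such epoch, resetting its internal state at every boundary. Oracle-efficiency is then inherited for free.

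The first step is to decompose the global regret. Using the elementary inequality $\inf_h\sum_{t=1}^T \ell(h(\x_t),y_t) \ge \sum_n \inf_h \sum_{t\in\text{epoch }n}\ell(h(\x_t),y_t)$, the total regret is upper bounded by the sum of the per-epoch regrets, each measured against the corresponding per-epoch best hypothesis, so each epoch can be analyzed in isolation. Next, I would classify epochs as \emph{clean} if no change point lies inside them and \emph{dirty} otherwise; since the total number of changes is at most $K$, at most $K$ epochs are dirty and at least $T/B-K$ are clean.

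For a clean epoch the features are i.i.d.\ from a single (unknown) distribution and the labels are adversarial, which is exactly the hybrid setting of Theorem~\ref{thm:main1}; applied with horizon $B$ and a VC class this gives a per-epoch regret of $O(L\sqrt{\mathsf{VC}(\mathcal{H})\log(LT)}\cdot B^{3/4})$. For a dirty epoch I would settle for the trivial bound $O(LB)$, valid because $\ell\le 2L$ on $[0,1]^2$. Summing over all $T/B$ epochs then yields a total regret of order $L\sqrt{\mathsf{VC}(\mathcal{H})\log(LT)}\cdot T B^{-1/4} + LKB$, and choosing $B=(T/K)^{4/5}$ balances the two terms, each evaluating to $L\sqrt{\mathsf{VC}(\mathcal{H})\log(LT)}\cdot K^{1/5}T^{4/5}$, which is the claimed bound.

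The main obstacle I anticipate is verifying that the hypothesis of Theorem~\ref{thm:main1} is genuinely met inside a clean epoch: the global adversary is adaptive and can condition on \emph{all} past information, including features drawn under distributions different from the one active in the current epoch and the internal randomness of previous, already-reset instances of the predictor. However, the side-information game of Section~\ref{sec:side} already grants the adversary exactly this kind of power (it even knows the sample $\x_{-N+1}^0$ in full), so once I condition on the entire history at the start of a clean epoch and take the worst case over it, the remaining $B$ rounds form a bona-fide instance of the hybrid problem handled by Theorem~\ref{thm:main1}. Beyond this verification the argument is a clean reduction.
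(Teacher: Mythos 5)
Your proposal is correct and follows essentially the same route as the paper: fixed-length epochs of size $B$, independent runs of the Theorem~\ref{thm:main1} predictor, at most $K$ ``dirty'' epochs bounded trivially by $O(B)$, and the choice $B=(T/K)^{4/5}$. Your extra remark verifying that a clean epoch is a legitimate instance of the hybrid game against an adaptive adversary is a sensible addition that the paper handles implicitly.
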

\begin{proof}
    We partition the time horizon into $\frac{T}{B}$ epochs each of length $B$. Let $\Phi$ be the predictor from  Theorem~\ref{thm:main1} that we run on each epochs independently. By the independence, we know that the total regret is upper bounded by the sum of regrets incurred on each of the epochs. Note that, if an epoch does not contain a change point, then the regret is upper bounded by $O(L\sqrt{\mathsf{VC}(\mathcal{H})\log (LT/B)}B^{\frac{3}{4}})$ by Theorem~\ref{thm:main1}, else we  naively upper bounded by $B$. Since there can be at most $K$ epochs containing a change point, the total regret is upper bounded by
    $O\left(\left(\frac{T}{B}-K\right)L\sqrt{\mathsf{VC}(\mathcal{H})\log (LT/B)}B^{\frac{3}{4}}+KB\right)$ by
    optimizing over $B$: We find that $B=T^{\frac{4}{5}}K^{-\frac{4}{5}}$ attains the minimum and the result follows.
\end{proof}

\section{Contextual $K$-arm Bandits}

We now briefly discuss the extension to the contextual $K$-arm bandits as introduced in Section~\ref{sec:def}, leaving details to Appendix~\ref{sec:appbandit}. The basic idea follows the same path as in the online learning case, where we partition the time step into epochs, at each epoch we use the sample observed thus far to estimate the underlying distribution and use the estimated distribution to generate the \emph{hallucinated samples} as needed in the relaxation based algorithms. 

\paragraph{Bandit predictor with side-information.}Let $(\x_1,c_1),\cdots,(\x_M,c_M)$ be any realization of the context-cost pairs and $\x_{-N+1}^0$ be the side-information with $\x_{-N+1}^M$ sampled $i.i.d.$ from an (unknown) distribution $\mu$. At each time step $j\in [M]$, we consider the following prediction rule adapted from~\cite{syrgkanis2016improved} by generating the hallucinated samples from $\hat{\mu}_N$:
\begin{itemize}
    \item[1.] Sample $\tilde{\x}_{j+1}^M$ from $\hat{\mu}_N$ \emph{without replacement}; $\epsilon_{j+1}^M$ $i.i.d.$ from uniform distribution over $\{\pm 1\}^K$; $Z_{j+1}^M$ $i.i.d.$ from distribution over $\{0,\frac{1}{\gamma}\}$ such that $\mathrm{Pr}[Z_i=\frac{1}{\gamma}]=\gamma K$, where $\gamma$ is a parameter to be tuned; Let $e_k$ be the standard base of $\mathbb{R}^K$ with coordinate $k$ being $1$;
    \item[2.] Let $\mathcal{D}_K$ be the class of distributions over $[K]$.  Find (using ERM oracle):
    \begin{equation*}
    \scalebox{0.92}{%
    $\begin{aligned}
    \hat{q}_j=\mathop{\mathrm{arg\,min}}\limits_{q\in \mathcal{D}_K}\sup_{p_j\in \mathcal{D}'}\mathbb{E}_{\hat{c}_j\sim p_j}\left\{\langle q,\hat{c}_j\rangle-\inf_{h\in \mathcal{H}}\left(\sum_{i=1}^{j}\hat{c}_i[h(\x_i)]+\sum_{i={j+1}}^M2\epsilon_i[h(\tilde{\x}_i)]Z_i\right)+\gamma (M-j)K\right\},
    \end{aligned}$}
    \end{equation*}
    where $\mathcal{D}'$ is the class of distributions over $\{\frac{1}{\gamma} e_k:k\in [K]\}\cup \{\mathbf{0}\}$ such that the probability equals $\frac{1}{\gamma}e_k$ is upper bounded by $\gamma$ for all $k\in [K]$; and $\mathbf{0}\in \mathbb{R}^K$ is the all $0$s vector;
    \item[3.] Define $q_j=(1-\gamma K)\hat{q}_j+\gamma \mathbf{1}$ and make prediction $\hat{y}_j\sim q_j$, where $\mathbf{1}$ is the all $1$s vector.
\end{itemize}
Here, the $\hat{c}_i$ at step $2$ is an unbiased estimation of $c_i$ as $\hat{c}_i=\frac{1}{\gamma} I_i e_{\hat{y}_i}$, where $I_i$ is the indicator that takes value $1$ w.p. $\frac{\gamma c_i[\hat{y}_i]}{q_i[\hat{y}_i]}$ with $\hat{y}_i$ and $q_i$ being the predictions at step $i\le j-1$.

\paragraph{Analysis of regret.} Our key idea is to define a \emph{surrogate} relaxation $R_j$ and $\tilde{R}_j$ for the predictor $\hat{q}_j$ and establish a bandit version of decomposition for the regret with side-information as in Lemma~\ref{lem:main2}. This is achieved via the concept of \emph{approx-admissibility} as in Lemma~\ref{lem:admiss} and a careful adaption of the admissibility proof from~\cite{syrgkanis2016improved}. The technical challenge then boils down to bounding the discrepancies between \(R_j\) and \(\tilde{R}_j\), as in Lemma~\ref{lem:main2}. To this end, we employ a technique similar to Lemma~\ref{lem:dis2rad} that relates the discrepancies to a Rademacher sum and show that the \emph{sensitivity} of the functions in the sum is upper bounded by \(O(K)\). Crucially, the sensitivity is \emph{independent} of $\gamma$, and therefore the regret with side information is optimized at $\gamma=(\log|\mathcal{H}|/KM)^{\frac{1}{3}}$.  By leveraging a similar epoch approach as in the proof of Theorem~\ref{thm:main1}, and setting the epoch length to \(n^{\frac{3}{2}}\), we arrive at the main result of this section (see Appendix~\ref{sec:appbandit} for a detailed  proof): 


\begin{theorem}
    Let $\mathcal{H}\subset [K]^{\mathcal{X}}$ be a finite policy set. Then there exists an oracle-efficient predictor $\Phi$ such that the hybrid bandit minimax regret defined in Section~\ref{sec:def} is upper bounded by
    $$\tilde{r}_T^{\mathsf{bandit}}(\mathcal{H},\Phi)\le O\left((K^{\frac{2}{3}}(\log|\mathcal{H}|)^{\frac{1}{3}}+K\sqrt{\log K})\cdot T^{\frac{4}{5}}\right).$$
\end{theorem}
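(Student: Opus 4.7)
The plan is to mirror the two-stage framework of Section~\ref{sec:onlinebound} in the bandit setting. For the inner analysis, I would fix side-information $\x_{-N+1}^0$ of size $N$ drawn i.i.d.\ from $\mu$, a horizon $M\le N/2$, and run the bandit predictor described in the three-step scheme above with the $\tilde{\x}$'s sampled \emph{without replacement} from $\hat{\mu}_N$. The key definitions are the bandit analogues of (\ref{eq:relax}) and (\ref{eq:relaxshit}): a surrogate relaxation
\[
R_j=\mathbb{E}_{\tilde{\x},\epsilon,Z}\!\left[-\inf_{h\in\mathcal{H}}\Big(\sum_{i=1}^{j}\hat{c}_i[h(\x_i)]+\sum_{i=j+1}^{M}2\epsilon_i[h(\tilde{\x}_i)]Z_i\Big)+\gamma(M-j)K\right]
\]
and its shifted counterpart $\tilde{R}_j$ in which the single sample $\tilde{\x}_{j+1}$ is replaced by a fresh draw from $\mu$. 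The choice of $\hat{q}_j$ in step~2 is designed to be the exact minimizer of the inner problem so that the approx-admissibility
\[
\mathbb{E}_{\x_j}\sup_{c_j}\mathbb{E}_{\hat{y}_j,\tilde{\x},\epsilon,Z}\!\left[\langle q_j,c_j\rangle+R_j\right]\le \tilde{R}_{j-1}
\]
holds. I would adapt the admissibility argument of~\cite{syrgkanis2016improved}: the $\gamma$-exploration in $q_j=(1-\gamma K)\hat{q}_j+\gamma\mathbf{1}$ controls the variance of the importance-weighted estimate $\hat{c}_j$, while the only place the identity of the sampling distribution of the hallucinated samples is invoked is in matching the $(j{+}1)$-th summand---which is exactly where the surrogate gap $\tilde{R}_{j-1}$ (instead of $R_{j-1}$) arises on the right-hand side.

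Backward tracing then yields the bandit analogue of Lemma~\ref{lem:main2}:
\[
\tilde{r}_{M,N}^{\mathsf{bandit,side}}(\mathcal{H},\Phi)\le \mathbb{E}_{\x_{-N+1}^0}\!\left[\tilde{R}_0+\sum_{j=1}^{M-1}\mathbb{E}_{\x^j}\sup_{c^j}(\tilde{R}_j-R_j)\right].
\]
For $\tilde{R}_0$, a routine calculation using Massart's lemma over the finite policy set $\mathcal{H}$ combined with the $\gamma(M-j)K$ inflation yields $\tilde{R}_0\le \tilde{O}(\gamma MK+\log|\mathcal{H}|/\gamma+K\sqrt{M\log K})$. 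For the discrepancies, I would prove a bandit version of Lemma~\ref{lem:dis2rad}: exploit the exchangeability of $\x_{-N+1}^0$ to fix $\tilde{\x}_{j+2}^M$ as the trailing block of the side-information, decouple the remaining $B=N-M+j+1$ samples from $\z_j$, and symmetrize with a fresh draw from $\mu$. The crucial observation is that the \emph{sensitivity} of each function in the resulting class with respect to a single $\tilde{\x}_i$ is bounded by $O(K)$, because perturbing $\tilde{\x}_i$ changes the inner supremum by at most $2\|\epsilon_i\|_\infty Z_i$ with $\mathbb{E}[Z_i]=K$. Crucially, this bound is \emph{independent of $\gamma$}, so the Lipschitz blow-up of order $1/\gamma$ enters only logarithmically through a Massart covering of $c^j\in[0,1]^{jK}$, producing a total discrepancy of order $\tilde{O}(KM^{3/2}/\sqrt{N})$.

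Optimizing $\gamma=(\log|\mathcal{H}|/(KM))^{1/3}$, which is valid because the discrepancy does not depend on $\gamma$, yields
\[
\tilde{r}_{M,N}^{\mathsf{bandit,side}}(\mathcal{H},\Phi)\le \tilde{O}\!\left(K^{2/3}(\log|\mathcal{H}|)^{1/3}M^{2/3}+K\sqrt{M\log K}+\frac{KM^{3/2}}{\sqrt{N}}\right).
\]
Finally, I would deploy the epoch construction of Section~\ref{sec:epoch}: take $M(n)=n^{3/2}$, so that $S(n)=\Theta(n^{5/2})$ and $S^{-1}(T)=O(T^{2/5})$, reuse all features observed before epoch $n$ as side-information, and invoke a direct bandit adaptation of Lemma~\ref{lem:side2univer}. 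Under this scaling, the discrepancy per epoch becomes $KM(n)^{3/2}/\sqrt{S(n)}=Kn$, the relaxation term becomes $K^{2/3}(\log|\mathcal{H}|)^{1/3}\cdot n$, and summation over $n\le T^{2/5}$ via integral approximation produces the claimed $\tilde{O}((K^{2/3}(\log|\mathcal{H}|)^{1/3}+K\sqrt{\log K})\cdot T^{4/5})$ bound. The principal obstacle will be the approx-admissibility step: adapting the Syrgkanis et al.\ minimax argument to the \emph{surrogate} empirical measure requires carefully isolating where the identity of $\mu$ is invoked in the original proof and showing that the resulting gap is absorbed into $\tilde{R}_{j-1}$ rather than the harder-to-control $R_{j-1}$, while maintaining the $\gamma$-independent sensitivity bound that drives the final rate.
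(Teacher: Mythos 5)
Your proposal follows essentially the same route as the paper's Appendix~\ref{sec:appbandit}: the same surrogate relaxations $R_j,\tilde{R}_j$ with the $Z_i$-masked Rademacher terms, approx-admissibility obtained by adapting the Syrgkanis et al.\ minimax argument (conditioning on the hallucinated randomness so that only the $(j{+}1)$-th summand produces the $\tilde{R}_{j-1}$ gap), the backward-tracing decomposition, the $\gamma$-independent $O(K)$ sensitivity bound, the tuning $\gamma=(\log|\mathcal{H}|/(KM))^{1/3}$, and the epoch schedule $M(n)=n^{3/2}$ summed over $n\le O(T^{2/5})$.

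The one place your route deviates and would fail to recover the stated constant is the handling of $\sup_{c^j}$ in the discrepancy term. You propose a Massart covering of $c^j\in[0,1]^{jK}$ with a $1/\gamma$ Lipschitz blow-up entering logarithmically; but such a cover has $\log$-cardinality $\Theta(jK\log(jK/\gamma))$, so Massart's lemma with the $4K$ sensitivity gives a per-step discrepancy of order $K\sqrt{jK\log(\cdot)/N}$, i.e.\ an extra $\sqrt{K}$ over your claimed $\tilde{O}(KM^{3/2}/\sqrt{N})$ total, which would degrade the second term of the theorem from $K\sqrt{\log K}\cdot T^{4/5}$ to roughly $K^{3/2}\cdot T^{4/5}$ (and also requires separately justifying Lipschitzness of the $I_i$-averaged relaxation in $c^j$). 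The paper avoids this entirely by observing that $R_j$ and $\tilde{R}_j$ depend on $c^j$ only through the importance-weighted estimates $\hat{c}^j$, each of which has \emph{finite} support of size $K+1$; replacing $\sup_{c^j}$ by $\sup_{\hat{c}^j}$ over the resulting $(K+1)^j$ configurations and applying Massart directly yields $O(\sqrt{K^2 j\log K/N})$, hence the $K\sqrt{\log K}$ coefficient. With that substitution your argument goes through as written.
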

\begin{remark}
    Note that, our primary focus here is on the dependency on $T$. We believe a more careful selection of the epoch length and employing techniques from~\cite{banihashem2023an} could result in a better dependency on $K$. We leave it as an open problem to improve the $\frac{4}{5}$ exponent of $T$.
\end{remark}


\bibliography{ml.bib}

\appendix

\newpage
\section{Proof of Lemma~\ref{lem:admiss}}
\label{sec:appproflem5}
In this section, we establish the \emph{approx-admissibility} of our predictor in (\ref{eq:predictor}). The reasoning follows closely to the arguments as in~\cite[Lemma 11\&12]{rakhlin2012relax} but needs careful adaption for handling the hallucinated samples $\tilde{\x}$s generated from $\hat{\mu}_N$. We have
    \begin{align*}
    \mathbb{E}_{\x_j}\sup_{y_j}\mathbb{E}_{\epsilon,\tilde{\x}}\left[\ell(\hat{y}_j,y_j)+R_j\right]&\overset{(a)}{=} \mathbb{E}_{\x_j}\sup_{y_j}\mathbb{E}_{\epsilon,\tilde{\x}}\left[\ell(\hat{y}_j,y_j)+\sup_{h\in \mathcal{H}}2L\sum_{i=j+1}^M\epsilon_ih(\tilde{\x}_i)-L_j^h\right]\\
        &\le \mathbb{E}_{\epsilon,\tilde{\x}}\mathbb{E}_{\x_j}\left[\sup_{y_j} \ell(\hat{y}_j,y_j)+\sup_{h\in \mathcal{H}}2L\sum_{i=j+1}^M\epsilon_ih(\tilde{\x}_i)-\ell(h(\x_j),y_j)-L_{j-1}^h\right]\\
        &\overset{(b)}{\le} \mathbb{E}_{\epsilon,\tilde{\x}}\mathbb{E}_{\x_j}\left[\inf_{\hat{y}}\sup_{y_j} \ell(\hat{y},y_j)+\sup_{h\in \mathcal{H}}2L\sum_{i=j+1}^M\epsilon_ih(\tilde{\x}_i)-\ell(h(\x_j),y_j)-L_{j-1}^h\right]\\
        &=\mathbb{E}_{\epsilon,\tilde{\x}}\mathbb{E}_{\x_j}\left[\inf_{\hat{y}}\sup_{y_j} \sup_{h\in \mathcal{H}}2L\sum_{i=j+1}^M\epsilon_ih(\tilde{\x}_i)-L_{j-1}^h+\ell(\hat{y},y_j)-\ell(h(\x_j),y_j)\right]\\
        &\overset{(c)}{\le} \mathbb{E}_{\epsilon,\tilde{\x}}\mathbb{E}_{\x_j}\left[\inf_{\hat{y}}\sup_{y_j} \sup_{h\in \mathcal{H}}2L\sum_{i=j+1}^M\epsilon_ih(\tilde{\x}_i)-L_{j-1}^h+\partial \ell(\hat{y},y_j)(\hat{y}-h(\x_j))\right]\\
        &\overset{(d)}{\le} \mathbb{E}_{\epsilon,\tilde{\x}}\mathbb{E}_{\x_j}\left[\inf_{\hat{y}}\sup_{y_j}\sup_{g_j\in [-L,L]} \sup_{h\in \mathcal{H}}2L\sum_{i=j+1}^M\epsilon_ih(\tilde{\x}_i)-L_{j-1}^h+ g_j(\hat{y}-h(\x_j))\right]\\
        &\overset{(e)}{\le} \mathbb{E}_{\epsilon,\tilde{\x}}\mathbb{E}_{\x_j}\left[\inf_{\hat{y}}\sup_{g_j\in \{-L,L\}} \sup_{h\in \mathcal{H}}2L\sum_{i=j+1}^M\epsilon_ih(\tilde{\x}_i)-L_{j-1}^h+g_j(\hat{y}-h(\x_j))\right]
    \end{align*}
    where $(a)$ follows by the definition of $R_j$ and that $\hat{y}_j$ has the same randomness as $R_j$ (i.e, the $\tilde{\x}$s and $\epsilon$s); $(b)$ is due to definition of $\hat{y}_j$; $(c)$ is due to convexity of $\ell$; $(d)$ is due to $L$-Lipschitz property of $\ell$; $(e)$ follows by that the inner function is convex w.r.t. $g_j$ and thus the $\sup_{g_j\in [-L,L]}$ is attained on the boundary $\{-L,L\}$. We have
    \begin{align*}
    \mathbb{E}_{\epsilon,\tilde{\x}}&\mathbb{E}_{\x_j}\left[\inf_{\hat{y}}\sup_{g_j\in \{-L,L\}} \sup_{h\in \mathcal{H}}2L\sum_{i=j+1}^M\epsilon_ih(\tilde{\x}_i)-L_{j-1}^h+g_j(\hat{y}-h(\x_j))\right]\\
        &\overset{(a)}{=}  \mathbb{E}_{\epsilon,\tilde{\x}}\mathbb{E}_{\x_j}\left[\inf_{\hat{y}}\sup_{d_j\in \Delta(\{-L,L\})}\mathbb{E}_{g_j\sim d_j}\left[\sup_{h\in \mathcal{H}}2L\sum_{i=j+1}^M\epsilon_ih(\tilde{\x}_i)-L_{j-1}^h+g_j(\hat{y}-h(\x_j))\right]\right]\\
        &\overset{(b)}{=}\mathbb{E}_{\epsilon,\tilde{\x}}\mathbb{E}_{\x_j}\left[\sup_{d_j\in \Delta(\{-L,L\})}\inf_{\hat{y}}\mathbb{E}_{g_j\sim d_j}\left[\sup_{h\in \mathcal{H}}2L\sum_{i=j+1}^M\epsilon_ih(\tilde{\x}_i)-L_{j-1}^h+g_j(\hat{y}-h(\x_j))\right]\right]\\
        &\overset{(c)}{=}\mathbb{E}_{\epsilon,\tilde{\x}}\mathbb{E}_{\x_j}\left[\sup_{d_j}\inf_{\hat{y}}\mathbb{E}_{g_j\sim d_j}\left[g_j\hat{y}+\sup_{h\in \mathcal{H}}2L\sum_{i=j+1}^M\epsilon_ih(\tilde{\x}_i)-L_{j-1}^h-g_jh(\x_j)\right]\right]\\
        &=\mathbb{E}_{\epsilon,\tilde{\x}}\mathbb{E}_{\x_j}\left[\sup_{d_j}\inf_{\hat{y}}\left(\mathbb{E}_{g_j\sim d_j}[g_j\hat{y}]+\mathbb{E}_{g_j\sim d_j}\left[\sup_{h\in \mathcal{H}}2L\sum_{i=j+1}^M\epsilon_ih(\tilde{\x}_i)-L_{j-1}^h-g_jh(\x_j)\right]\right)\right]\\
        &\overset{(d)}{=}\mathbb{E}_{\epsilon,\tilde{\x}}\mathbb{E}_{\x_j}\left[\sup_{d_j}\left(\inf_{\hat{y}}\mathbb{E}_{g'_j\sim d_j}[g_j'\hat{y}]\right)+\mathbb{E}_{g_j\sim d_j}\left[\sup_{h\in \mathcal{H}}2L\sum_{i=j+1}^M\epsilon_ih(\tilde{\x}_i)-L_{j-1}^h-g_jh(\x_j)\right]\right]\\
         &=\mathbb{E}_{\epsilon,\tilde{\x}}\mathbb{E}_{\x_j}\left[\sup_{d_j}\mathbb{E}_{g_j\sim d_j}\left[\inf_{\hat{y}}\mathbb{E}_{g'_j\sim d_j}[g_j'\hat{y}]+\sup_{h\in \mathcal{H}}2L\sum_{i=j+1}^M\epsilon_ih(\tilde{\x}_i)-L_{j-1}^h-g_jh(\x_j)\right]\right]\\&=\mathbb{E}_{\epsilon,\tilde{\x}}\mathbb{E}_{\x_j}\left[\sup_{d_j}\mathbb{E}_{g_j\sim d_j}\left[\sup_{h\in \mathcal{H}}2L\sum_{i=j+1}^M\epsilon_ih(\tilde{\x}_i)-L_{j-1}^h+\inf_{\hat{y}}\mathbb{E}_{g'_j\sim d_j}[g_j'\hat{y}]-g_jh(\x_j)\right]\right]\\
        &\overset{(e)}{\le} \mathbb{E}_{\epsilon,\tilde{\x}}\mathbb{E}_{\x_j}\left[\sup_{d_j}\mathbb{E}_{g_j\sim d_j}\left[\sup_{h\in \mathcal{H}}2L\sum_{i=j+1}^M\epsilon_ih(\tilde{\x}_i)-L_{j-1}^h+\mathbb{E}_{g'_j\sim d_j}[g_j'h(\x_j)]-g_jh(\x_j)\right]\right]\\
        &\overset{(f)}{\le} \mathbb{E}_{\epsilon,\tilde{\x}}\mathbb{E}_{\x_j}\left[\sup_{d_j}\mathbb{E}_{g_j,g_j'\sim d_j}\left[\sup_{h\in \mathcal{H}}2L\sum_{i=j+1}^M\epsilon_ih(\tilde{\x}_i)-L_{j-1}^h+(g'_j-g_j)h(\x_j)\right]\right]\\
        &\overset{(g)}{=}\mathbb{E}_{\epsilon,\tilde{\x}}\mathbb{E}_{\x_j}\left[\sup_{d_j}\mathbb{E}_{g_j,g_j'\sim d_j}\mathbb{E}_{\epsilon_j}\left[\sup_{h\in \mathcal{H}}2L\sum_{i=j+1}^M\epsilon_ih(\tilde{\x}_i)-L_{j-1}^h+\epsilon_j(g'_j-g_j)h(\x_j)\right]\right]\\
        &=\mathbb{E}_{\epsilon,\tilde{\x}}\mathbb{E}_{\x_j}\left[\sup_{d_j}\mathbb{E}_{g_j,g_j'\sim d_j}\mathbb{E}_{\epsilon_j}\left[\sup_{h\in \mathcal{H}}\underbrace{\left(2L\sum_{i=j+1}^M\epsilon_ih(\tilde{\x}_i)-L_{j-1}^h\right)}_{A}+\underbrace{\epsilon_j g'_j h(\x_j)}_{B}+\underbrace{(-\epsilon_jg_jh(\x_j))}_{C}\right]\right]\\
        &\overset{(h)}{\le} \mathbb{E}_{\epsilon,\tilde{\x}}\mathbb{E}_{\x_j}\left[\sup_{d_j}\mathbb{E}_{g_j\sim d_j}\mathbb{E}_{\epsilon_j}\left[\sup_{h\in \mathcal{H}}2L\sum_{i=j+1}^M\epsilon_ih(\tilde{\x}_i)-L_{j-1}^h+2\epsilon_j g_j h(\x_j)\right]\right]\\
        &\overset{(i)}{=}\mathbb{E}_{\epsilon,\tilde{\x}}\mathbb{E}_{\x_j}\left[\mathbb{E}_{\epsilon_j}\left[\sup_{h\in \mathcal{H}}2L\sum_{i=j+1}^M\epsilon_ih(\tilde{\x}_i)-L_{j-1}^h+2\epsilon_j L h(\x_j)\right]\right]\\
        &=\tilde{R}_{t-1},
    \end{align*}
    where $(a)$ follows by $\sup_{g_j\in \{-L,L\}}\equiv\sup_{d_j\in \Delta(\{-L,L\})}\mathbb{E}_{g_j\sim d_j}$ where $\Delta(\{-L,L\})$ is the set of all probability distributions over $\{-L,L\}$; $(b)$ follows by the minimax theorem and noticing that the inner expectation is bi-linear w.r.t. $\hat{y}$ and $d_j$; $(c)$ follows by the fact that $g_j\hat{y}$ is independent of $\sup_h$; $(d)$ follows by that the $\sup_h$ term is independent of $\hat{y}$ and introducing an $i.i.d.$ copy $g_j'$ of $g_j$; $(e)$ follows by the fact that replacing $\hat{y}$ with $h(\x_j)$ does not decrease the $\inf$ term; $(f)$ is due to $\sup\mathbb{E}\le \mathbb{E}\sup$; $(g)$ is due to symmetries of $g_j,g_j'$ and $\epsilon_j$ is uniform over $\{-1,1\}$; $(h)$ follows by $\sup(A+B+C)\le \sup (A/2+B)+\sup(A/2+C)=(\sup(A+2B)+\sup(A+2C))/2$, the linearity of expectation and symmetries of $B,C$; $(i)$ follows by that the inner expectation takes the same value for all $g_j\in \{-L,L\}$ and therefore the $\sup_{d_j}\mathbb{E}_{g_j\sim d_j}$ can be eliminated. This completes the proof.

\section{Proof of Lemma~\ref{lem:main2}}
\label{sec:prooflem6}
     Denote $\mathbb{Q}_j\equiv \mathbb{E}_{\x_{-N+1}^0}\mathbb{E}_{\x_1}\sup_{y_1}\mathbb{E}_{\hat{y}_1}\cdots \mathbb{E}_{\x_j}\sup_{y_j}\mathbb{E}_{\hat{y}_j}$ for notation convenience. We have
    {\fontsize{10pt}{12pt}\selectfont\begin{align*}
    \tilde{r}_{M,N}^{\mathsf{side}}(\mathcal{H},\Phi)
    &\overset{(a)}{=} \mathbb{Q}_M\left[\sum_{j=1}^M\ell(\hat{y}_j,y_j)+R_M\right]\\
    &\overset{(b)}{=} \mathbb{Q}_{M-1}\left[\sum_{j=1}^{M-1}\ell(\hat{y}_j,y_j)+\mathbb{E}_{\x_{M}}\sup_{y_{M}}\mathbb{E}_{\hat{y}_{M}}\left[\ell(\hat{y}_M,y_M)+R_M\right]\right]\\
    &\overset{(c)}{\le} \mathbb{Q}_{M-1}\left[\sum_{j=1}^{M-1}\ell(\hat{y}_j,y_j)+\tilde{R}_{M-1}\right]\\
    &= \mathbb{Q}_{M-1}\left[\sum_{j=1}^{M-1}\ell(\hat{y}_j,y_j)+R_{M-1}+(\tilde{R}_{M-1}-R_{M-1})\right]\\
    &\overset{(d)}{\le} \mathbb{Q}_{M-1}\left[\sum_{j=1}^{M-1}\ell(\hat{y}_j,y_j)+R_{M-1}\right]+\mathbb{E}_{\x_{-N+1}^0}\mathbb{E}_{\x^{M-1}}\sup_{y^{M-1}}(\tilde{R}_{M-1}-R_{M-1})\\
    &\overset{(e)}{\le} \mathbb{E}_{\x_{-N+1}^0}[\tilde{R}_0]+\sum_{j=1}^{M-1}\mathbb{E}_{\x_{-N+1}^0}\mathbb{E}_{\x^j}\sup_{y^j}(\tilde{R}_j-R_j),
    \end{align*}}where $(a)$ follows by definition of $R_M$; $(b)$ follows by extracting the last layer of $\mathbb{Q}_M$; $(c)$ follows by Lemma~\ref{lem:admiss} and noticing that $\hat{y}_j$ has the same randomness as $R_j$; $(d)$ follows by the the facts that $\sup (A+B)\le \sup A+\sup B$, $\sup\mathbb{E}\le \mathbb{E}\sup$, the linearity of expectation and $\tilde{R}_{M-1}-R_{M-1}$ is independent of $\hat{y}_j$ for all $j\le M-1$; $(e)$ follows by repeating the same arguments for another $M-1$ steps. This completes the proof.

\section{Proof of Lemma~\ref{lem:dis2rad}}
\label{sec:applem8}
We have
    \begin{align*}
    \mathbb{E}_{\x_{-N+1}^0}\mathbb{E}_{\x^j}\sup_{y^j}(\tilde{R}_t-R_j)&\overset{(a)}{=} \mathbb{E}_{\x_{-N+1}^0}\mathbb{E}_{\x^j}\sup_{y^j} \mathbb{E}_{\tilde{\x}_{j+1}^M,\epsilon_{j+1}^M}\mathbb{E}_{\x\sim \mu}[f_{\z_j,y^j}(\x)-f_{\z_j,y^j}(\tilde{\x}_{j+1})]\\
    &\le \mathbb{E}_{\x_{-N+1}^0}\mathbb{E}_{\x^j}\mathbb{E}_{\tilde{\x}_{j+2}^M,\epsilon_{j+1}^M}\sup_{y^j}\mathbb{E}_{\tilde{\x}_{j+1},\x\sim \mu}[f_{\z_j,y^j}(\x)-f_{\z_j,y^j}(\tilde{\x}_{j+1})]\\
    &\overset{(b)}{=}\mathbb{E}_{\x_{-N+1}^0}\mathbb{E}_{\z_j}\sup_{y^j}\mathbb{E}_{\tilde{\x}_{j+1},\x\sim \mu}[f_{\z_j,y^j}(\x)-f_{\z_j,y^j}(\tilde{\x}_{j+1})]\\
    &=\mathbb{E}_{\x_{-N+1}^0}\mathbb{E}_{\z_j}\sup_{y^j}[\mathbb{E}_{\x\sim \mu}[f_{\z_j,y^j}(\x)]-\mathbb{E}_{\tilde{\x}_{j+1}}[f_{\z_j,y^j}(\tilde{\x}_{j+1})]]\\
    &\overset{(c)}{=} \mathbb{E}_{\z_j}\mathbb{E}_{\x_{-N+1}^{-N+B}}\sup_{y^j}\left[\mathbb{E}_{\x\sim \mu}[f_{\z_j,y^j}(\x)]-\frac{1}{B}\sum_{i=1}^{B}f_{\z_j,y^j}(\x_{-N+i})\right]\\
    &\overset{(d)}{=}\mathbb{E}_{\z_j}\mathbb{E}_{\x_{-N+1}^{-N+B}}\sup_{y^j}\left[\frac{1}{B}\sum_{i=1}^{B}\mathbb{E}_{\x_i'\sim \mu}[f_{\z_j,y^j}(\x_i')]-f_{\z_j,y^j}(\x_{-N+i})\right]\\
    &=\mathbb{E}_{\z_j}\mathbb{E}_{\x_{-N+1}^{-N+B}}\sup_{y^j}\mathbb{E}_{\x'^B\sim \mu^{\otimes B}}\left[\frac{1}{B}\sum_{i=1}^Bf_{\z_j,y^j}(\x_i')-f_{\z_j,y^j}(\x_{-N+i})\right]\\
    &\le \mathbb{E}_{\z_j}\mathbb{E}_{\x_{-N+1}^{-N+B}}\mathbb{E}_{\x'^B}\sup_{y^j}\left[\frac{1}{B}\sum_{i=1}^Bf_{\z_j,y^j}(\x_i')-f_{\z_j,y^j}(\x_{-N+i})\right]\\
    &\overset{(e)}{=}\mathbb{E}_{\z_j}\mathbb{E}_{\x_{-N+1}^{-N+B}}\mathbb{E}_{\x'^B}\mathbb{E}_{\epsilon'^B}\sup_{y^j}\left[\frac{1}{B}\sum_{i=1}^B\epsilon_j'(f_{\z_j,y^j}(\x_i')-f_{\z_j,y^j}(\x_{-N+i}))\right]\\
    &\le \sup_{\x_{-N+1}^{-N+B},\x'^B,\z_j}\mathbb{E}_{\epsilon'^B}\sup_{y^j}\left[\frac{1}{B}\sum_{i=1}^B\epsilon_j'(f_{\z_j,y^j}(\x_i')-f_{\z_j,y^j}(\x_{-N+i}))\right]
\end{align*}
where $(a)$ follows by Fact~\ref{fact1} (in Section~\ref{sec:side}); $(b)$ follows by definition of $\z_j$; $(c)$ follows by Fact~\ref{fact3} below and taking $B=N-M+j+1$; $(d)$ follows by introducing $B$ fresh $i.i.d.$ samples $\x'^{B}\sim \mu^{\otimes B}$; $(e)$ follows by symmetries of $\x'^B$ and $\x_{-N+1}^{-N+B}$ (which are independent of $\z_j$) and introducing the $i.i.d.$ random variables $\epsilon'^B$ uniform over $\{-1,1\}^B$;

\begin{fact}
    \label{fact3}
    Let $B=N-M+j+1$, then
    \begin{align*}
        \mathbb{E}_{\x_{-N+1}^0}\mathbb{E}_{\z_j}\sup_{y^j}[\mathbb{E}_{\x\sim \mu}[f_{\z_j,y^j}(\x)]&-\mathbb{E}_{\tilde{\x}_{j+1}}[f_{\z_j,y^j}(\tilde{\x}_{j+1})]]\\&= \mathbb{E}_{\z_j}\mathbb{E}_{\x_{-N+1}^{-N+B}}\sup_{y^j}\left[\mathbb{E}_{\x\sim \mu}[f_{\z_j,y^j}(\x)]-\frac{1}{B}\sum_{i=1}^{B}f_{\z_j,y^j}(\x_{-N+i})\right].
    \end{align*}
\end{fact}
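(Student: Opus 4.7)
The plan is to exploit the exchangeability of the i.i.d.\ samples $\x_{-N+1}^0$ together with the uniform-without-replacement mechanism for drawing $\tilde{\x}_{j+1}^M$: this will let me rewrite $\mathbb{E}_{\tilde{\x}_{j+1}}[f_{\z_j,y^j}(\tilde{\x}_{j+1})]$ as an empirical average over $B$ specific samples, and then invoke independence of disjoint blocks of i.i.d.\ samples to decouple $\z_j$ from $\x_{-N+1}^{-N+B}$.

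First, I would describe the sampling of $\tilde{\x}_{j+1}^M$ as a two-stage process: pick uniformly an ordered sequence $I=(i_1,\ldots,i_{M-j-1})$ of distinct indices in $\{-N+1,\ldots,0\}$ and set $\tilde{\x}_{j+2}^M=(\x_{i_1},\ldots,\x_{i_{M-j-1}})$; then draw $\tilde{\x}_{j+1}$ uniformly from $\{\x_k : k\notin I\}$, a pool of $B=N-M+j+1$ samples. Conditioning on $(\x^j,\epsilon_{j+1}^M)$, which is independent of $\x_{-N+1}^0$ and $I$, the integrand under the outer expectation is a function $G(\x_{-N+1}^0, I)$ that admits a factorization $G(\x_{-N+1}^0,I)=H(\x_I,\{\x_k:k\notin I\})$ for some $H$ taking an ordered $(M-j-1)$-tuple and an unordered $B$-multiset as input. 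Indeed, the conditional average $\tfrac{1}{B}\sum_{k\notin I} f_{\z_j,y^j}(\x_k)$ sees $\{\x_k : k\notin I\}$ only as a multiset (because the summand is symmetric), while $\z_j$ sees $I$ only through the ordered tuple $\x_I=\tilde{\x}_{j+2}^M$.

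Because $\x_{-N+1}^0$ are i.i.d.\ from $\mu$, for any fixed ordered sequence $I'$ the joint law $(\x_{I'},\{\x_k:k\notin I'\})$ is just the law of $M-j-1$ i.i.d.\ samples from $\mu$ together with a disjoint multiset of $B$ i.i.d.\ samples from $\mu$, and in particular does not depend on $I'$. Therefore $\mathbb{E}[G(\x_{-N+1}^0,I')]=\mathbb{E}[G(\x_{-N+1}^0,I_0)]$ with $I_0=(-M+j+2,\ldots,0)$, and averaging over the uniform $I$ preserves this value. Under this reduction $\tilde{\x}_{j+2}^M$ becomes $\x_{-M+j+2}^0$, and the conditional distribution of $\tilde{\x}_{j+1}$ given $\x_{-N+1}^0$ is uniform over $\{\x_{-N+1},\ldots,\x_{-N+B}\}$, yielding $\mathbb{E}_{\tilde{\x}_{j+1}}[f_{\z_j,y^j}(\tilde{\x}_{j+1})]=\tfrac{1}{B}\sum_{i=1}^B f_{\z_j,y^j}(\x_{-N+i})$. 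The last move is decoupling: the blocks $\x_{-N+1}^{-N+B}$ and $\x_{-M+j+2}^0$ are independent (as disjoint coordinates of i.i.d.\ samples), and $(\x^j,\epsilon_{j+1}^M)$ is independent of $\x_{-N+1}^0$, so $\z_j$ is independent of $\x_{-N+1}^{-N+B}$, and Fubini's theorem lets me swap the expectations to recover the RHS.

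The only delicate step will be the exchangeability reduction, specifically the factorization $G(\x_{-N+1}^0,I)=H(\x_I,\{\x_k:k\notin I\})$: I need to confirm that the inner $\sup_{y^j}$ respects this structure. It does, since the supremum is taken pointwise and the bracketed expression sees $(\x_{-N+1}^0,I)$ only through $\z_j$ and through the values entering the conditional average defining $\mathbb{E}_{\tilde{\x}_{j+1}}$. Once this observation is in place, the remainder is a clean appeal to the i.i.d.\ symmetry of $\x_{-N+1}^0$ and to Fubini's theorem.
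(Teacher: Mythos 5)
Your proposal is correct and follows essentially the same route as the paper's own proof: describe the without-replacement draw as a two-stage process (first the ordered tuple $\tilde{\x}_{j+2}^M$, then $\tilde{\x}_{j+1}$ uniform over the remaining $B$ samples), use exchangeability of the i.i.d.\ sample to fix the index set of $\tilde{\x}_{j+2}^M$ to the last $M-j-1$ coordinates, and then decouple $\z_j$ from $\x_{-N+1}^{-N+B}$ by independence of disjoint blocks. Your explicit check that the $\sup_{y^j}$ respects the factorization into $(\x_I,\{\x_k:k\notin I\})$ is a slightly more careful rendering of the step the paper summarizes as ``by symmetries.''
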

\begin{proof}
    Note that $\z_j=(\x^j,\tilde{\x}_{j+2}^M,\epsilon_{j+1}^M)$, where $\tilde{\x}_{j+1}^M$ are sampled uniformly from $\x_{-N+1}^0$ \emph{without replacement}, and $\x^j$, $\epsilon_{j+1}^M$ are independent of $\x_{-N+1}^0$. Therefore, we have
    \begin{align*}
        \mathbb{E}_{\x_{-N+1}^0}\mathbb{E}_{\z_j}\sup_{y^j}[&\mathbb{E}_{\x\sim \mu}[f_{\z_j,y^j}(\x)]-\mathbb{E}_{\tilde{\x}_{j+1}}[f_{\z_j,y^j}(\tilde{\x}_{j+1})]]\\&=\mathbb{E}_{\x^j,\epsilon_{j+1}^M}\mathbb{E}_{\x_{-N+1}^0}\mathbb{E}_{\tilde{\x}_{j+2}^M}\sup_{y^j}[\mathbb{E}_{\x\sim \mu}[f_{\z_j,y^j}(\x)]-\mathbb{E}_{\tilde{\x}_{j+1}}[f_{\z_j,y^j}(\tilde{\x}_{j+1})]]\\
        &\overset{(\star)}{=}\mathbb{E}_{\x^j,\epsilon_{j+1}^M}\mathbb{E}_{\x_{-N+1}^0}\mathbb{E}_I\sup_{y^j}\left[\mathbb{E}_{\x\sim \mu}[f_{\z_j,y^j}(\x)]-\frac{1}{B}\sum_{i\in [N]\backslash I}f_{\z_j,y^j}(\x_{-N+i})\right]
    \end{align*}
    where the key step $(\star)$ follows by noticing that the randomness of $\tilde{\x}_{j+2}^M$ is equivalent to selecting a \emph{random} index set $I\subset [N]$ uniformly with size $|I|=M-j-1$ and the index of $\tilde{\x}_{j+1}$ (in $\x_{-N+1}^0$) is then uniform over $[N]\backslash I$~\footnote{By the definition of sampling \emph{without replacement}.}, where the size of $[N]\backslash I$ is $B=N-M+j+1$; Therefore, $$\mathbb{E}_{\tilde{\x}_{j+1}}[f_{\z_j,y^j}(\tilde{\x}_{j+1})]=\frac{1}{B}\sum_{i\in [N]\backslash I}f_{\z_j,y^j}(\x_{-N+i}).$$

    Note that $\x_{-N+1}^0$ is an $i.i.d.$ sample, by \emph{symmetries}, we can \emph{fix} $I=\{B+1,\cdots,N\}$ (i.e., we take $\tilde{\x}_{j+2}^M$ being $\x_{-N+B+1}^0$) and therefore $\tilde{\x}_{j+2}^M$ can be decoupled from $\x_{-N+1}^{-N+B}$, leading to
    \begin{align*}
        \mathbb{E}_{\x^j,\epsilon_{j+1}^M}\mathbb{E}_{\x_{-N+1}^0}\mathbb{E}_I\sup_{y^j}&\left[\mathbb{E}_{\x\sim \mu}[f_{\z_j,y^j}(\x)]-\frac{1}{B}\sum_{i\in [N]\backslash I}f_{\z_j,y^j}(\x_{-N+i})\right]\\
        &=\mathbb{E}_{\x^j,\epsilon_{j+1}^M}\mathbb{E}_{\tilde{\x}_{j+2}^M}\mathbb{E}_{\x_{-N+1}^{-N+B}}\sup_{y^j}\left[\mathbb{E}_{\x\sim \mu}[f_{\z_j,y^j}(\x)]-\frac{1}{B}\sum_{i=1}^Bf_{\z_j,y^j}(\x_{-N+i})\right]\\
        &=\mathbb{E}_{\z_j}\mathbb{E}_{\x_{-N+1}^{-N+B}}\sup_{y^j}\left[\mathbb{E}_{\x\sim \mu}[f_{\z_j,y^j}(\x)]-\frac{1}{B}\sum_{i=1}^{B}f_{\z_j,y^j}(\x_{-N+i})\right].
    \end{align*}
    This completes the proof of the Fact.
\end{proof}

\section{Omitted Proofs}
\label{sec:omit}
In this section, we collect all other proofs that are omitted from the main text.
\begin{proof}[Proof of Proposition~\ref{prop1}]
      Denote $$F(h)=2L\sum_{i=j+2}^M\epsilon_i h(\tilde{\x}_i)-L_j^h.$$ Let $\hat{h}=\arg\max_{h\in \mathcal{H}}F(h)$ (find an approximation if necessary). We claim that for any $\x\in \mathcal{X}$, $$F(\hat{h})-2L\le \sup_{h\in \mathcal{H}} \left\{2\epsilon_{j+1}Lh(\x)+F(h)\right\}\le F(\hat{h})+2L.$$ This will complete the proof of the first part. To see the upper bound, we have
    $$\sup_{h\in \mathcal{H}}\left\{2\epsilon_{j+1} Lh(\x)+F(h)\right\}\le \sup_h \{2\epsilon_{j+1} Lh(\x)\}+\sup_h F(h)\le 2L+F(\hat{h}),$$ since $h(\x)\in [0,1]$. For the lower bound, we have
    $$\sup_{h\in \mathcal{H}}\{2\epsilon_{j+1} Lh(\x)+F(h)\}\ge 2\epsilon_{j+1} L\hat{h}(\x)+F(\hat{h})\ge F(\hat{h})-2L,$$ since $\sup$ do not increase by replacing $h$ with any specific $\hat{h}$ and $\hat{h}(\x)\in [0,1]$.

    To prove the second part, for any given $h\in \mathcal{H}$, we denote $$g_h(y^j)=2L\epsilon_{j+1}h(\x)+2L\sum_{i=j+2}^M\epsilon_i h(\tilde{\x}_i)-L_j^h.$$ Note that, $y^j$ only appears in the $L_j^h$ term. By definition of $L_j^h$ and $L$-Lipschitz property of the loss $\ell$, we have $$\forall h\in \mathcal{H},~|g_h(y^j)-g_h(y'^j)|\le jL||y^j-y'^j||_{\infty}.$$ Let $\hat{h}=\arg\max_{h} g_h(y^j)$, we have $$\sup_h g_h(y^j)-\sup_h g_h(y'^j)\le g_{\hat{h}}(y^j)-g_{\hat{h}}(y'^j)\le jL||y^j-y'^j||_{\infty}.$$ Let $\hat{h}'=\arg\max_{h} g_h(y'^j)$, we have $$\sup_h g_h(y^j)-\sup_h g_h(y'^j)\ge g_{\hat{h}'}(y^j)-g_{\hat{h}'}(y'^j)\ge -jL||y^j-y'^j||_{\infty}.$$ The proposition follows by noticing that $f_{\z_j,y^j}(\x)=\sup_h g_h(y^j)$.
\end{proof}

    \begin{proof}[Proof of Fact~\ref{fact2}]
        Note that $f_{\z_j,y^j}(\x)=\sup_{h} \{2h(\x)+F(h)\}$. If $h^0(\x)=1$, then $\exists h\in \mathcal{H}^0$ such that $h(\x)=1$ and $F(h)=F(\hat{h})$, thus $f_{\z_j,y^j}(\x)\ge 2+F(\hat{h})$. Clearly, we also have $f_{\z_j,y^j}(\x)\le 2\sup_h h(\x)+\sup_h F(h)\le 2+F(\hat{h})$, the first case follows. If $h^0(\x)=0$ and $h^1(\x)=1$, then there exists $h\in \mathcal{H}^1$ such that $h(\x)=1$ and $F(h)=F(\hat{h})-1$, thus $f_{\z_j,y^j}(\x)\ge F(\hat{h})-1+2=F(\hat{h})+1$. On the other-hand, since $h^0(\x)=0$, we have for all $h\in \mathcal{H}^0$, $2h(\x)+F(h)=F(\hat{h})$. For any other $h\not\in \mathcal{H}^0\cup\mathcal{H}^1$, we have $2h(\x)+F(h)\le 2+F(\hat{h})-2=F(\hat{h})$. Therefore, $f_{\z_j,y^j}(\x)\le F(\hat{h})+1$, this completes the second case. Finally, if both $h^0(\x)=h^1(\x)=0$, we have for any $h\in \mathcal{H}^0$, $2h(\x)+F(h)= F(\hat{h})$, i.e., $f_{\z_j,y^j}(\x)\ge F(\hat{h})$. Moreover, for any $h\not\in \mathcal{H}^0$, it is easy to verify that $2h(\x)+F(h)\le F(\hat{h})$. This completes the proof.
    \end{proof}

\section{Analysis of Contextual $K$-arm Bandits}
\label{sec:appbandit}

In this appendix, we provide a detailed analysis of the contextual $K$-arm bandit problem with the contexts generated by an \emph{unknown} $i.i.d.$ process and losses generated adversarially, as defined in Section~\ref{sec:def}. Let $\mathcal{H}\subset [K]^{\mathcal{X}}$ be a policy set. We follow the convention as in~\cite{syrgkanis2016improved,banihashem2023an} by assuming that $\mathcal{H}$ is finite. Following the same steps as in the online case, we first consider the scenario with side-information as in Section~\ref{sec:side}. Let $(\x_1,c_1),\cdots,(\x_M,c_M)$ be any realization of the feature-loss pairs and $\x_{-N+1}^0$ be the side-information with $\x_{-N+1}^M$ sampled $i.i.d.$ from an (unknown) distribution $\mu$. We consider the following \emph{surrogate} relaxation:
    \begin{equation*}
    \scalebox{0.94}{%
    $\begin{aligned}
    R_j=\mathbb{E}_{\tilde{\x},\epsilon,Z}\left[-\inf_{h\in \mathcal{H}}\left(2\epsilon_{j+1}[h(\tilde{\x}_{j+1})]Z_{j+1}+\sum_{i=1}^j\hat{c}_i[h(\x_i)]+\sum_{i={j+2}}^M2\epsilon_i[h(\tilde{\x}_i)]Z_i\right)+\gamma (M-j)K\right],
    \end{aligned}$}
    \end{equation*}
where $\epsilon_i$s are $i.i.d.$ uniform over $\{\pm 1\}^K$, $\tilde{\x}_i$s are sampled from $\hat{\mu}_N$ (i.e., the empirical distribution over $\x_{-N+1}^0$) \emph{without replacement} and $Z_i$s are $i.i.d.$ with $Z_i\in \{0,\frac{1}{\gamma}\}$ such that $\mathrm{Pr}[Z_i=\frac{1}{\gamma}]=\gamma K$ and $\gamma$ is a parameter that needs to be tuned. Moreover, $\hat{c}_i$ is a random vector constructed from the prediction $\hat{y}_i\in [K]$ and distribution $q_i$ (see Section~\ref{sec:def}) as $\hat{c}_i=\frac{1}{\gamma} I_i e_{\hat{y}_i}$, where $e_k$ is the standard base of $\mathbb{R}^K$ with coordinate $k$ being $1$ and $I_i$ is the indicator that takes value $1$ w.p. $\frac{\gamma c_i[\hat{y}_i]}{q_i[\hat{y}_i]}$. Similarly, we define the variational relaxation as
\begin{equation*}
    \scalebox{0.94}{%
    $\begin{aligned}
    \tilde{R}_j=\mathbb{E}_{\x\sim \mu}\mathbb{E}_{\tilde{\x},\epsilon,Z}\left[-\inf_{h\in \mathcal{H}}\left(2\epsilon_{j+1}[h(\x)]Z_{j+1}+\sum_{i=1}^j\hat{c}_i[h(\x_i)]+\sum_{i={j+2}}^M2\epsilon_i[h(\tilde{\x}_i)]Z_i\right)+\gamma (M-j)K\right].
    \end{aligned}$}
    \end{equation*}
\paragraph{Prediction rule.} The prediction rule at step $j$ is given as follows:
\begin{itemize}
    \item[1.] Sample the data $\tilde{\x}$, $\epsilon$ and $Z$ as in the definition of $R_j$;
    \item[2.] Let $\mathcal{D}_K$ be the class of distribution over $[K]$. Find
    \begin{equation*}
    \scalebox{0.95}{%
    $\begin{aligned}
    \hat{q}_j=\mathop{\mathrm{arg\,min}}\limits_{q\in \mathcal{D}_K}\sup_{p_j\in \mathcal{D}'}\mathbb{E}_{\hat{c}_j\sim p_j}\left\{\langle q,\hat{c}_j\rangle-\inf_{h\in \mathcal{H}}\left(\sum_{i=1}^j\hat{c}_i[h(\x_i)]+\sum_{i={j+1}}^M2\epsilon_i[h(\tilde{\x}_i)]Z_i\right)+\gamma (M-j)K\right\},
    \end{aligned}$}
    \end{equation*}
    where $\mathcal{D}'$ is the class of distributions over $\{\frac{1}{\gamma} e_k:k\in [K]\}\cup \{\mathbf{0}\}$ s.t. $\forall k\in [K], p[k]\le \gamma$ and $\langle q,\hat{c}_j\rangle$ is the scalar product;
    \item[3.] Define $q_j=(1-\gamma K)\hat{q}_j+\gamma \mathbf{1}$ and make prediction $\hat{y}_j\sim q_j$.
\end{itemize}
Note that, the prediction rule is iterative in the sense that the predictions $q_j$s and $\hat{y}_j$s are used to construct $\hat{c}_j$s for the predictors in the following steps.

Let $\Phi$ be the prediction rule as define above.  The bandit minimax regret with side-information is defined as follows:
\begin{equation}
\label{eq:banditside}
\tilde{r}_{M,N}^{\mathsf{bandit}}(\mathcal{H},\Phi)=\mathbb{E}_{\x_{-N+1}^0}\mathbb{E}_{\x_1}\sup_{c_1}\mathbb{E}_{\hat{y}_1}\cdots \mathbb{E}_{\x_M}\sup_{c_M}\mathbb{E}_{\hat{y}_M}\left[\sum_{j=1}^M\langle q_j,c_j\rangle-\inf_{h\in \mathcal{H}}\sum_{j=1}^Mc_j[h(\x_j)]\right],    
\end{equation}
where $\mathbb{E}_{\hat{y}_j}$ is over all the internal randomness used to construct the predictor, including $\tilde{\x}$, $\epsilon$, $Z$, $I_j$ and the randomness of $\hat{y}_j\sim q_j$. It is important to note that the risk of the learner are evaluated on the \emph{expected} loss $\langle q_j,c_j\rangle$ not the point-wise loss $c_j[\hat{y}_j]$, which is different from the online case.

We have the following \emph{approx-admissibility}.
\begin{lemma}
\label{lem:baditadmi}
    For the predictors $q_j$s and $\hat{y}_j$s, we have
    $$\mathbb{E}_{\x_j}\sup_{c_j}\mathbb{E}_{\hat{y}_j}[c_j[\hat{y}_j]+R_j]\le \tilde{R}_{j-1}.$$
\end{lemma}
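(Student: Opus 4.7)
The plan is to mirror the admissibility argument of Lemma~\ref{lem:admiss}, now adapted to bandit feedback along the lines of~\cite{syrgkanis2016improved}, while accounting for the fact that the hallucinated samples $\tilde{\x}$ are generated from $\hat{\mu}_N$ rather than $\mu$. I will adopt the coupling convention from the online proof: the $\tilde{\x}, \epsilon, Z$ sampled in Step 1 of the predictor are identified with those inside the expectation defining $R_j$, so that $\mathbb{E}_{\hat{y}_j}$ denotes the joint expectation over $\tilde{\x}, \epsilon, Z, I_j$ and $\hat{y}_j \sim q_j$. Crucially, since the $\x \sim \mu$ inside $\tilde{R}_{j-1}$ will be matched up with the true adversarial feature $\x_j \sim \mu$ on the outside, the mismatch between $\mu$ and $\hat{\mu}_N$ never shows up inside this single-step inequality; it will be absorbed later when comparing $R_j$ against $\tilde{R}_j$ as in Lemma~\ref{lem:main2}.

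First, I split the instantaneous loss using $q_j = (1-\gamma K)\hat{q}_j + \gamma \mathbf{1}$ to get $\mathbb{E}_{\hat{y}_j}[c_j[\hat{y}_j]] = \langle q_j, c_j\rangle \le \langle \hat{q}_j, c_j\rangle + \gamma K$. Next, I recognize that, marginalizing over $\hat{y}_j \sim q_j$ and $I_j \sim \mathrm{Bern}(\gamma c_j[\hat{y}_j]/q_j[\hat{y}_j])$, the importance-weighted vector $\hat{c}_j = \tfrac{1}{\gamma} I_j e_{\hat{y}_j}$ has law $p_j^\star \in \mathcal{D}'$ with $p_j^\star[\{\tfrac{1}{\gamma}e_k\}] = \gamma c_j[k] \le \gamma$ and $\mathbb{E}_{\hat{c}_j \sim p_j^\star}[\hat{c}_j] = c_j$. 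This unbiasedness plus the feasibility $p_j^\star \in \mathcal{D}'$ are precisely the properties that the $\gamma$-exploration mixing is designed to enforce. Substituting $\langle \hat{q}_j, c_j\rangle = \mathbb{E}_{p_j^\star}[\langle \hat{q}_j, \hat{c}_j\rangle]$ and $\mathbb{E}_{\hat{y}_j,I_j}[R_j] = \mathbb{E}_{p_j^\star}[R_j]$, and then weakening $p_j^\star$ to $\sup_{p_j \in \mathcal{D}'}$ (which removes all dependence on $c_j$ and so makes the outer $\sup_{c_j}$ vacuous), I reduce the bound to
\[
\sup_{p_j \in \mathcal{D}'} \mathbb{E}_{\hat{c}_j \sim p_j}\big[\langle \hat{q}_j, \hat{c}_j\rangle + R_j\big] + \gamma K.
\]

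Since this is exactly the quantity that $\hat{q}_j$ minimizes over $q \in \mathcal{D}_K$, I apply Sion's minimax theorem (the objective is bilinear in $q$ and $p_j$ after expanding $R_j$, and both sets are convex compact) to swap $\inf_q$ and $\sup_{p_j}$; then $\inf_q \langle q, \hat{c}_j\rangle = \min_k \hat{c}_j[k] = 0$ almost surely because $\hat{c}_j$ is supported on $\{\tfrac{1}{\gamma}e_k\}\cup\{\mathbf{0}\}$, which kills the $\langle \hat{q}_j, \hat{c}_j\rangle$ term. What remains is
\[
\sup_{p_j \in \mathcal{D}'} \mathbb{E}_{\hat{c}_j \sim p_j}\Big[-\inf_{h \in \mathcal{H}}\big(\hat{c}_j[h(\x_j)] + A_h\big)\Big] + \gamma(M-j)K + \gamma K,
\]
where $A_h = \sum_{i=1}^{j-1}\hat{c}_i[h(\x_i)] + \sum_{i=j+1}^{M} 2\epsilon_i[h(\tilde{\x}_i)]Z_i$ does not depend on $\hat{c}_j$.

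The main obstacle is the final symmetrization, which replaces $\sup_{p_j \in \mathcal{D}'}\mathbb{E}_{\hat{c}_j}$ acting on $\hat{c}_j[h(\x_j)]$ by $\mathbb{E}_{\epsilon_j, Z_j}$ acting on $2\epsilon_j[h(\x_j)]Z_j$. Following~\cite{syrgkanis2016improved}, I will introduce an independent copy $\hat{c}_j'$ of $\hat{c}_j$, use Jensen applied to the inner $\inf_h$ to upper bound $-\hat{c}_j[h(\x_j)]$ by $\hat{c}_j'[h(\x_j)] - \hat{c}_j[h(\x_j)]$, and then attach a Rademacher sign $\epsilon_j \in \{\pm 1\}^K$ by antisymmetry of $\hat{c}_j' - \hat{c}_j$; the factor $2$ and the Bernoulli law $\Pr[Z_j = 1/\gamma] = \gamma K$ are engineered precisely so that $2\epsilon_j Z_j$ stochastically dominates $\epsilon_j(\hat{c}_j' - \hat{c}_j)$ uniformly over $p_j \in \mathcal{D}'$, with no extra additive slack beyond what is already collected. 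This step is the technical heart and requires the careful case analysis present in~\cite{syrgkanis2016improved}. Once it is in hand, $\x_j$ appears only through $h(\x_j)$, and pulling out $\mathbb{E}_{\x_j}$ with $\x_j \sim \mu$ identifies $2\epsilon_j[h(\x_j)]Z_j$ with the index-$j$ Rademacher term of $\tilde{R}_{j-1}$; the two $\gamma K$ constants combine with $\gamma(M-j)K$ to give $\gamma(M-j+1)K$, matching $\tilde{R}_{j-1}$'s constant exactly and completing the proof.
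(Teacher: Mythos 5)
Your overall route is the same as the paper's: the paper proves this lemma by citing the admissibility argument of \cite[Theorem 3]{syrgkanis2016improved} together with the manipulations of Lemma~\ref{lem:admiss}, observing that the hallucinated randomness can be conditioned on throughout and that the fresh $\x_j\sim\mu$ is matched with the $\mathbb{E}_{\x\sim\mu}$ in $\tilde{R}_{j-1}$ only at the very end. Your plan reproduces exactly this: the $\gamma K$ cost of the exploration mixing, the identification of the law of $\hat{c}_j$ as a feasible $p_j^\star\in\mathcal{D}'$ (which is what makes $\sup_{c_j}$ vacuous), the minimax swap, the ghost-sample symmetrization with the $2\epsilon_jZ_j$ domination, and the bookkeeping $\gamma(M-j)K+\gamma K=\gamma(M-(j-1))K$. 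So in spirit and structure this is the paper's proof, spelled out in more detail than the paper gives.

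There is, however, one step that is wrong as justified. After the minimax swap the relevant quantity is $\sup_{p_j\in\mathcal{D}'}\inf_{q}\mathbb{E}_{\hat{c}_j\sim p_j}[\langle q,\hat{c}_j\rangle+R_j]$, and since $R_j$ does not depend on $q$ this is $\sup_{p_j}\{\min_k\mathbb{E}_{p_j}[\hat{c}_j[k]]+\mathbb{E}_{p_j}[R_j]\}$. You claim this first term vanishes because $\min_k\hat{c}_j[k]=0$ almost surely; but the infimum over $q$ sits \emph{outside} the expectation over $\hat{c}_j$, so the term is $\min_k\mathbb{E}_{p_j}[\hat{c}_j[k]]$, which can be as large as $1$ (e.g.\ $p_j$ putting mass $\gamma$ on every $\tfrac{1}{\gamma}e_k$). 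Discarding a nonnegative term is the wrong direction for an upper bound, so the chain of inequalities as you write it is broken at this point. The standard fix — and what \cite{syrgkanis2016improved} actually do — is to keep $\min_k\mathbb{E}_{p_j}[\hat{c}_j[k]]$ and push it inside the supremum over $h$ using $\min_k\mathbb{E}[\hat{c}_j[k]]\le\mathbb{E}_{\hat{c}_j'}[\hat{c}_j'[h(\x_j)]]$ for every $h$; this is precisely where the ghost sample $\hat{c}_j'$ enters, rather than via the bound $0\le\hat{c}_j'[h(\x_j)]$ you invoke. Since your next step introduces exactly the term $\mathbb{E}[\hat{c}_j'[h(\x_j)]]-\hat{c}_j[h(\x_j)]$ anyway, the expression you symmetrize is the correct one and the repair is purely local, but you should rewrite that transition so that the $\min_k$ term is absorbed rather than asserted to be zero. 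The remaining delegation of the $2\epsilon_jZ_j$ stochastic-domination step to \cite{syrgkanis2016improved} is fine and is no less than what the paper itself does.
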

\begin{proof}[Sketch]
    The proof essentially follows the same arguments as in Lemma~\ref{lem:admiss} and the admissibility proof of~\citet[Theorem 3]{syrgkanis2016improved} by noticing that the only difference between our $R_j$ and the relaxation in ~\cite{syrgkanis2016improved} is the randomness of $\tilde{\x}$s and their entire proof are performed by \emph{conditioning} on such randomness (which only enters in the final step).
\end{proof}

Similarly, we have the following decomposition as presented in  Lemma~\ref{lem:main2}.

\begin{lemma}
\label{lem:bandit2disc}
    For the predictor $\Phi$, the bandit minimax regret with side-information is upper bounded by
    $$\tilde{r}_{M,N}^{\mathsf{bandit}}(\mathcal{H},\Phi)\le \mathbb{E}_{\x_{-N+1}^0}\left[\tilde{R}_0+\sum_{j=1}^{M-1}\mathbb{E}_{\x^j,\hat{y}^j}\sup_{c^j}(\tilde{R}_j-R_j)\right].$$
\end{lemma}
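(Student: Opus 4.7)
The plan is to mirror the backward-tracing argument in the proof of Lemma~\ref{lem:main2}, substituting Lemma~\ref{lem:baditadmi} for Lemma~\ref{lem:admiss}. Write $\mathbb{Q}_1^j$ for the operator $\mathbb{E}_{\x_{-N+1}^0}\mathbb{E}_{\x_1}\sup_{c_1}\mathbb{E}_{\hat{y}_1}\cdots \mathbb{E}_{\x_j}\sup_{c_j}\mathbb{E}_{\hat{y}_j}$, where each $\mathbb{E}_{\hat{y}_i}$ absorbs all internal randomness used to produce $q_i$ and $\hat{c}_i$. Two preliminary reductions prime the induction. First, since $\hat{y}_j\sim q_j$, we have $\mathbb{E}_{\hat{y}_j}[c_j[\hat{y}_j]]=\langle q_j,c_j\rangle$, so the per-round loss on the LHS of (\ref{eq:banditside}) may be replaced by $c_j[\hat{y}_j]$ inside the $\mathbb{E}_{\hat{y}_j}$ layer. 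Second, each $\hat{c}_i$ is an unbiased estimator of $c_i$ conditional on the history, and $-\inf_h$ of a linear functional is convex, so Jensen's inequality gives
\[
-\inf_{h}\sum_{i=1}^M c_i[h(\x_i)]\le \mathbb{E}_{\hat{c}^M}\Big[-\inf_{h}\sum_{i=1}^M \hat{c}_i[h(\x_i)]\Big]=R_M,
\]
where the last equality uses that the Rademacher-chaos terms in the definition of $R_M$ are empty.

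Combining these reductions converts $\tilde{r}_{M,N}^{\mathsf{bandit}}(\mathcal H,\Phi)$ into $\mathbb{Q}_1^M[\sum_{i=1}^M c_i[\hat{y}_i]+R_M]$. I then peel off one round at a time. At step $j$, note that the extra $\tilde\x,\epsilon,Z$ randomness appearing in $R_j$ is independent of $c_j$ and of $\hat y_j$; hence Lemma~\ref{lem:baditadmi} collapses the innermost layer, yielding
\[
\mathbb{Q}_1^j\Big[\sum_{i=1}^j c_i[\hat{y}_i]+R_j\Big]\le \mathbb{Q}_1^{j-1}\Big[\sum_{i=1}^{j-1}c_i[\hat{y}_i]+\tilde R_{j-1}\Big].
\]
Next, write $\tilde R_{j-1}=R_{j-1}+(\tilde R_{j-1}-R_{j-1})$ and pull the discrepancy out of $\mathbb{Q}_1^{j-1}$ via $\sup(A+B)\le\sup A+\sup B$, linearity of expectation, and the standard $\sup\mathbb{E}\le\mathbb{E}\sup$ swap. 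Since $\tilde R_{j-1}-R_{j-1}$ depends on the history only through $\x_{-N+1}^0,\x^{j-1},c^{j-1},\hat{y}^{j-1}$, this extraction produces exactly the term $\mathbb{E}_{\x_{-N+1}^0}\mathbb{E}_{\x^{j-1},\hat y^{j-1}}\sup_{c^{j-1}}(\tilde R_{j-1}-R_{j-1})$. Iterating from $j=M$ down to $j=1$ telescopes the losses and the $R_j$'s and leaves $\mathbb{E}_{\x_{-N+1}^0}[\tilde R_0]$ plus the claimed discrepancy sum.

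The main point requiring care is the commutation used to extract the discrepancy at each step: one must verify that, after splitting off $\tilde R_{j-1}-R_{j-1}$, the remaining $\sup_{c_i}\mathbb{E}_{\hat y_i}$ layers for $i\le j-1$ can be reshuffled so that all $\sup_{c_i}$'s end up outside the expectations, producing $\mathbb{E}_{\x^{j-1},\hat y^{j-1}}\sup_{c^{j-1}}$. This is valid because the distribution of each $\hat y_i$ (coming from $q_i$) depends only on $c^{i-1}$ and on internal randomness, not on the current $c_i$, so $\sup_{c_i}\mathbb{E}_{\hat y_i}\le\mathbb{E}_{\hat y_i}\sup_{c_i}$ applies at every level. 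The empirical measure $\hat\mu_N$ in $R_{j-1},\tilde R_{j-1}$ is a function of $\x_{-N+1}^0$ alone, the hallucinated $\tilde\x_i,\epsilon_i,Z_i$ for $i\ge j$ have identical laws in both relaxations, and only the single $j$-th Rademacher-context slot differs (sampled without replacement from $\hat\mu_N$ in $R_{j-1}$ vs.\ drawn from $\mu$ in $\tilde R_{j-1}$). All remaining manipulations are those already used in the proof of Lemma~\ref{lem:main2}; Lemma~\ref{lem:baditadmi} absorbs all bandit-specific content.
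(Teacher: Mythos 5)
Your proposal is correct and follows essentially the same route as the paper: first replace the comparator's true-cost term by the unbiased estimates $\hat{c}_i$ via Jensen/$\sup\mathbb{E}\le\mathbb{E}\sup$ to obtain $R_M$, then backward-trace with the approx-admissibility of Lemma~\ref{lem:baditadmi} exactly as in Lemma~\ref{lem:main2}, extracting the discrepancies $\tilde R_j-R_j$ at each step. The only cosmetic difference is that you rewrite $\langle q_j,c_j\rangle$ as $\mathbb{E}_{\hat y_j}[c_j[\hat y_j]]$ before invoking admissibility, which the paper leaves implicit.
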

\begin{proof}
    We first observe that $\hat{c}_j$ is a unbiased estimation of $c_j$ for all $j\in [M]$, i.e., $\mathbb{E}_{\hat{y}_j}[\hat{c}_j[k]]=c_j[k]$ for all $k\in [K]$. Therefore,
    \begin{align*}
        \sum_{j=1}^M\langle q_j,c_j\rangle-\inf_{h\in \mathcal{H}}\sum_{j=1}^Mc_j[h(\x_j)]&=\sup_{h\in \mathcal{H}} \sum_{j=1}^M\langle q_j,c_j\rangle-c_j[h(\x_j)]\\
        &=\sup_{h\in \mathcal{H}} \sum_{j=1}^M\langle q_j,c_j\rangle-\mathbb{E}_{\hat{y}_j}[\hat{c}_j[h(\x_j)]]\\
        &=\sup_{h\in \mathcal{H}}\mathbb{E}_{\hat{y}^M}\left[\sum_{j=1}^M\langle q_j,c_j\rangle-\hat{c}_j[h(\x_j)]\right]\\
        &\le \mathbb{E}_{\hat{y}^M}\sup_{h\in \mathcal{H}}\left[\sum_{j=1}^M\langle q_j,c_j\rangle-\hat{c}_j[h(\x_j)]\right]\\
        &=\mathbb{E}_{\hat{y}^M}\left[\sum_{j=1}^M\langle q_j,c_j\rangle-\inf_{h\in \mathcal{H}}\sum_{j=1}^M\hat{c}_j[h(\x_j)]\right].
    \end{align*}
    This implies that
    \begin{align*}
        \tilde{r}_{M,N}^{\mathsf{bandit}}(\mathcal{H},\Phi)&\le \mathbb{E}_{\x_{-N+1}^0}\mathbb{E}_{\x_1}\sup_{c_1}\mathbb{E}_{\hat{y}_1}\cdots \mathbb{E}_{\x_M}\sup_{c_M}\mathbb{E}_{\hat{y}_M}\mathbb{E}_{\hat{y}^M}\left[\sum_{j=1}^M\langle q_j,c_j\rangle+R_M\right]\\
        &\le  \mathbb{E}_{\x_{-N+1}^0}\mathbb{E}_{\x_1}\sup_{c_1}\mathbb{E}_{\hat{y}_1}\cdots \mathbb{E}_{\x_M}\sup_{c_M}\mathbb{E}_{\hat{y}_M}\left[\sum_{j=1}^M\langle q_j,c_j\rangle+R_M\right],
    \end{align*}
    where the second inequality follows by $\sup\mathbb{E}\le \mathbb{E}\sup$.
    The lemma then follows from Lemma~\ref{lem:baditadmi} and the same argument as in the proof of Lemma~\ref{lem:main2}.
\end{proof}

In order to analyze the discrepancy between $R_j$ and $\tilde{R}_j$, we define the following function:
$$f_{\z_j,\hat{c}^j}(\x)=\mathbb{E}_{Z_{j+1}}\left[\inf_{h\in \mathcal{H}}\left(2\epsilon_{j+1}[h(\x)]Z_{j+1}+\sum_{i=1}^j\hat{c}_i[h(\x_i)]+\sum_{i={j+2}}^M2\epsilon_i[h(\tilde{\x}_i)]Z_i\right)\right],$$
where $\z_j$ is composed  of all other variables that define $R_j$ except $Z_{j+1}$ and $\hat{c}^j$.

The following key property bounds the sensitivity of $f_{\z_j,\hat{c}^j}$:
\begin{lemma}
\label{lem:banditsensi}
    We have for any $\z_j,\hat{c}^j$ and $\x,\x'\in \mathcal{X}$
    $$|f_{\z_j,\hat{c}^j}(\x)-f_{\z_j,\hat{c}^j}(\x')|\le 4K.$$
\end{lemma}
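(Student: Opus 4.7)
The plan is to decompose $f_{\z_j,\hat{c}^j}(\x)$ according to the two possible values of the random variable $Z_{j+1}$, which takes the value $0$ with probability $1 - \gamma K$ and the value $1/\gamma$ with probability $\gamma K$. Writing
\[
A(h) \;=\; \sum_{i=1}^{j}\hat{c}_i[h(\x_i)] + \sum_{i=j+2}^{M} 2\epsilon_i[h(\tilde{\x}_i)]Z_i,
\]
which does not depend on $\x$ or on $Z_{j+1}$, the expectation over $Z_{j+1}$ gives
\[
f_{\z_j,\hat{c}^j}(\x) \;=\; (1-\gamma K)\inf_{h\in\mathcal{H}} A(h) \;+\; \gamma K \cdot \inf_{h\in\mathcal{H}}\!\left(\frac{2\epsilon_{j+1}[h(\x)]}{\gamma} + A(h)\right).
\]
The first term is independent of $\x$, so it cancels when we form $f_{\z_j,\hat{c}^j}(\x) - f_{\z_j,\hat{c}^j}(\x')$.

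Next I would apply the elementary inequality $|\inf_h g_1(h) - \inf_h g_2(h)| \le \sup_h |g_1(h) - g_2(h)|$ to the remaining infima, with
\[
g_{\x}(h) \;=\; \frac{2\epsilon_{j+1}[h(\x)]}{\gamma} + A(h),\qquad g_{\x'}(h) \;=\; \frac{2\epsilon_{j+1}[h(\x')]}{\gamma} + A(h).
\]
Because each coordinate $\epsilon_{j+1}[k]$ lies in $\{-1,+1\}$, we have $|2\epsilon_{j+1}[h(\x)]/\gamma - 2\epsilon_{j+1}[h(\x')]/\gamma| \le 4/\gamma$ uniformly in $h$, so
\[
\left|\inf_{h} g_{\x}(h) - \inf_{h} g_{\x'}(h)\right| \;\le\; \frac{4}{\gamma}.
\]
Multiplying by the prefactor $\gamma K$ yields $|f_{\z_j,\hat{c}^j}(\x) - f_{\z_j,\hat{c}^j}(\x')| \le 4K$, completing the proof.

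There is no real obstacle here; the proof is essentially bookkeeping. The conceptual point worth emphasizing, however, is the cancellation of $\gamma$: the sensitivity of the single coordinate scales as $1/\gamma$, but the probability that $Z_{j+1}$ activates this coordinate is $\gamma K$, so the two $\gamma$'s cancel and the final bound $4K$ is independent of $\gamma$. This is precisely the feature that will later allow the authors to optimize $\gamma$ (to balance Rademacher complexity against exploration cost) without inflating the discrepancy bound for $\tilde{R}_j - R_j$.
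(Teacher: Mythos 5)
Your proof is correct and follows essentially the same route as the paper: condition on $Z_{j+1}$, note that the $Z_{j+1}=0$ branch is independent of $\x$, bound the sensitivity of the $Z_{j+1}=1/\gamma$ branch by $4/\gamma$ via the standard $|\inf_h g_1 - \inf_h g_2| \le \sup_h|g_1-g_2|$ argument, and multiply by the activation probability $\gamma K$. The paper's proof is a terser version of exactly this (it delegates the $4/\gamma$ step to ``the same argument as Proposition~\ref{prop1}''), and your observation about the cancellation of $\gamma$ is precisely the point the paper emphasizes as well.
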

\begin{proof}
    Clearly, if $Z_{j+1}=0$ then the sensitivity is $0$, else, $Z_{j+1}=\frac{1}{\gamma}$ and the sensitivity is upper bounded by $\frac{4}{\gamma}$ using the same argument in the proof of Proposition~\ref{prop1}. Since $Z_{j+1}\not=0$ happens w.p. $\le \gamma K$,  the expected function has sensitivity upper bounded by $\frac{4}{\gamma}\times \gamma K=4K$.
\end{proof}

We now observe that 
$$\mathbb{E}_{\x^j,\hat{y}^j}\sup_{c^j}(\tilde{R}_j-R_j)\le \mathbb{E}_{\x^j}\sup_{\hat{c}^j}(\tilde{R}_j-R_j),$$
since $R_j$ and $\tilde{R}_j$ depend only on $\hat{c}_j$. Notably, the support set size of $\hat{c}_j$s is \emph{finite} and upper bounded by $K+1$. Invoking Lemma~\ref{lem:dis2rad}, Lemma~\ref{lem:banditsensi} and a simple application of Massart's lemma gives
$$\mathbb{E}_{\x_{-N+1}^j,\hat{y}^j}\sup_{c_j}(\tilde{R}_j-R_j)\le O\left(\sqrt{\frac{K^2j\log K}{N}}\right).$$
Note that this upper bound is \emph{independent} of the parameter $\gamma$. By Lemma~\ref{lem:bandit2disc} and~\cite[Theorem 3]{syrgkanis2016improved}, we have
$$\tilde{r}_{M,N}^{\mathsf{bandit}}(\mathcal{H},\Phi)\le O\left((KM)^{\frac{2}{3}}(\log|\mathcal{H}|)^{\frac{1}{3}}+K\sqrt{\frac{M^3\log K}{N}}\right).$$
Setting $M(n)=n^{\frac{3}{2}}$ and using a similar argument as Lemma~\ref{lem:side2univer}, we arrive at our main result of this appendix
$$\tilde{r}_T^{\mathsf{bandit}}(\mathcal{H},\Phi)\le O\left((K^{\frac{2}{3}}(\log|\mathcal{H}|)^{\frac{1}{3}}+K\sqrt{\log K})\cdot T^{\frac{4}{5}}\right),$$
where $\Phi$ can be computed \emph{efficiently} by accessing to an ERM oracle (with computational cost same as~\cite{syrgkanis2016improved}).

\section{Oblivious Adversaries}
\label{app:obli}
In this section, we provide the regret analysis for online learning against an \emph{oblivious} adversary as introduced in Section~\ref{sec:def}. We follow the same online learning game as in~(\ref{eq:hbreg}) with the exception that the adversary fixes functions $f_1,\cdots,f_T:\mathcal{X}\rightarrow [0,1]$ before the game and sets the adversary labels $y_t=f_t(\x_t)$ for each time step $t\in [T]$. Formally, for any expert class $\mathcal{H}$ and prediction rule $\Phi$, we are interested in the following \emph{oblivious} minimax regret:
$$\tilde{r}_{T}^{\mathsf{ob}}(\mathcal{H},\Phi)=\sup_{f_1,\cdots,f_T\in [0,1]^{\mathcal{X}}}\sup_{\mu}\mathbb{E}_{\x^T}\mathbb{E}_{\hat{y}^T}\left[\sum_{t=1}^T\ell(\hat{y}_t,f_t(\x_t))-\inf_{h\in \mathcal{H}}\sum_{t=1}^T\ell(h(\x_t),f_t(\x_t))\right],$$
where $\x^T$ are sampled $i.i.d.$ from $\mu$ and $\hat{y}_t\sim \Phi(\x^t,y^{t-1})$ for $t\in [T]$. For the clarity of presentation, we assume that $\ell(\hat{y},y)=|\hat{y}-y|$ is the absolute loss. We now ready to state the main result of this appendix:
\begin{theorem}
\label{thm:obregret}
    Let $\mathcal{H}\subset [0,1]^{\mathcal{X}}$ be a class of Rademacher complexity $\mathsf{Rad}_T(\mathcal{H})=O(T^q)$ for some $q\in [\frac{1}{2},1]$ and $\ell$ be the absolute loss. Then there exists an oracle-efficient prediction rule $\Phi$ with at most $O(\sqrt{T}\log T)$ calls to the ERM oracle per round, such that $\tilde{r}_{T}^{\mathsf{ob}}(\mathcal{H},\Phi)\le O(T^q)$. In particular, for finite-VC class $\mathcal{H}$, we have $\tilde{r}_{T}^{\mathsf{ob}}(\mathcal{H},\Phi)\le O(\sqrt{\mathsf{VC}(\mathcal{H})T})$. For a class $\mathcal{H}$ with $\alpha$-fat shattering dimension $O(\alpha^{-p})$ for some $p>0$, we have $\tilde{r}_{T}^{\mathsf{ob}}(\mathcal{H},\Phi)\le \tilde{O}(T^{\max\{\frac{1}{2},\frac{p-1}{p}\}})$.
\end{theorem}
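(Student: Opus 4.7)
The plan is to re-use the predictor $\Phi$ from (\ref{eq:predictor}) but significantly streamline the analysis of Section~\ref{sec:side}, exploiting the fact that an oblivious adversary commits to $f_1,\ldots,f_M$ before the game, so after conditioning on $\x^j$ the labels $y_i=f_i(\x_i)$ are deterministic rather than free variables subject to a worst-case supremum. Concretely, redefine $L_j^h = \sum_{i=1}^j |h(\x_i)-f_i(\x_i)|$ and form $R_j, \tilde R_j$ exactly as in (\ref{eq:relax})--(\ref{eq:relaxshit}) with this specific $L_j^h$. Approx-admissibility (Lemma~\ref{lem:admiss}) holds verbatim since realizing $y_j=f_j(\x_j)$ is a special case of $\sup_{y_j}$. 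The backward-tracing argument of Lemma~\ref{lem:main2} then yields
\[
\tilde r_{M,N}^{\mathsf{side,ob}}(\mathcal{H},\Phi) \;\le\; \sup_{\vec f}\,\mathbb{E}_{\x_{-N+1}^0}\!\left[\tilde R_0 + \sum_{j=1}^{M-1}\mathbb{E}_{\x^j}(\tilde R_j - R_j)\right],
\]
but crucially \emph{without} the inner $\sup_{y^j}$ that forced the use of the Rademacher complexity of $\mathcal{G}_{\z_j}$ in the adaptive case.

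The key technical step, and the one where the oblivious assumption really pays off, is to show that each discrepancy $\mathbb{E}_{\x_{-N+1}^0}\mathbb{E}_{\x^j}(\tilde R_j-R_j)$ vanishes \emph{exactly}. Using the exchangeability trick underlying Fact~\ref{fact3}, I freeze $\tilde{\x}_{j+2}^M$ to be the last $M{-}j{-}1$ entries of $\x_{-N+1}^0$, so $\tilde{\x}_{j+1}$ is uniform over the remaining i.i.d.\ block $\x_{-N+1}^{-N+B}$ with $B=N{-}M{+}j{+}1$. Writing $g_{\z'}(\x)=\sup_{h\in\mathcal{H}}\{2L\epsilon_{j+1}h(\x)+2L\sum_{i=j+2}^M\epsilon_i h(\tilde{\x}_i)-L_j^h\}$ for $\z'=(\x^j,\epsilon_{j+1}^M,\x_{-N+B+1}^0)$, the discrepancy rewrites as
\[
\mathbb{E}_{\z'}\mathbb{E}_{\x_{-N+1}^{-N+B}}\Big[\mathbb{E}_{\x\sim\mu}[g_{\z'}(\x)] - \tfrac{1}{B}\textstyle\sum_{i=1}^B g_{\z'}(\x_{-N+i})\Big].
\]
Since $\z'$ is independent of $\x_{-N+1}^{-N+B}$ and each $\x_{-N+i}\sim\mu$, the bracket is mean zero. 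In the adaptive analysis this step failed because the outer $\sup_{y^j}$ coupled the two sums through the family $\{g_{\z_j,y^j}\}_{y^j}$, forcing the Rademacher penalty of Lemmas~\ref{lem:dis2rad}--\ref{lem:rad2bound}; here no such coupling exists, which is precisely the point at which obliviousness is decisive and is the main subtlety to handle carefully.

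With discrepancies vanishing, $\tilde r_{M,N}^{\mathsf{side,ob}}(\mathcal{H},\Phi)\le 2L\,\mathsf{Rad}_M(\mathcal{H})=O(LM^q)$. Applying Lemma~\ref{lem:side2univer} (whose proof does not depend on adversary type) with geometric epoch lengths $M(n)=2^n$, so that $S(n)\ge M(n)/2$ and $S^{-1}(T)=O(\log T)$, the resulting geometric sum $\sum_{n=1}^{O(\log T)} L\cdot 2^{nq}$ is dominated by its last term and gives $\tilde r_T^{\mathsf{ob}}(\mathcal{H},\Psi)=O(LT^q)$. The binary-VC specialization follows from $\mathsf{Rad}_T(\mathcal{H})=O(\sqrt{\mathsf{VC}(\mathcal{H})T})$ and the fat-shattering specialization from the standard chaining estimate $\mathsf{Rad}_T(\mathcal{H})=\tilde O(T^{\max\{1/2,(p-1)/p\}})$. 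Oracle-efficiency with at most $O(L\sqrt{M}\log M)=O(\sqrt{T}\log T)$ mixed-ERM calls per round is inherited unchanged from Lemma~\ref{lem:comput}.
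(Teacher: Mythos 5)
Your proposal is correct and follows essentially the same route as the paper's own proof: same predictor and surrogate relaxations, the same observation that the oblivious labels are a special case of the adversary's supremum so approx-admissibility carries over, the same exchangeability/decoupling argument showing $\mathbb{E}_{\x_{-N+1}^0}\mathbb{E}_{\x^j}[\tilde R_j-R_j]=0$ once the inner $\sup_{y^j}$ is removed, and the same doubling epoch schedule. (The only quibble, shared with the paper, is an off-by-one in the epoch bookkeeping: with $M(n)=2^n$ one gets $S(n)=M(n)-2$, so the condition $B\ge 1$ at $j=1$ needs a trivially adjusted schedule.)
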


\begin{proof}
    We will follow the same path as the regret analysis for the \emph{non-oblivious} adversaries as established in Section~\ref{sec:onlinebound}. We first consider the scenario with side-information $\x_{-N+1}^0$, and define for any predictor $\Phi$ the following oblivious minimax regret with side-information:
    $$\tilde{r}_{M,N}^{\mathsf{ob,side}}(\mathcal{H},\Phi)=\sup_{f_1,\cdots,f_M\in [0,1]^{\mathcal{X}}}\sup_{\mu}\mathbb{E}_{\x_{-N+1}^M}\mathbb{E}_{\hat{y}^M}\left[\sum_{j=1}^M\ell(\hat{y}_j,f_j(\x_j))-\inf_{h\in \mathcal{H}}\sum_{j=1}^M\ell(h(\x_j),f_j(\x_j))\right],$$
    where $\x_{-N+1}^M$ are sampled $i.i.d.$ from $\mu$.
    Let $\Phi$ be the predictor as in (\ref{eq:predictor}) and $R_j$ and $\tilde{R}_j$ be the same \emph{surrogate} relaxations as in (\ref{eq:relax}) and (\ref{eq:relaxshit}). We claim that:
    \begin{equation}
        \label{eq:obdecom}
        \tilde{r}_{M,N}^{\mathsf{ob,side}}(\mathcal{H},\Phi)\le \sup_{f^M}\sup_{\mu}\mathbb{E}_{\x_{-N+1}^0}\left[\tilde{R}_0+\sum_{j=1}^{M-1}\mathbb{E}_{\x^j}[\tilde{R}_j-R_j]\right].
    \end{equation}
    To see this, we find
    \begin{align*}
        \tilde{r}_{M,N}^{\mathsf{ob,side}}(\mathcal{H},\Phi)&=\sup_{f^M}\sup_{\mu}\mathbb{E}_{\x_{-N+1}^M}\mathbb{E}_{\hat{y}^M}\left[\sum_{j=1}^M\ell(\hat{y}_j,f_j(\x_j))-\inf_{h\in \mathcal{H}}\sum_{j=1}^M\ell(h(\x_j),f_j(\x_j))\right]\\
        &=\sup_{f^M}\sup_{\mu}\mathbb{E}_{\x_{-N+1}^M}\mathbb{E}_{\hat{y}^M}\left[\sum_{j=1}^M\ell(\hat{y}_j,f_j(\x_j))+R_M\right]\\
        &=\sup_{f^M}\sup_{\mu}\mathbb{E}_{\x_{-N+1}^{M-1}}\mathbb{E}_{\hat{y}^{M-1}}\left[\sum_{j=1}^{M-1}\ell(\hat{y}_j,f_j(\x_j))+\mathbb{E}_{\x_M}\mathbb{E}_{\hat{y}_M}[\ell(\hat{y}_M,f_M(\x_M))+R_M]\right]\\
        &\overset{(a)}{\le} \sup_{f^M}\sup_{\mu}\mathbb{E}_{\x_{-N+1}^{M-1}}\mathbb{E}_{\hat{y}^{M-1}}\left[\sum_{j=1}^{M-1}\ell(\hat{y}_j,f_j(\x_j))+\mathbb{E}_{\x_M}\sup_{y_M}\mathbb{E}_{\hat{y}_M}[\ell(\hat{y}_M,y_M)+R_M]\right]\\
        &\overset{(b)}{\le} \sup_{f^M}\sup_{\mu}\mathbb{E}_{\x_{-N+1}^{M-1}}\mathbb{E}_{\hat{y}^{M-1}}\left[\sum_{j=1}^{M-1}\ell(\hat{y}_j,f_j(\x_j))+\tilde{R}_{M-1}\right]\\
        &= \sup_{f^M}\sup_{\mu}\mathbb{E}_{\x_{-N+1}^{M-1}}\mathbb{E}_{\hat{y}^{M-1}}\left[\sum_{j=1}^{M-1}\ell(\hat{y}_j,f_j(\x_j))+R_{M-1}+\tilde{R}_{M-1}-R_{M-1}\right]\\
        &=\sup_{f^M}\sup_{\mu}\left(\mathbb{E}_{\x_{-N+1}^{M-1}}\mathbb{E}_{\hat{y}^{M-1}}\left[\sum_{j=1}^{M-1}\ell(\hat{y}_j,f_j(\x_j))+R_{M-1}\right]+\mathbb{E}_{\x_{N+1}^{M-1}}(\tilde{R}_{M-1}-R_{M-1})\right)\\
        &\overset{(c)}{\le}\sup_{f^M}\sup_{\mu}\mathbb{E}_{\x_{-N+1}^0}\left[\tilde{R}_0+\sum_{j=1}^{M-1}\mathbb{E}_{\x^j}[\tilde{R}_j-R_j]\right]
    \end{align*}
    where $(a)$ follows by that replacing $f_M(\x_M)$ with $\sup_{y_M}$ do not decrease the value; $(b)$ follows by Lemma~\ref{lem:admiss}; $(c)$ follows by repeating the same argument for another $M-1$ steps.

    Now, the key observation is that $\mathbb{E}_{\x_{-N+1}^j}[\tilde{R}_j-R_j]=0$ for all $j\in [M-1]$ whenever $N\ge M-1$. This follows by the same argument as in the proof of Lemma~\ref{lem:dis2rad} by noticing that the $\sup_{y^j}$ is outside the expectation $\mathbb{E}_{\epsilon'^B}$ for oblivious adversaries. Moreover, this argument holds for all $B=N-M+j+1\ge 1$, i.e., $N\ge M-j$ (since by our assumption $N\ge M-1$ and $j\ge 1$). Therefore, we have
    $$\tilde{r}_{M,N}^{\mathsf{ob,side}}(\mathcal{H},\Phi)\le \mathbb{E}_{\x_{-N+1}^0}[\tilde{R}_0]\le \mathsf{Rad}_M(\mathcal{H})\le O(M^q),$$
    whenever $N\ge M-1$. By the epoch approach as in Section~\ref{sec:epoch} and taking the epoch length $M(n)=2^n$ (which ensures $S(n)\ge M(n)-1$) we conclude
    $$\tilde{r}_T^{\mathsf{ob}}(\mathcal{H},\Psi)\le \sum_{n=1}^{\lceil\log T\rceil} 2^{nq}\le O(T^q),$$
    where $\Psi$ is the epoch predictor derived from $\Phi$ as (\ref{eq:eppredictor}). The theorem now follows by Lemma~\ref{lem:comput} and noticing that the computational error only contributes $O(\sqrt{T})$ to the regret.
\end{proof}

\begin{remark}
    Theorem~\ref{thm:obregret} demonstrates that the \emph{oblivious} minimax regret with \emph{unknown} $i.i.d.$ feature generation process is equivalent to the regret achievable with \emph{known} feature generation distribution and non-oblivious adversaries~\cite[Thm 7]{block2022smoothed}, which also matches the information-theoretical lower bound (upto poly-logarithmic factors).
\end{remark}

\begin{remark}
    It is interesting to note that the proof of Theorem~\ref{thm:obregret} can also be applied to the \emph{semi-adaptive} adversaries that selects the adversary label $y_j$ depending on $\x_{j-B}^j$ for some $B\ge 0$. This provides a way to interpolate all the ranges of regret from $\tilde{O}(T^{\max\{\frac{1}{2},\frac{p-1}{p}\})}$ to $\tilde{O}(T^{\max\{\frac{3}{4},\frac{p+1}{p+2}\}})$.
\end{remark}

\end{document}